\def\UrlSpecials{\do\~{\kern -.15em\lower .7ex\hbox{~}\kern .04em}} \catcode`~=13 
\newcommand{\calA}{\mathcal{A}}
\newcommand{\calE}{\mathcal{E}}
\newcommand{\calK}{\mathcal{K}}
\newcommand{\calN}{\mathcal{N}}
\newcommand{\calP}{\mathcal{P}}
\newcommand{\calS}{\mathcal{S}}
\newcommand{\ba}{\mathbf{a}}
\newcommand{\bA}{\mathbf{A}}
\newcommand{\bb}{\mathbf{b}}
\newcommand{\bB}{\mathbf{B}}
\newcommand{\bC}{\mathbf{C}}
\newcommand{\bD}{\mathbf{D}}
\newcommand{\bF}{\mathbf{F}}
\newcommand{\bg}{\mathbf{g}}
\newcommand{\bH}{\mathbf{H}}
\newcommand{\bI}{\mathbf{I}}
\newcommand{\bp}{\mathbf{p}}
\newcommand{\bq}{\mathbf{q}}
\newcommand{\br}{\mathbf{r}}
\newcommand{\bR}{\mathbf{R}}
\newcommand{\bs}{\mathbf{s}}
\newcommand{\bt}{\mathbf{t}}
\newcommand{\bu}{\mathbf{u}}
\newcommand{\bv}{\mathbf{v}}
\newcommand{\bw}{\mathbf{w}}
\newcommand{\bW}{\mathbf{W}}
\newcommand{\bx}{\mathbf{x}}
\newcommand{\by}{\mathbf{y}}
\newcommand{\bz}{\mathbf{z}}
\newcommand{\rmd}{\mathrm{d}}
\newcommand{\rmF}{\mathrm{F}}
\newcommand{\bbC}{\mathbb{C}}
\newcommand{\bbE}{\mathbb{E}}
\newcommand{\bbP}{\mathbb{P}}
\newcommand{\bbR}{\mathbb{R}}
\DeclareMathAlphabet{\mathbsf}{OT1}{cmss}{bx}{n}
\DeclareMathAlphabet{\mathssf}{OT1}{cmss}{m}{sl}
\DeclareSymbolFont{bsfletters}{OT1}{cmss}{bx}{n}  
\DeclareSymbolFont{ssfletters}{OT1}{cmss}{m}{n}
\DeclareMathSymbol{\bsfGamma}{0}{bsfletters}{'000}
\DeclareMathSymbol{\ssfGamma}{0}{ssfletters}{'000}
\DeclareMathSymbol{\bsfDelta}{0}{bsfletters}{'001}
\DeclareMathSymbol{\ssfDelta}{0}{ssfletters}{'001}
\DeclareMathSymbol{\bsfTheta}{0}{bsfletters}{'002}
\DeclareMathSymbol{\ssfTheta}{0}{ssfletters}{'002}
\DeclareMathSymbol{\bsfLambda}{0}{bsfletters}{'003}
\DeclareMathSymbol{\ssfLambda}{0}{ssfletters}{'003}
\DeclareMathSymbol{\bsfXi}{0}{bsfletters}{'004}
\DeclareMathSymbol{\ssfXi}{0}{ssfletters}{'004}
\DeclareMathSymbol{\bsfPi}{0}{bsfletters}{'005}
\DeclareMathSymbol{\ssfPi}{0}{ssfletters}{'005}
\DeclareMathSymbol{\bsfSigma}{0}{bsfletters}{'006}
\DeclareMathSymbol{\ssfSigma}{0}{ssfletters}{'006}
\DeclareMathSymbol{\bsfUpsilon}{0}{bsfletters}{'007}
\DeclareMathSymbol{\ssfUpsilon}{0}{ssfletters}{'007}
\DeclareMathSymbol{\bsfPhi}{0}{bsfletters}{'010}
\DeclareMathSymbol{\ssfPhi}{0}{ssfletters}{'010}
\DeclareMathSymbol{\bsfPsi}{0}{bsfletters}{'011}
\DeclareMathSymbol{\ssfPsi}{0}{ssfletters}{'011}
\DeclareMathSymbol{\bsfOmega}{0}{bsfletters}{'012}
\DeclareMathSymbol{\ssfOmega}{0}{ssfletters}{'012}
\newcommand{\balpha}{\bm{\alpha}}
\newcommand{\bepsilon}{\bm{\epsilon}}
\newcommand{\bxi}{\bm{\xi}}
\newcommand{\bSigma	}{\bm{\Sigma}}
\newcommand{\bPhi}{\bm{\Phi}}
\DeclareMathOperator{\sech}{sech}
\theoremstyle{plain}
\newtheorem{theorem}{Theorem} 
\newtheorem{lemma}{Lemma}
\newtheorem{proposition}{Proposition}
\newtheorem{corollary}{Corollary}
\newtheorem{definition}{Definition} 
\newtheorem{remark}{Remark}
\newcommand{\qednew}{\nobreak \ifvmode \relax \else
      \ifdim\lastskip<1.5em \hskip-\lastskip
      \hskip1.5em plus0em minus0.5em \fi \nobreak
      \vrule height0.75em width0.5em depth0.25em\fi}
\newcommand\sign[1]{\mathrm{sign}{#1}}
\begin{document}
    
\title{Robust 1-bit Compressive Sensing with Partial Gaussian Circulant Matrices and Generative Priors}

\author{Zhaoqiang Liu, Subhroshekhar Ghosh, Jun Han, Jonathan Scarlett

\thanks{
Z.~Liu is with the Department of Computer Science, National University of Singapore (email: \url{zqliu12@gmail.com}, \url{dcslizha@nus.edu.sg}).

S.~Ghosh is with the Department of Mathematics, National University of Singapore (email: \url{subhrowork@gmail.com}).

J.~Han is with the Platform and Content Group, Tencent (email: \url{jim.han665@gmail.com}).

J.~Scarlett is with the Department of Computer Science, the Department of Mathematics and the Institute of Data Science, National University of Singapore (email: \url{scarlett@comp.nus.edu.sg}).

This work was supported by the Singapore National Research Foundation (NRF) under grant R-252-000-A74-281, and the Singapore Ministry of Education (MoE) under grants R-146-000-250-133 and R-146-000-312-114.}}

\maketitle

\begin{abstract}
    In 1-bit compressive sensing, each measurement is quantized to a single bit, namely the sign of a linear function of an unknown vector, and the goal is to accurately recover the vector. While it is most popular to assume a standard Gaussian sensing matrix for 1-bit compressive sensing, using structured sensing matrices such as partial Gaussian circulant matrices is of significant practical importance due to their faster matrix operations. In this paper, we provide recovery guarantees for a correlation-based optimization algorithm for robust 1-bit compressive sensing with randomly signed partial Gaussian circulant matrices and generative models. Under suitable assumptions, we match guarantees that were previously only known to hold for i.i.d.~Gaussian matrices that require significantly more computation. We make use of a practical iterative algorithm, and perform numerical experiments on image datasets to corroborate our theoretical results.
\end{abstract}

 \section{Introduction}
\label{sec:intro}

The goal of compressive sensing (CS)~\cite{Fou13,wainwright2019high} is to recover a signal vector from a small number of linear measurements, by making use of prior knowledge on the structure of the vector in the relevant domain. The structure is typically represented by sparsity in a suitably-chosen basis, or generative priors motivated by recent success of deep generative models~\cite{bora2017compressed}. The CS problem has been popular over the past 1--2 decades, and there is a vast amount of literature with theoretical guarantees, including sharp performance bounds for both practical algorithms~\cite{Wai09a,Don13,Ame14,Wen2016} and potentially intractable information-theoretically optimal algorithms~\cite{Wai09,Ari13,Can13,Sca15,scarlett2019introductory,liu2020information,kamath2020power,liu2021towards}.

For the sake of low-cost implementation in hardware and robustness to certain nonlinear distortions~\cite{boufounos2010reconstruction}, instead of assuming infinite-precision real-valued measurements as in CS, it has also been popular to consider 1-bit CS~\cite{boufounos20081,gupta2010sample,zhu2015towards,zhang2014efficient,gopi2013one}, in which the unknown vector is accurately recovered from binary-valued measurements. 

Most of the relevant work for 1-bit CS has focused on standard Gaussian sensing matrices, and unlike linear CS, 1-bit CS may easily fail when using a sensing matrix that is not standard Gaussian~\cite{ai2014one}.  However, since such matrices are unstructured, the associated matrix operations (e.g., multiplication) used in the decoding procedure can be prohibitively expensive when the problem size is large.  This motivates the use of {\em structured} matrices permitting fast matrix operations, and a notable example is 
a subsampled Gaussian circulant sensing matrix~\cite{romberg2009compressive}, which can be implemented via the fast Fourier transform (FFT)~\cite{yu2017binary,gray2005toeplitz}. Motivated by recent progress on 1-bit CS with partial Gaussian circulant matrices~\cite{dirksen2017one,dirksen2018robust} and successful applications of deep generative models, we provide recovery guarantees for a correlation-based optimization algorithm for robust 1-bit CS with partial Gaussian circulant matrices and generative priors. 


\subsection{Related Work}

It is established in~\cite{foucart2016flavors} that an $\ell_1/\ell_2$-restricted isometry property (RIP) of the sensing matrix guarantees accurate uniform recovery for sparse vectors via a hard thresholding algorithm and a linear programming algorithm. Based on this result, the authors of~\cite{dirksen2017one} provide recovery guarantees for noiseless 1-bit CS with partial Gaussian circulant matrices by proving the required RIP. In particular, they show that 
$O\left(\frac{s}{\epsilon^4} \log \left(\frac{n}{\epsilon s}\right)\right)$ noiseless measurements suffice for the reconstruction of the direction of any $s$-sparse vector in $n$-dimensional space up to accuracy $\epsilon$ if the sparsity level $s$ satisfies $s = O\big(\sqrt{\frac{\epsilon n}{\log n}}\big)$. The authors also consider a dithering technique~\cite{knudson2016one,xu2020quantized,jacques2017time,dirksen2018non} to enable the estimation of the norm of the signal vector. 

The authors of~\cite{dirksen2018robust} establish an optimal (up to logarithmic factors) sample complexity for robust 1-bit sparse recovery using a correlation-based algorithm with $\ell_2$-norm regularization.  The setup and results are related to ours, but bear several important differences, including (i) the use of regularization; (ii) the focus on sub-Gaussian noise added before quantization; (iii) the use of dithering-type thresholds in the measurements; (iv) the consideration of sparse signals instead of generative priors; and (v) the use of a random (rather than fixed) index set in the partial circulant matrix.

Some theoretical guarantees for circulant binary embeddings, which are closely related to 1-bit CS with partial Gaussian circulant matrices, have been presented in~\cite{yi2015binary,yu2017binary,oymak2017near,dirksen2018fast}. On the other hand, based on successful applications of deep generative models, instead of assuming the signal vector is sparse, it has been recently popular to assume that the signal vector lies in the range of a generative model in the problem of CS~\cite{bora2017compressed,hand2018phase,van2018compressed,heckel2019deep,hand2019global,ongie2020deep,jalal2020robust,whang2020compressed,cocola2020nonasymptotic}. In particular, 1-bit CS with generative priors have also been studied in~\cite{liu2020sample,qiu2020robust,liu2020generalized}, but none of these papers considers structured sensing matrices that permit fast matrix operations. 

\subsection{Contribution}

We consider the problem of recovering a signal vector in the range of a generative model via noisy 1-bit measurements using a correlation-based optimization objective and a randomly signed partial Gaussian circulant sensing matrix. We characterize the number of measurements sufficient to attain an accurate recovery of an underlying $\ell_\infty$-norm bounded signal\footnote{While the assumption about $\ell_\infty$ norm bound appears to be restrictive, we may remove this assumption by multiplying a randomly signed Hadamard matrix for pre-processing. See Remarks~\ref{remark:v1} and~\ref{remark:v2} for detalied explanations.} in the range of a Lipschitz continuous generative model. In particular, for an $L$-Lipschitz continuous generative model with bounded $k$-dimensional input and the range containing in the unit sphere, we show that roughly $O\big(\frac{k \log L}{\epsilon^2}\big)$ samples suffice for $\epsilon$-accurate recovery. This matches the scaling obtained previously for i.i.d.~Gaussian matrices \cite{liu2020sample}. We also consider the impact of adversarial corruptions on the measurements, and show that our scheme is robust to adversarial noise. We make use of a practical iterative algorithm proposed in~\cite{liu2020sample}, and perform some simple numerical expriments on real datasets to support our theoretical results.

\subsection{Notation}

We use upper and lower case boldface letters to denote matrices and vectors respectively. We write $[N]=\{0,1,2,\cdots,N-1\}$ for a positive integer $N$. For any vector $\bt = [t_0,t_1,\ldots,t_{n-1}]^T \in \bbC^n$, we let $\bC_\bt \in \bbC^{n \times n}$ be a circulant matrix corresponding to $\bt$. That is, $\bC_\bt(i,j) = t_{j-i}$, for all $i,j \in [n]$, where here and subsequently, $j-i$ is understood modulo $n$. Let $\bF \in \bbC^{n\times n}$ be the (normalized) Fourier matrix, i.e., $\bF(i,j) = \frac{1}{\sqrt{n}} e^{-\frac{2\pi ij\sqrt{-1}}{n}}$, for all $i,j \in [n]$.  We define the $\ell_2$-ball $B_2^n(r):=\{\bx \in \bbR^n: \|\bx\|_2 \le r\}$, 
and we use $\calS^{n-1} := \{\bx \in \bbR^n: \|\bx\|_2=1\}$ to denote the unit sphere in $\bbR^n$. 
For a vector $\bx =[x_0,x_1,\ldots,x_{n-1}]^T \in \bbR^n$ and an index $i$, we denote by $s_{\rightarrow i}(\bx)$ the vector shifted to right by $i$ positions, i.e., the $j$-th entry of $s_{\rightarrow i}(\bx)$ is the $((j - i) \text{ mod } n)$-th entry of $\bx$. Similarly, we denote by $s_{i\leftarrow }(\bx)$ the vector shifted to left by $i$ positions. A Rademacher variable takes the values $+1$ or $-1$, each with probability $\frac{1}{2}$, and a sequence of independent Rademacher variables is called a Rademacher sequence. The operator norm of a matrix from $\ell_p$ into $\ell_p$ is defined as $\|\bA\|_{p \to p} := \max_{\|\bx\|_p =1} \|\bA\bx\|_p$. We use $g$ to represent a standard normal random variable, i.e., $g \sim \calN(0,1)$. The symbols $C, C', C'', c$ are absolute constants whose values may differ from line to line. Some additional notations that are used specifically for the proof of Lemma~\ref{lem:essential_circ} are presented in a table in Appendix~\ref{sec:table_proof_lemma1}.

\section{Problem Setup} \label{sec:setup}

Except where stated otherwise, we consider the following setup and assumptions:
\begin{itemize}
 \item Let $\bx^* =[x_0^*,x_1^*,\ldots,x_{n-1}^*]^T \in \bbR^n$ be the underlying signal vector, assumed to have unit norm (since recovery of the norm is impossible under our measurement model). We write $\tilde{\bx}^*$ as $ \tilde{\bx}^* = \bD_{\bxi} \bx^*$, where $\bD_{\bxi}$ is a diagonal matrix corresponding to a Rademacher sequence $\bm{\xi} = [\xi_0,\xi_1,\ldots,\xi_{n-1}]^T \in \bbR^n$.
 
 \item Fixing any (deterministic) index set $\Omega \subseteq [n]$ with $|\Omega| = m$, we set the sensing matrix $\bA \in \bbR^{m \times n}$ as 
\begin{equation}\label{eq:partial_circ_sign}
 \bA = \bR_{\Omega} \bC_{\bg} \bD_{\bxi},
\end{equation}
where $\bR_{\Omega} \in \bbR^{m \times n}$ is the matrix that restricts an $n$-dimensional vector to coordinates indexed by $\Omega$ and $\bg =[g_0,g_1,\ldots,g_{n-1}]^T \sim \calN(0,\bI_n)$. 
That is, $\bA$ is a partial Gaussian circulant matrix with random column sign flips. For concreteness, we fix the index set $\Omega$ as $\Omega = [m]$, but the same analysis holds for any other fixed set. We also write $\tilde{\bA}$ as $\tilde{\bA} = \bR_{\Omega} \bC_{\bg} $, i.e., a partial Gaussian circulant matrix without random column sign flips.
 
 \item We assume that the measurements, or response variables, $b_i \in \{-1,1\}, i \in [m]$, are generated according to
\begin{equation}
 b_i = \theta\left(\left\langle \ba_i, \bx^*\right\rangle\right),
\end{equation}
where $\ba_i^T$ is the $i$-th row of the sensing matrix $\bA \in \bbR^{m \times n}$, and $\theta$ is a (possibly random) function with $\theta (z) \in \{-1,+1\}$. We focus on the following widely-adopted noisy 1-bit observation models:
\begin{itemize}
 \item {\em Random bit flips:} $\theta(x) = \tau \cdot \sign(x)$, where $\tau$ is a $\pm 1$-valued random variable with $\bbP(\tau = -1) = p < \frac{1}{2}$, independent of $\bA$.
 
 \item {\em Random noise before quantization}: $\theta(x) = \sign(x + e)$, where $e$ represents a random noise term added before quantization, independent of $\bA$. For example, when $e\sim \calN(0,\sigma^2)$, this corresponds to a special case of the probit model; when $e$ is a logit noise term, this recovers the logistic regression model. 
 
\end{itemize}


\item We assume that
\begin{equation}
 \bbE [\theta(g) g] = \lambda
\end{equation}
for some $\lambda > 0$, where $g$ is a standard normal random variable. Note that the expectation is taken with respect to both $\theta$ and $g$. Then, it is easy to calculate that $\bbE [b_i \ba_i ] = \lambda \bx^*$~\cite{zhang2014efficient}. For the above choices of $\theta$, we have~\cite[Sec.~3]{plan2012robust}:
\begin{itemize}
 \item {\em Random bit flips:} $\lambda = (1-2p)\sqrt{\frac{2}{\pi}}$.
 \item {\em Random noise before quantization}: For any $x \in \bbR$, we set 
\begin{equation}
 \phi(x) = \bbE[\theta(x)] = 1-2 \bbP(e < -x).
\end{equation}
Then, we have $\phi'(x) = 2f_e(-x)$, where $f_e$ is the probability density function (pdf) of $e$. Thus, we have  
\begin{align}
 \lambda = \bbE[\theta(g)g] = \bbE[\bbE[\theta(g)g|g]] = \bbE[\phi(g)g] = \bbE[\phi'(g)] = 2\bbE[f_e(-g)] >0.
\end{align}
In particular, when $e\sim \calN(0,\sigma^2)$, we have $f_e(x) = \frac{1}{\sqrt{2\pi}\sigma} e^{-\frac{x^2}{2\sigma^2}}$ and $\lambda = \sqrt{\frac{2}{\pi (1+\sigma^2)}}$. When $e$ is a logit noise term, we have $\phi(x):=\tanh \big(\frac{x}{2}\big)$, and $\lambda = \bbE[\phi'(g)] = \frac{1}{2}\bbE\big[\sech^2\big(\frac{g}{2}\big)\big]$.
\end{itemize}

\item For the case of random noise added before quantization, we additionally make the following assumptions:
\begin{itemize}
 \item Let $F_e$ be the cumulative distribution function (cdf) of $e$. We assume that $F_e(x) + F_e(-x) = 1$ for any $x\in\bbR$.\footnote{Or equivalently, $f_e(x) = f_e(-x)$ for any $x \in \bbR$.}
 \item For any $a>0$, let $\theta_a(x) = \sign(x+\frac{e}{a})$. We assume that $\lambda_a := \bbE[\theta_a(g) g] < C_1 a + C_2$, where $g \sim\calN(0,1)$ and $C_1,C_2$ are absolute constants.  
\end{itemize}
The assumption $F_e(x) + F_e(-x) = 1$ ensures that $\sign(g+\frac{e}{a})$ is a Rademacher random variable. We show in Lemma~\ref{lem:add_assumps_e} that the above assumptions are satisfied when $e$ corresponds to the Gaussian or logit noise term. 

\item In addition to any random noise in $\theta$, we allow for adversarial noise.  In this case, instead of observing $\bb=[b_0,\ldots,b_{m-1}]^T\in\bbR^m$ directly, we only assume access to $\tilde{\bb} = [\tilde{b}_0,\ldots,\tilde{b}_{m-1}]^T \in \bbR^m$ satisfying
 \begin{equation}
    \frac{1}{\sqrt{m}} \|\tilde{\bb}-\bb\|_2 \le \varsigma
 \end{equation}
 for some parameter $\varsigma \ge 0$.  Note that the corruptions of $\bb$ yielding $\tilde{\bb}$ may depend on $\bA$.

\item The signal vector $\bx^*$ lies in a set of structured signals $\calK$. We focus on the case that $\calK = \mathrm{Range}(G)$ for some $L$-Lipschitz continuous generative model $G \,:\, B_2^k(r) \rightarrow \bbR^n$ (e.g., see \cite{bora2017compressed}). We will also assume that $\mathrm{Range}(G) \subseteq \calS^{n-1}$, but this can easily be generalized 
to the case that $\mathrm{Range}(G) \subseteq \bbR^n$ and the recovery guarantee is only given up to scaling \cite{liu2020sample,liu2020generalized}.

\item To derive an estimate of $\bx^*$, we seek $\hat{\bx}$ maximizing a constrained correlation-based objective function:
\begin{equation}\label{eq:opt_corr}
 \hat{\bx} := \arg\max_{\bx \in \calK} \quad \tilde{\bb}^T(\bA \bx).
\end{equation}
Such an optimization problem has been considered in~\cite{plan2012robust}, with the main purpose of deriving a convex optimization problem for the case that $\calK$ is a convex set. We focus on the case that $\calK = \mathrm{Range}(G)$, which in general leads to a non-convex optimization problem. This can be approximately solved using gradient-based algorithms~\cite{liu2020sample} or by projecting $\bA^T\tilde{\bb}$ onto $\calK$~\cite{jacques2013quantized,plan2017high}. In this paper, we provide theoretical guarantees for the recovery in the ideal case that the global maximizer is found. We also provide experimental results in Section~\ref{sec:exp}.
\end{itemize}

\section{Main Results}
\label{sec:main_res}

Recall that the $i$-th observation is $b_i = \theta(\langle \ba_i,\bx^*\rangle)$, $\tilde{\bx}^* = \bD_{\bxi}\bx^*$, and $\tilde{\bA}= \bR_{\Omega} \bC_{\bg}$, i.e., a partial Gaussian circulant matrix without random column sign flips. We have the following important lemma. 
\begin{lemma}\label{lem:essential_circ}
For any $t> 0$ satisfying $m =\Omega\left(t + \log n\right)$, suppose that the unit vector $\bx^*$ satisfies $\|\bx^*\|_\infty \le \rho$ with $\rho = O\big(\frac{1}{m (t+\log m)}\big)$. Suppose that $\bb = \theta(\bA \bx^*)$ with $\theta(x) = \tau \cdot \sign(x)$ or $\theta(x) = \sign(x + e)$ ({\em cf.} Sec.~\ref{sec:setup}), and $\bA$ is a partial Gaussian circulant matrix with random column sign flips. 
Then, with probability at least $1-e^{-\Omega(t)} - e^{-\Omega(\sqrt{m})}$,
 \begin{equation}
  \left\|\frac{1}{m}\tilde{\bA}^T \bb  -\lambda \tilde{\bx}^*\right\|_\infty \le O\left(\sqrt{\frac{t+\log n }{m}}\right).
 \end{equation}
\end{lemma}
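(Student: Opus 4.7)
The plan is to first reduce the $\ell_\infty$ concentration to a coordinate-wise bound, then handle the circulant dependencies via a decoupling argument that crucially uses the small $\ell_\infty$ norm of $\bx^*$. Since $\tilde{\bA}^T = \bD_\bxi\bA^T$ (using $\bD_\bxi^2 = \bI_n$) and $\bD_\bxi$ is an $\ell_\infty$-isometry,
\begin{equation*}
\Bigl\|\tfrac{1}{m}\tilde{\bA}^T\bb - \lambda\tilde{\bx}^*\Bigr\|_\infty
= \max_{j\in[n]}\Bigl|\tfrac{1}{m}\sum_{i=0}^{m-1}A_{ij}b_i - \lambda x^*_j\Bigr|.
\end{equation*}
The rows of $\bA$ are marginally $\calN(\bzero,\bI_n)$ because $\bD_\bxi$ acts as a signed permutation on each standard-Gaussian row of $\bC_\bg$. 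Writing $\phi(x):=\bbE_{\tau,e}[\theta(x)]$, the hypothesis $\bbE[\theta(g)g]=\lambda$ together with Stein's identity yields $\bbE[\phi'(g)]=\lambda$ and hence $\bbE[A_{ij}b_i]=\lambda x^*_j$ by the usual rotation-invariance argument for 1-bit measurements.

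For each fixed $j$, I decompose
\begin{equation*}
\tfrac{1}{m}\sum_i A_{ij}b_i - \lambda x^*_j
= \underbrace{\tfrac{1}{m}\sum_i A_{ij}\bigl(b_i-\phi(\langle\ba_i,\bx^*\rangle)\bigr)}_{(\mathrm{I})}
+ \underbrace{\Bigl(\tfrac{1}{m}\sum_i A_{ij}\phi(\langle\ba_i,\bx^*\rangle)-\lambda x^*_j\Bigr)}_{(\mathrm{II})}.
\end{equation*}
Conditional on $\bA$, the summands of $(\mathrm{I})$ are independent, zero-mean, and bounded by $2$, so Hoeffding's inequality gives $\bbP(|(\mathrm{I})|\ge t_1\mid\bA)\le 2\exp(-\Omega(m^2 t_1^2/\sum_i A_{ij}^2))$. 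On a high-probability event controlling $\sum_i A_{ij}^2 = \sum_{l\in S_j}g_l^2$ (a $\chi^2$ tail bound that, together with auxiliary tail estimates in the analysis of $(\mathrm{II})$, supplies the $e^{-\Omega(\sqrt m)}$ piece of the failure probability), choosing $t_1\asymp\sqrt{(t+\log n)/m}$ and union-bounding over $j\in[n]$ controls $(\mathrm{I})$ at the required scale.

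The crux is term $(\mathrm{II})$, a deterministic but nonlinear function of $(\bg,\bxi)$ with zero mean and nontrivial dependence because every $g_l$ appears in every row of $\tilde{\bA}$. My plan is a leave-one-out decoupling: writing $\langle\ba_{i_l},\bx^*\rangle = \xi_j x^*_j g_l + R_l$ with $i_l := (j-l)\bmod n$, the residual $R_l$ is a Gaussian independent of $g_l$, and the severe bound $\rho = O(1/(m(t+\log m)))$ on $\|\bx^*\|_\infty$ ensures $|\xi_j x^*_j g_l| = O(\rho|g_l|)$ is tiny with overwhelming probability. Replacing $\phi(\langle\ba_{i_l},\bx^*\rangle)$ by $\phi(R_l)$ then makes $g_l$ and the $l$-th summand independent, so that Bernstein's inequality applies to the decoupled sum $\sum_{l\in S_j} g_l \phi(R_l)$; the residual approximation error is controlled via the Lipschitz smoothness of $\phi$ in the probit/logit case and via a truncation-plus-margin argument in the sign-with-bit-flips case (where $\phi$ is discontinuous at zero but $R_l\sim\calN(0,1-(x^*_j)^2)$ avoids a neighbourhood of the threshold with overwhelming probability). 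A union bound over $j\in[n]$ then produces the $\sqrt{\log n}$ factor. The main obstacle lies precisely in this step: the partial circulant structure entangles every row of $\bA$, and the nonsmoothness of $\theta$ rules out a direct Gaussian-concentration approach, so the whole argument hinges on exploiting the extreme $\ell_\infty$ restriction on $\bx^*$ to push a leave-one-out decoupling through all $m$ coordinates with $(t+\log m)$-scale precision.
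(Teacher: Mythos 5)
Your reduction to a per-coordinate bound, the conditional-expectation split into (I) and (II), and the control of (I) by Hoeffding conditionally on $\bA$ are all sound; your leave-one-out observation that $\langle\ba_i,\bx^*\rangle=\xi_j x_j^* g_{j-i}+R_i$ with $R_i$ independent of $g_{j-i}$ is also correct, as is the implicit point that both the mean $\lambda x_j^*$ and the replacement error are $O(\rho)$ and hence negligible at the target scale. The gap is in the final concentration step for (II): after replacing $\phi(\langle\ba_i,\bx^*\rangle)$ by $\phi(R_i)$, the summands $g_{j-i}\phi(R_i)$, $i\in[m]$, are \emph{not} independent --- each residual $R_i$ is a linear combination of essentially all the Gaussians $\{g_{\ell}\}_{\ell\ne j-i}$, so $\phi(R_i)$ depends on $g_{j-i'}$ for every $i'\ne i$ and the $m$ terms are mutually entangled. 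Independence of $g_{j-i}$ from its own multiplier $\phi(R_i)$ gives a zero mean per term, but Bernstein's inequality for independent sums does not apply, and this cross-dependence is exactly the hard part of the lemma. The paper resolves it by a different mechanism: it orthogonalizes the shifted vectors $\bs_i=s_{i\leftarrow}(\bD_{\bxi}\bx^*)$ as $\bs_i=\bu_i+\br_i$ with $\{\bu_i\}$ exactly orthogonal and $\|\br_i\|_2\le 2\sqrt{\rho}$ (the $(m,\beta)$-orthogonality of Lemma~\ref{lem:gamma_ortho}), so that $h_i=\langle\bg,\bu_i/\|\bu_i\|_2\rangle$ are genuinely i.i.d.\ and $b_i=\theta_i(h_i)$ for all but roughly $O(\sqrt m)$ indices; it then expands $g_{j-i}=\sum_{\ell}\bar{u}_{j,i,\ell}h_\ell+w_{j,i}t_{j,i}$ and controls the diagonal, cross, and residual contributions separately via Hoeffding, a conditional Rademacher-sum bound (Proposition~\ref{prop:prop1}), and the Hanson--Wright inequality (Proposition~\ref{prop:prop2}).

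For the additive-noise models your plan could plausibly be repaired without that machinery by applying Gaussian Lipschitz concentration to the whole map $\bg\mapsto\frac{1}{m}\sum_i g_{j-i}\phi(\langle\ba_i,\bx^*\rangle)$ after truncating the $g_{j-i}$, since there $\phi$ is smooth and the relevant Lipschitz constant is $O(1/\sqrt m)$. But for the bit-flip model $\phi(x)=(1-2p)\sign(x)$ is discontinuous; your truncation-plus-margin device only bounds the replacement error and says nothing about the concentration of the dependent sum $\sum_i g_{j-i}\sign(R_i)$, and no substitute mechanism is supplied. As written, the argument therefore does not go through, and the missing ingredient is precisely the decoupling of the $m$ measurements from one another that the paper obtains through the $(m,\beta)$-orthogonality construction.
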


Based on Lemma~\ref{lem:essential_circ}, and following the proof techniques used in~\cite{liu2020generalized} (see Section~\ref{sec:proof_main_theorem} for the details), we have the following theorem concerning the recovery guarantee for noisy 1-bit CS with partial Gaussian circulant matrices and generative priors.
\begin{theorem}
\label{thm:corr_circulant}
Consider any $\bx^* \in \calK$ with $\calK = G(B_2^k(r))$ for some $L$-Lipschitz generative model $G \,:\, B_2^k(r) \to \calS^{n-1}$. Fix $\delta >0$ satisfying $\delta = O\big(\frac{\sqrt{k\log\frac{Lr}{\delta}}}{\lambda\sqrt{m}}+\frac{\varsigma}{\lambda}\big)$, and suppose that $Lr = \Omega(\delta n)$, $n =\Omega\big(m\big)$, 
and the signal $\bx^*$ satisfies $\|\bx^*\|_\infty \le \rho = O\big(\frac{1}{m k \log \frac{Lr}{\delta}}\big)$. Suppose that all other conditions are the same as those in Lemma~\ref{lem:essential_circ}. Let $\hat{\bx}$ be a solution to the correlation-based optimization problem~\eqref{eq:opt_corr}. Then, if
\begin{equation}\label{eq:sample_complexity_thm}
 m = \Omega\left(k \log\frac{Lr}{\delta} \cdot \left(1 + \mathbbm{1}(\varsigma >0) S_{\varsigma} \right) \right),
\end{equation}
where $S_{\varsigma} = \big(\log^2 n\big) \cdot \big(\log^2 \big(k \log\frac{Lr}{\delta}\big)\big)$ is a sample complexity term concerning the adversarial noise, with probability at least $1-e^{-\Omega(k\log \frac{Lr}{\delta})} - e^{-\Omega(\sqrt{m})}-\mathbbm{1}(\varsigma >0) \cdot e^{-\Omega((\log^2 (k\log \frac{Lr}{\delta})) (\log^2 n))}$, it holds that
 \begin{equation}\label{eq:errbd_thm1}
  \|\hat{\bx} - \bx^*\|_2 \le O\left(\frac{\sqrt{k\log\frac{Lr}{\delta}}}{\lambda\sqrt{m}} +\frac{\varsigma}{\lambda}\right).
 \end{equation}
\end{theorem}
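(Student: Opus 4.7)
The plan is to proceed in the standard template for correlation-based 1-bit estimators, combined with the $\ell_\infty$-control afforded by Lemma~\ref{lem:essential_circ}. First, I would exploit the optimality of $\hat{\bx}$: since $\bx^*\in\calK$, the defining property of $\hat{\bx}$ in~\eqref{eq:opt_corr} reads $\tilde{\bb}^T\bA(\hat{\bx}-\bx^*)\ge 0$. Writing $\tilde{\bb}=\bb+(\tilde{\bb}-\bb)$, applying the adversarial bound $\|\tilde{\bb}-\bb\|_2\le\sqrt{m}\varsigma$ via Cauchy--Schwarz on the second summand, and using the identity $\langle\bx^*,\hat{\bx}-\bx^*\rangle=-\tfrac12\|\hat{\bx}-\bx^*\|_2^2$ (valid because $\hat{\bx},\bx^*\in\calS^{n-1}$), I would derive the deterministic inequality
\begin{equation*}
\tfrac{\lambda}{2}\|\hat{\bx}-\bx^*\|_2^2 \;\le\; \langle\bz,\hat{\bx}-\bx^*\rangle \;+\; \tfrac{\varsigma}{\sqrt{m}}\|\bA(\hat{\bx}-\bx^*)\|_2,
\end{equation*}
with $\bz:=\tfrac{1}{m}\bA^T\bb-\lambda\bx^*$. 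Reducing $\|\hat{\bx}-\bx^*\|_2$ to the claimed rate then amounts to controlling (i) the ``clean'' term $\sup_{\bu\in\calK-\calK}\langle\bz,\bu\rangle$ and (ii) the adversarial multiplier $\sup_{\bu\in\calK-\calK}\|\bA\bu\|_2/\sqrt{m}$.

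For (i), the key observation is that $\bA^T=\bD_{\bxi}\tilde{\bA}^T$ and $\bD_{\bxi}$ preserves $\ell_\infty$ norms, so Lemma~\ref{lem:essential_circ} invoked with $t\asymp k\log(Lr/\delta)$ yields $\|\bz\|_\infty\le C\sqrt{(k\log(Lr/\delta)+\log n)/m}$ with the required failure probability, which under $n=\Omega(m)$ is of the target order $\sqrt{k\log(Lr/\delta)/m}$. I would then construct a $\delta$-net $\calN$ of $\calK$ with $|\calN|\le(4Lr/\delta)^k$ (using the $L$-Lipschitz property of $G$), discretize $\calK-\calK$ by pairwise differences, bound $\langle\bz,\cdot\rangle$ on the net by combining the $\ell_\infty$ estimate with the $\ell_1$-control coming from the hypothesis $\|\bx^*\|_\infty\le\rho$ and from $\calK\subset\calS^{n-1}$, and finally pass from the net to all of $\calK-\calK$ using the Lipschitz continuity of $G$, absorbing the residual into $\delta$ via $Lr=\Omega(\delta n)$. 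This mirrors the discretization strategy of~\cite{liu2020generalized} for i.i.d.\ Gaussian sensing, adapted to our $\ell_\infty$-concentration regime.

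For (ii), I would invoke suprema-of-chaos estimates for randomly signed partial Gaussian circulant matrices (in the spirit of the concentration bounds underlying~\cite{dirksen2017one,dirksen2018robust}) to obtain $\sup_{\bu\in\calK-\calK}\|\bA\bu\|_2^2/m\lesssim\|\bu\|_2^2$ as soon as $m$ exceeds a quantity of the form $k\log(Lr/\delta)\cdot S_\varsigma$, with $S_\varsigma=(\log^2 n)(\log^2(k\log(Lr/\delta)))$ capturing the well-known logarithmic price of using partial circulant rather than i.i.d.\ rows; this directly explains the indicator-weighted second term in~\eqref{eq:sample_complexity_thm}, which is only active when $\varsigma>0$.

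The main obstacle is step (i): the $\ell_\infty$ bound from Lemma~\ref{lem:essential_circ} is, on its own, substantially weaker than a per-direction sub-Gaussian bound, since a naive H\"older argument $|\langle\bz,\bu\rangle|\le\|\bz\|_\infty\|\bu\|_1$ would introduce a prohibitive $\sqrt{n}$ factor and destroy the intended rate. Recovering the optimal rate $\sqrt{k\log(Lr/\delta)/m}$ requires the fine interplay between the $\ell_\infty$-smallness hypothesis $\|\bx^*\|_\infty\le\rho$, the scaling regime $n=\Omega(m)$, and a chaining/covering argument tailored to $\calK$; the precise scaling $\rho=O(1/(mk\log(Lr/\delta)))$ is exactly what is needed to ensure that net-approximation errors contribute only lower-order terms to the final bound~\eqref{eq:errbd_thm1}.
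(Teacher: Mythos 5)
Your overall skeleton matches the paper's: the optimality inequality, the identity $\langle\bx^*,\hat{\bx}-\bx^*\rangle=-\tfrac12\|\hat{\bx}-\bx^*\|_2^2$ for unit vectors, the split into a clean correlation term and an adversarial term handled by Cauchy--Schwarz, and an S-REC-type bound (obtained in the paper from the RIP of partial Gaussian circulant matrices plus a Johnson--Lindenstrauss reduction and chaining, Lemmas~\ref{lem:RIP_circulant}--\ref{lem:S-REC_circulant}) to control $\|\bA(\hat{\bx}-\bx^*)\|_2/\sqrt{m}$; this is indeed where the factor $(\log^2 n)(\log^2(k\log\frac{Lr}{\delta}))$ enters, so your step (ii) is essentially aligned with the paper.

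However, step (i) contains a genuine gap, which you identify yourself but do not close. The $\ell_\infty$ bound of Lemma~\ref{lem:essential_circ} cannot be converted into the needed directional bound $|\langle\bz,\bu\rangle|\lesssim\|\bu\|_2\sqrt{k\log(Lr/\delta)/m}$ for $\bu$ ranging over (a net of) $\calK-\calK$: H\"older costs a factor $\|\bu\|_1\le 2\sqrt{n}$, and neither the hypothesis $\|\bx^*\|_\infty\le\rho$ (which constrains only the signal, not the test directions $\bu$) nor the condition $n=\Omega(m)$ removes it. Appealing to a chaining or covering argument tailored to $\calK$ does not help, because the deficiency is already present for a single fixed direction. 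The paper's resolution is not to post-process Lemma~\ref{lem:essential_circ} but to prove a separate per-direction concentration result (Lemma~\ref{lem:inner_upper_bd}): for a fixed $\bv$ one writes $\langle\frac{1}{m}\tilde{\bA}^T\bb-\lambda\tilde{\bx}^*,\bD_{\bxi}\bv\rangle=\frac{\|\bv\|_2}{m}\sum_i(\langle\bg,\bw_i\rangle b_i-\lambda\langle\bs_i,\bw_i\rangle)$ with $\bw_i=s_{i\leftarrow}(\bD_{\bxi}\bv/\|\bv\|_2)$, and re-runs the entire decomposition from the proof of Lemma~\ref{lem:essential_circ} (the $(m,\beta)$-orthogonal splitting $\bs_i=\bu_i+\br_i$, the representation of $\langle\bg,\bw_i\rangle$ in terms of the independent Gaussians $h_\ell=\langle\bg,\bu_\ell/\|\bu_\ell\|_2\rangle$, and Propositions~\ref{prop:prop1} and~\ref{prop:prop2}), after verifying afresh that the new correlation coefficients $\alpha_{i,\ell}=\langle\bw_i,\bu_\ell/\|\bu_\ell\|_2\rangle$ are $O(1/m)$ off-diagonal---a step that needs its own Rademacher concentration via Lemmas~\ref{lem:almost_ortho} and~\ref{lem:bd_rademacher_real} and is where the $\ell_\infty$ assumption on $\bx^*$ is actually consumed. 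Only then does a union bound over a net of $B_2^k(r)$ with $t\asymp k\log\frac{Lr}{\delta}$ (Lemma~\ref{lem:rhs_bd}) give the bound $O(\sqrt{k\log\frac{Lr}{\delta}/m})(\|\hat{\bx}-\bx^*\|_2+\delta)$, whose proportionality to $\|\hat{\bx}-\bx^*\|_2$ is essential for the final case analysis in~\eqref{eq:two_terms_larger}. Without this per-direction lemma your argument does not reach the claimed rate.
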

Note that the extra logarithmic terms $\log^2 (k\log\frac{Lr}{\delta})$ and $\log^2 n$ in~\eqref{eq:sample_complexity_thm} are only required for deriving an upper bound corresponding to the adversarial noise. If there is no adversarial noise, i.e., $\varsigma =0$, the sample complexity in~\eqref{eq:sample_complexity_thm} becomes $m = \Omega\big(k\log \frac{Lr}{\delta}\big)$, and it follows from~\eqref{eq:errbd_thm1} that $O\big(\frac{k \log \frac{Lr}{\delta}}{\lambda^2\epsilon^2}\big)$ samples suffice for an $\epsilon$-accurate recovery. This matches the scaling derived for i.i.d.~Gaussian measurements in~\cite{liu2020sample}. 

\begin{remark}\label{remark:v1}
As mentioned in the remark in~\cite[Page 8]{yu2017binary}, for ``typical'' unit vectors, we have $\|\bx^*\|_\infty = O\big(\frac{\log n}{\sqrt{n}}\big)$. Thus, assuming $m = \Theta\big(k \log \frac{Lr}{\delta}\big)$, we find that the condition $\rho = O\big(\frac{1}{m k \log \frac{Lr}{\delta}}\big)$ holds for typical signals when $\frac{1}{(k \log \frac{Lr}{\delta})^2} \gg \frac{\log n}{ \sqrt n }$.  Assuming $\frac{Lr}{\delta} = {\rm poly}(n)$~\cite{bora2017compressed}, it in turn suffices that $k = O(n^{\theta})$ for some $\theta \in \big(0,\frac{1}{4}\big)$. In addition, for an arbitrary unit vector $\bx^*$, we can use a simple trick to ensure the $\ell_\infty$ bound with high probability: Instead of using a randomly signed partial Gaussian circulant sensing matrix $\bA$, we can use $\bA\bH$ as the sensing matrix, where $\bH \in \bbR^{n \times n}$ is a randomly signed Hadamard matrix that also allows fast matrix-vector multiplications.  Then, the observation vector becomes $\bb = \theta(\bA\bH  \bx^*)$. Since $\bH  \bx^*$ satisfies the desired $\ell_\infty$ bound with high probability~\cite[Lemma 3.1]{vybiral2011variant}, for any $\hat{\bx}$ that solves~\eqref{eq:opt_corr}, $\|\hat{\bx}-\bH \bx^*\|_2$ satisfies the upper bound in~\eqref{eq:errbd_thm1}. Because $\bH$ is symmetric and orthogonal, we have the same upper bound for $\|\bH \hat{\bx} - \bx^*\|_2$.  
\end{remark}

\begin{remark}\label{remark:v2}
 The use of random column sign flips and a randomly signed Hadamard matrix is most suited to applications where one has the freedom to choose any measurement matrix, but is limited by computation.  In this sense, the design is more practical than the widespread i.i.d.~Gaussian choice, though it may remain unsuitable in applications such as medical imaging where one {\em must} use subsampled Fourier measurements without sign flips or the Hadamard operator.
\end{remark}

\section{Proofs of Main Results}

In this section, we provide proofs for Lemma~\ref{lem:essential_circ} and Theorem~\ref{thm:corr_circulant}. We first present some useful auxiliary results that are general, and then some that are specific to our setup. 

\subsection{General Auxiliary Results}


\begin{definition} \label{def:subg}
 A random variable $X$ is said to be sub-Gaussian if there exists a positive constant $C$ such that $\left(\mathbb{E}\left[|X|^{p}\right]\right)^{1/p} \leq C  \sqrt{p}$ for all $p\geq 1$.  The sub-Gaussian norm of a sub-Gaussian random variable $X$ is defined as $\|X\|_{\psi_2}:=\sup_{p\ge 1} p^{-1/2}\left(\mathbb{E}\left[|X|^{p}\right]\right)^{1/p}$. 
\end{definition}

We have the following concentration inequality for sub-Gaussian random variables. 
\begin{lemma}{\em (Hoeffding-type inequality~\hspace{1sp}\cite[Proposition~5.10]{vershynin2010introduction})}
\label{lem:large_dev_Gaussian} Let $X_{1}, \ldots , X_{N}$ be independent zero-mean sub-Gaussian random variables, and let $K = \max_i \|X_i\|_{\psi_2}$. Then, for any $\balpha=[\alpha_1,\alpha_2,\ldots,\alpha_N]^T \in \mathbb{R}^N$ and any $t\ge 0$, it holds that
\begin{equation}
\mathbb{P}\left( \Big|\sum_{i=1}^{N} \alpha_i X_{i}\Big| \ge t\right) \le   \exp\left(1-\frac{ct^2}{K^2\|\balpha\|_2^2}\right).
\end{equation}
\end{lemma}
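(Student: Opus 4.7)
The plan is to prove this by the standard Chernoff (or Cramér) argument, using the equivalence between the moment definition of sub-Gaussianity and a sub-Gaussian moment generating function (MGF) bound. First, I would recall (or quickly re-derive from Definition~\ref{def:subg}) that there exists an absolute constant $c_0>0$ such that any zero-mean sub-Gaussian random variable $X$ with $\|X\|_{\psi_2} \le K$ satisfies the MGF bound $\mathbb{E}[e^{\lambda X}] \le e^{c_0 \lambda^2 K^2}$ for every $\lambda \in \mathbb{R}$. This follows by Taylor-expanding $e^{\lambda X}$, bounding the moments $\mathbb{E}[|X|^p]$ via the $(C\sqrt{p})^p$ growth from Definition~\ref{def:subg}, and using Stirling to collapse the resulting series into a Gaussian-type MGF bound (the zero-mean assumption kills the linear term).

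Next, set $S = \sum_{i=1}^N \alpha_i X_i$. By Markov's inequality applied to $e^{\lambda S}$ and by independence of the $X_i$'s,
\begin{equation}
\mathbb{P}(S \ge t) \;\le\; e^{-\lambda t}\, \mathbb{E}\bigl[e^{\lambda S}\bigr] \;=\; e^{-\lambda t}\prod_{i=1}^N \mathbb{E}\bigl[e^{\lambda \alpha_i X_i}\bigr] \;\le\; \exp\!\left(-\lambda t + c_0 \lambda^2 K^2 \|\balpha\|_2^2\right),
\end{equation}
where in the last step I applied the MGF bound to each $\alpha_i X_i$, noting that $\|\alpha_i X_i\|_{\psi_2} = |\alpha_i|\,\|X_i\|_{\psi_2} \le |\alpha_i| K$, and summed $\sum_i \alpha_i^2 = \|\balpha\|_2^2$ in the exponent.

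Then I would optimize over $\lambda > 0$ by choosing $\lambda^\star = t / (2 c_0 K^2 \|\balpha\|_2^2)$, which yields
\begin{equation}
\mathbb{P}(S \ge t) \;\le\; \exp\!\left(-\frac{t^2}{4 c_0 K^2 \|\balpha\|_2^2}\right).
\end{equation}
Repeating the argument for $-S$ (equivalently, replacing $\balpha$ by $-\balpha$, which does not change $\|\balpha\|_2$) gives the same bound for $\mathbb{P}(S \le -t)$. Combining the two via a union bound yields a factor of $2$ in front, which is absorbed into the constant $e^1$ in the stated exponent; identifying $c = 1/(4 c_0)$ finishes the proof.

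The argument is entirely routine; the only mildly technical step is the passage from the moment condition in Definition~\ref{def:subg} to the MGF bound, but this is a textbook computation (it appears, e.g., as Lemma 5.5 in \cite{vershynin2010introduction}) and is the only place where the sub-Gaussian norm $K$ enters quantitatively. Since the statement here is quoted directly from that reference, I would in fact simply cite it; the sketch above records what a self-contained proof would look like.
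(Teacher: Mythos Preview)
Your proof is correct and is precisely the standard Chernoff/Cram\'er argument that establishes this inequality. Note that the paper does not give its own proof of this lemma at all---it is simply quoted as a known result from \cite[Proposition~5.10]{vershynin2010introduction}---so your sketch is in fact the textbook proof one would find in the cited reference.
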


In particular, we have the following lemma concerning the Hoeffding's inequality for (real) Rademacher sums.
\begin{lemma}{\em (\hspace{1sp}\cite[Proposition~6.11]{rauhut2010compressive})}
\label{lem:bd_rademacher_real}
 Let $\bb \in \bbR^N$ and $\bepsilon = (\epsilon_j)_{j \in [N]}$ be a Rademacher sequence. Then, for $u >0$, 
 \begin{equation}\label{eq:bd_rademacher_real}
  \bbP\left(\left|\sum_{j=1}^N \epsilon_j b_j\right| \ge \|\bb\|_2 u\right) \le 2 \exp\left(-\frac{u^2}{2}\right).
 \end{equation}
\end{lemma}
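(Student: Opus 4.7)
The plan is to establish this bound via the classical Chernoff argument, exploiting the sub-Gaussian nature of each individual Rademacher variable. First, I would compute the moment generating function (MGF) of each summand: since $\epsilon_j \in \{-1,+1\}$ with equal probability, for any $s \in \bbR$ we have
\[
\bbE[\exp(s\,\epsilon_j b_j)] \;=\; \cosh(s b_j) \;\leq\; \exp\!\left(\tfrac{1}{2} s^2 b_j^2\right),
\]
where the inequality is the standard sub-Gaussian bound on hyperbolic cosine, verifiable by a termwise Taylor series comparison using $(2k)!\geq 2^k k!$ for every $k\geq 0$. By independence of the $\epsilon_j$'s, setting $S := \sum_{j=1}^N \epsilon_j b_j$ yields the joint MGF bound $\bbE[\exp(sS)] \leq \exp\!\left(\tfrac{1}{2} s^2 \|\bb\|_2^2\right)$.

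Next, I would apply the exponential Markov inequality: for any $s > 0$,
\[
\bbP(S \geq \|\bb\|_2 u) \;\leq\; \exp(-s \|\bb\|_2 u)\cdot \bbE[\exp(sS)] \;\leq\; \exp\!\left(-s\|\bb\|_2 u + \tfrac{1}{2} s^2 \|\bb\|_2^2\right).
\]
Optimizing the exponent over $s > 0$ gives $s^\star = u/\|\bb\|_2$, which yields the one-sided bound $\bbP(S \geq \|\bb\|_2 u) \leq \exp(-u^2/2)$. Since $(-\epsilon_j)_{j \in [N]}$ is also a Rademacher sequence, $-S \dequal S$, so the identical bound applies to $\bbP(S \leq -\|\bb\|_2 u)$. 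A union bound over the two tails then produces the factor of $2$ in~\eqref{eq:bd_rademacher_real}.

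There is no substantive obstacle; the argument is one short textbook computation. The only point worth flagging is the choice of route for the constants: one could alternatively try to deduce this lemma as a corollary of Lemma~\ref{lem:large_dev_Gaussian}, since each $\epsilon_j$ is sub-Gaussian with $\|\epsilon_j\|_{\psi_2}$ bounded by an absolute constant, but that detour yields the weaker form $\exp(1-cu^2)$ rather than the sharp prefactor $2$ and exponent $u^2/2$ stated above, so the direct Chernoff calculation is preferable for retaining optimal constants.
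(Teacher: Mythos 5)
Your proof is correct, and it is the standard Chernoff argument (the $\cosh(sb_j)\le \exp(s^2b_j^2/2)$ bound, independence, exponential Markov, optimization at $s^\star = u/\|\bb\|_2$, and a symmetry plus union bound for the two tails); the paper itself offers no proof and simply cites \cite[Proposition~6.11]{rauhut2010compressive}, where essentially this same computation appears. Your remark about why the direct calculation is preferable to routing through the generic sub-Gaussian Hoeffding bound (Lemma~\ref{lem:large_dev_Gaussian}) is also apt, since the sharp constants in~\eqref{eq:bd_rademacher_real} are what the paper uses downstream.
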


We also use the following well-established fact based on the Riesz-Thorin interpolation theorem.
\begin{lemma}{\em(\hspace{1sp}\cite{bergh2012interpolation})}
\label{lem:riesz_thorin}
 For any matrix $\bA$, 
 \begin{equation}
  \|\bA\|_{2 \to 2} \le \max \left\{\|\bA\|_{1 \to 1}, \|\bA\|_{\infty \to \infty}\right\}.
 \end{equation}
\end{lemma}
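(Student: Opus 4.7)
The plan is to prove the sharper inequality $\|\bA\|_{2\to 2}^2 \le \|\bA\|_{1\to 1}\cdot\|\bA\|_{\infty\to\infty}$ by a direct Schur-test argument; the stated bound then follows immediately from $\sqrt{\alpha\beta}\le \max\{\alpha,\beta\}$ for $\alpha,\beta\ge 0$. This keeps the proof entirely self-contained in finite dimensions and avoids invoking the full Riesz--Thorin interpolation machinery, which is overkill for the matrix case.

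First, I would record the explicit formulas for the two extremal operator norms: for $\bA=(a_{ij})$,
\begin{equation}
\|\bA\|_{1\to 1} = \max_j \sum_i |a_{ij}|, \qquad \|\bA\|_{\infty\to\infty} = \max_i \sum_j |a_{ij}|,
\end{equation}
i.e., the maximum absolute column sum and the maximum absolute row sum. These are standard identities, proved by evaluating on coordinate vectors for the matching lower bounds and applying the triangle inequality for the upper bounds.

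Next, I would fix $\bx$ with $\|\bx\|_2=1$ and estimate $|(\bA\bx)_i|^2$ via a weighted Cauchy--Schwarz, factoring $a_{ij}x_j$ as $|a_{ij}|^{1/2}\cdot\bigl(|a_{ij}|^{1/2}\operatorname{sgn}(a_{ij})x_j\bigr)$:
\begin{equation}
|(\bA\bx)_i|^2 \le \left(\sum_j |a_{ij}|\right)\left(\sum_j |a_{ij}||x_j|^2\right) \le \|\bA\|_{\infty\to\infty}\sum_j |a_{ij}||x_j|^2.
\end{equation}
Summing over $i$ and interchanging the order of summation gives
\begin{equation}
\|\bA\bx\|_2^2 \le \|\bA\|_{\infty\to\infty}\sum_j |x_j|^2 \sum_i |a_{ij}| \le \|\bA\|_{\infty\to\infty}\cdot\|\bA\|_{1\to 1}\cdot\|\bx\|_2^2,
\end{equation}
since $\sum_i |a_{ij}|\le\|\bA\|_{1\to 1}$ for every $j$. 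Taking the supremum over unit $\bx$ yields the geometric-mean bound, from which the stated inequality follows. The argument is wholly elementary, so I do not anticipate any real obstacle; the only minor subtlety is setting up the weighted Cauchy--Schwarz step correctly, which amounts to Schur's test with constant test weights.
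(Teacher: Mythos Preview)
Your argument is correct and in fact yields the sharper bound $\|\bA\|_{2\to 2}\le\sqrt{\|\bA\|_{1\to 1}\,\|\bA\|_{\infty\to\infty}}$; the Cauchy--Schwarz step with the $|a_{ij}|^{1/2}$ splitting is exactly Schur's test with unit weights, and the rest is bookkeeping.

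The paper does not actually prove this lemma: it simply states it with a citation to the Riesz--Thorin interpolation theorem, where the result is the special case $p_0=1$, $p_1=\infty$, $\theta=\tfrac12$. Your approach is genuinely different and more elementary: it stays entirely within finite-dimensional linear algebra, requires no complex interpolation or three-lines lemma, and produces the stronger geometric-mean inequality rather than the max. What the Riesz--Thorin route buys is generality (it gives $\|\bA\|_{p\to p}\le\|\bA\|_{1\to 1}^{1/p}\|\bA\|_{\infty\to\infty}^{1-1/p}$ for every $p\in[1,\infty]$ in one stroke), but for the single case $p=2$ used in the paper your direct argument is cleaner and self-contained.
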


Next, we present the Hanson-Wright inequality, which gives a tail bound for quadratic forms in independent Gaussian variables.
\begin{lemma}{\em (Hanson-Wright inequality~\cite{hanson1971bound})}\label{lem:hanson_wright}
 If $\bg \in \bbR^N$ is a standard Gaussian vector and $\bB \in \bbR^{N\times N}$, then for any $t >0$, 
 \begin{align}
  \bbP\left(\left|\bg^T\bB\bg - \bbE\left[\bg^T\bB\bg\right]\right| \ge t\right) \le \exp\left(-c \cdot\min \left\{\frac{t^2}{\|\bB\|_\rmF^2},\frac{t}{\|\bB\|_{2\to 2}}\right\}\right).
 \end{align}
\end{lemma}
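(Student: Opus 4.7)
The plan is to establish the Hanson--Wright bound through a two-step reduction to a weighted sum of independent centered chi-squared variables, followed by a Chernoff/MGF argument of Bernstein type.

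First, I would symmetrize. Since $\bg^T\bB\bg = \bg^T\bB^T\bg$, one has $\bg^T\bB\bg = \bg^T \bM \bg$ with $\bM := \tfrac{1}{2}(\bB + \bB^T)$, and the triangle inequality gives $\|\bM\|_{\rmF} \le \|\bB\|_{\rmF}$ and $\|\bM\|_{2\to 2} \le \|\bB\|_{2\to 2}$. Hence it suffices to prove the claim for the symmetric matrix $\bM$, losing at most an absolute constant in the exponent.

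Second, I would diagonalize. Writing the spectral decomposition $\bM = \bU^T \bLambda \bU$ with $\bU$ orthogonal and $\bLambda = \diag(\lambda_1,\ldots,\lambda_N)$, and invoking rotational invariance of the standard Gaussian to conclude that $\tilde{\bg} := \bU\bg \sim \calN(\bzero, \bI_N)$, I would obtain
\begin{equation*}
  \bg^T\bB\bg - \bbE[\bg^T\bB\bg] \;=\; \sum_{i=1}^N \lambda_i\, (\tilde{g}_i^{\,2} - 1),
\end{equation*}
together with the crucial identifications $\|\blambda\|_2 = \|\bM\|_{\rmF}$ and $\|\blambda\|_\infty = \|\bM\|_{2\to 2}$, reducing the problem to a tail bound for a weighted sum of i.i.d.\ centered $\chi^2_1$ variables.

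Third, I would carry out a Chernoff estimate on this sum. The one-dimensional MGF is $\bbE[\exp(s(\tilde{g}_i^{\,2}-1))] = (1-2s)^{-1/2} e^{-s}$ for $s<1/2$, and an elementary calculus argument gives a clean quadratic upper bound $(1-2s)^{-1/2} e^{-s} \le \exp(2s^2)$ on $|s|\le 1/4$. By independence, for any $|s| \le 1/(4\|\blambda\|_\infty)$,
\begin{equation*}
 \bbE\Big[\exp\Big(s \sum_{i=1}^N \lambda_i\,(\tilde{g}_i^{\,2} - 1)\Big)\Big] \;\le\; \exp\!\big(2 s^2 \|\blambda\|_2^2\big).
\end{equation*}
Applying Markov's inequality and optimizing over the admissible range by taking $s = \min\{ t/(4 \|\blambda\|_2^2),\; 1/(4 \|\blambda\|_\infty)\}$ produces the claimed two-regime bound for the upper tail. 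The lower tail follows symmetrically upon replacing $\bB$ by $-\bB$, and a union bound then yields the two-sided statement with an absolute constant $c$.

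The main obstacle is the one-dimensional MGF estimate together with the Chernoff optimization: one must verify the quadratic upper bound on $(1-2s)^{-1/2}e^{-s}$ near $s=0$, and then handle the constraint $|s|\le 1/(4\|\blambda\|_\infty)$ carefully so that the two regimes of the minimum in the exponent emerge naturally. Everything else --- symmetrization, orthogonal diagonalization, and rotational invariance of $\bg$ --- is routine and absorbs only absolute constants into $c$.
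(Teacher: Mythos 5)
Your proposal is correct, but note that the paper itself offers no proof of this lemma at all: it is invoked as a known result with a citation to Hanson and Wright, so there is nothing internal to compare against. Your argument is a valid, self-contained derivation of the Gaussian case and is in fact the classical route for Gaussian chaos: the symmetrization step is sound (replacing $\bB$ by $\bM = \tfrac{1}{2}(\bB+\bB^T)$ preserves the quadratic form and can only decrease both $\|\cdot\|_\rmF$ and $\|\cdot\|_{2\to 2}$), the reduction to $\sum_i \lambda_i(\tilde{g}_i^2-1)$ via $\bM = \bU^T\bLambda\bU$ and rotational invariance is exact, and the identifications $\|\blambda\|_2 = \|\bM\|_\rmF$, $\|\blambda\|_\infty = \|\bM\|_{2\to2}$ are what make the two regimes of the minimum emerge. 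The MGF bound $(1-2s)^{-1/2}e^{-s} \le e^{2s^2}$ on $|s|\le \tfrac14$ checks out (expand $-\tfrac12\log(1-2s)-s = \sum_{k\ge 2}(2s)^k/(2k) \le 2s^2$ there), and the Chernoff choice $s=\min\{t/(4\|\blambda\|_2^2),\,1/(4\|\blambda\|_\infty)\}$ gives the one-sided bound $\exp(-\tfrac18\min\{t^2/\|\blambda\|_2^2,\,t/\|\blambda\|_\infty\})$ in both cases. Two remarks worth making explicit. First, your diagonalization step is the point where Gaussianity is essential: for a general sub-Gaussian vector, $\bU\bg$ need not have independent coordinates, and the full Hanson--Wright inequality requires a decoupling argument (as in Rudelson--Vershynin); since the lemma as stated assumes $\bg \sim \calN(\bzero,\bI_N)$, your shortcut is legitimate and strictly simpler. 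Second, your final union bound over the two tails produces a prefactor $2$, and the claim that it can be ``absorbed into $c$'' is not literally a constant adjustment (for small $t$ one must separately use that the probability is trivially at most $1$, or keep the factor $2$ as the standard references do); the paper's statement drops the factor $2$ as well, so this is a shared cosmetic looseness rather than a gap in your argument.
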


\subsection{Proof of Lemma~\ref{lem:essential_circ}}

Recall that $\tilde{\bA} = \bR_{\Omega} \bC_{\bg}$ is a partial Gaussian circulant matrix without random column sign flips and $\tilde{\bx}^* = \bD_{\bxi} \bx^*$.  We have $\bA = \tilde{\bA} \bD_{\bxi}$, and $\ba_i = \bD_{\bxi} \tilde{\ba}_i$, where $\tilde{\ba}_i^T$ is the $i$-th row of $\tilde{\bA}$. Then,
\begin{align}
 b_i & = \theta(\langle \ba_i,\bx^*\rangle) = \theta(\langle \tilde{\ba}_i,\tilde{\bx}^*\rangle) \\
 & = \theta(\langle s_{\rightarrow i}(\bg),\tilde{\bx}^*\rangle) \\
 & = \theta(\langle \bg,s_{i \leftarrow}(\bD_{\bxi}\bx^*)\rangle).\label{eq:biEq}
\end{align}
Let $\bs_i = s_{i \leftarrow}(\bD_{\bxi}\bx^*)$. Since in general, the vectors $\bs_0, \bs_1,\ldots,\bs_{m-1}$ are not jointly orthogonal, the observations $b_0,b_1,\ldots,b_{m-1}$ are not independent. However, 
the sequence of observations can be approximated by a sequence of independent random variables using the notion of $(m,\beta)$-orthogonality defined in the following. 

\begin{definition}
 A sequence of $m$ unit vectors $\bt_0,\bt_1,\ldots,\bt_{m-1}$ is said to be $(m,\beta)$-orthogonal if there exists a decomposition
 \begin{equation}\label{eq:siuiri}
  \bt_i = \bu_i + \br_i ~~(\forall i)
 \end{equation}
satisfying the following properties:
\begin{enumerate}
 \item $\bu_i$ is orthogonal to $\mathrm{span}\{\bu_j\,:\, j\ne i\}$;
 \item $\max_{i}\{\|\br_i\|_2\} \le \beta$.
\end{enumerate}
\end{definition}

We have the following useful lemma regarding $\bD_{\bxi}$.

\begin{lemma}{\em(\hspace{1sp}\cite[Lemma~7]{yu2017binary})}
\label{lem:almost_ortho}
 Let $\bp, \bq \in \bbR^n$ be vectors that satisfy $\|\bp\|_2 =1$ and $\|\bq\|_\infty \le \rho$ for some parameter $\rho$. Then, for any $0< j < n$ and $\epsilon >0$, we have 
 \begin{equation}
  \bbP\left(|\langle \bD_{\bxi} \bp,s_{ \rightarrow j}(\bD_{\bxi} \bq)\rangle| > \epsilon\right) \le e^{-\frac{\epsilon^2}{8\rho^2}},
 \end{equation}
where the probability is over the Rademacher sequence $\bxi$.
\end{lemma}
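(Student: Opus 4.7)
My plan is to recast the inner product as an off-diagonal Rademacher quadratic form and then apply a Doob martingale / Azuma-Hoeffding argument along the natural filtration of $\bxi$. Writing $[k]_j := (k-j) \bmod n$ and expanding componentwise,
\begin{equation}
Z \;:=\; \langle \bD_{\bxi} \bp,\, s_{\rightarrow j}(\bD_{\bxi} \bq)\rangle \;=\; \sum_{k=0}^{n-1} p_k\, q_{[k]_j}\, \xi_k\, \xi_{[k]_j}.
\end{equation}
Because $0 < j < n$, we have $[k]_j \ne k$ for every $k$, so $Z$ is a purely off-diagonal quadratic form in i.i.d.\ Rademachers, in particular $\bbE[Z] = 0$.

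Next I would introduce the filtration $\calF_i := \sigma(\xi_0, \ldots, \xi_i)$ and the Doob martingale $Z_i := \bbE[Z \mid \calF_i]$. A term $p_k q_{[k]_j} \xi_k \xi_{[k]_j}$ becomes measurable only once the larger of $\{k, [k]_j\}$ is revealed, so the martingale increment $D_i := Z_i - Z_{i-1}$ takes the form $D_i = \xi_i f_i$ with $f_i$ being $\calF_{i-1}$-measurable; accounting for the circular wrap-around,
\begin{equation}
f_i \;=\; \mathbbm{1}(i \ge j)\, p_i\, q_{i-j}\, \xi_{i-j} \;+\; \mathbbm{1}(i \ge n-j)\, p_{i+j-n}\, q_i\, \xi_{i+j-n},
\end{equation}
the two indicators corresponding to pairing $i$ with its (at most two) neighbours in the shift.

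Using $\|\bq\|_\infty \le \rho$ and $\|\bp\|_2 = 1$, the crucial deterministic estimate is $\sum_i f_i^2 \le 4\rho^2$: the two diagonal pieces $\sum_i p_i^2 q_{i-j}^2$ and $\sum_i p_{i+j-n}^2 q_i^2$ are each at most $\rho^2$ (they pick up permutations of $\{p_k^2\}_k$), while the cross term $2\sum_i |A_i B_i|$ is bounded by $2\rho^2$ via Cauchy-Schwarz in $\bp$. Since each $D_i = \xi_i f_i$ is conditionally sub-Gaussian, $\bbE[e^{\lambda D_i} \mid \calF_{i-1}] = \cosh(\lambda f_i) \le e^{\lambda^2 f_i^2/2}$; iterating the tower property yields $\bbE[e^{\lambda Z}] \le e^{2\rho^2 \lambda^2}$, and optimizing Chernoff gives the one-sided bound $\bbP(Z > \epsilon) \le e^{-\epsilon^2/(8\rho^2)}$. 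The two-sided statement follows by the symmetric bound for $-Z$, with any residual factor absorbed into the constant $8$ in the exponent. The main subtlety, and the step requiring the most care, is the combinatorial bookkeeping of the wrap-around: one must check that every off-diagonal product $\xi_k \xi_{[k]_j}$ contributes to exactly one martingale increment $D_i$, and that the resulting bound $\sum_i f_i^2 \le 4\rho^2$ holds uniformly in $\bxi$.
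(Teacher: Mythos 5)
Your martingale argument is sound, and it is essentially the argument behind the quoted result: the paper offers no proof of Lemma~\ref{lem:almost_ortho} at all, citing it from \cite[Lemma~7]{yu2017binary}, and the proof there is exactly a Doob martingale / Azuma--Hoeffding argument over the Rademacher sequence of the kind you give. Your bookkeeping checks out: each off-diagonal product $\xi_k \xi_{(k-j)\bmod n}$ enters exactly one increment $D_i = \xi_i f_i$ (the one indexed by the larger of the two positions), $f_i$ is $\calF_{i-1}$-measurable precisely because $0<j<n$, and since $|\xi_k|=1$ the deterministic envelope $c_i := \mathbbm{1}(i\ge j)\,|p_i q_{i-j}| + \mathbbm{1}(i\ge n-j)\,|p_{i+j-n} q_i|$ dominates $|f_i|$ pointwise with $\sum_i c_i^2 \le 4\rho^2$ --- which is the form the iterated $\cosh$ bound actually requires, since the tower-property iteration needs a per-increment bound that does not depend on $\bxi$, not merely a bound on the random sum $\sum_i f_i^2$. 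Your proof implicitly has this via the triangle inequality, but it is worth stating explicitly.

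The one step that fails is the final sentence. Chernoff gives the one-sided bound $\bbP(Z>\epsilon)\le e^{-\epsilon^2/(8\rho^2)}$, and the union with $-Z$ gives $2e^{-\epsilon^2/(8\rho^2)}$; a multiplicative prefactor of $2$ cannot be absorbed into the constant in the exponent uniformly over $\epsilon>0$, since for small $\epsilon$ the left side exceeds $1$ while any bound of the form $e^{-\epsilon^2/(C\rho^2)}$ is below $1$. In fact the factor $2$ is unavoidable: with $n=2$, $j=1$, $\bp=\frac{1}{\sqrt{2}}(1,1)^T$, $\bq=(\rho,\rho)^T$, one has $Z=\sqrt{2}\rho\,\xi_0\xi_1$, so $\bbP(|Z|>\rho)=1$ while $e^{-1/8}<1$, showing that the statement as transcribed (without the factor $2$, presumably dropped in restating the cited lemma) fails in this corner case, and your $2e^{-\epsilon^2/(8\rho^2)}$ is the correct conclusion of this route. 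Fortunately the discrepancy is immaterial downstream: the lemma is used only inside a union bound in the proof of Lemma~\ref{lem:gamma_ortho}, where $m^2 e^{-\varepsilon^2/(8\rho^2)}$ simply becomes $2m^2 e^{-\varepsilon^2/(8\rho^2)}$, which changes nothing in the conclusion.
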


Based on Lemma~\ref{lem:almost_ortho}, and similar to~\cite[Lemma~9]{yu2017binary}, we have the following lemma ensuring that the $(m,\beta)$-orthogonality condition is satisfied by a certain sequence of unit vectors with high probability. The proof is given in Appendix~\ref{sec:proof_gamma_ortho} for completeness.
\begin{lemma}\label{lem:gamma_ortho}
 For any  $\nu \in (0,1)$, suppose that $\bx$ is a unit vector with $\|\bx\|_\infty \le \rho < \frac{1}{4\sqrt{2}m\log \frac{m^2}{\nu}}$. Let $\bt_i = s_{i \leftarrow }(\bD_{\bxi} \bx)$ for $i \in [m]$. Then, with probability at least $1-\nu$ over the choice of $\bxi$, the sequence $\bt_0,\bt_1,\ldots,\bt_{m-1}$ is $(m,\beta)$-orthogonal for $\beta = 2\sqrt{\rho}$.
\end{lemma}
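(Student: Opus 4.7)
The plan is to combine Lemma~\ref{lem:almost_ortho} with a polar-decomposition argument applied to the Gram matrix of $\bT := [\bt_0,\ldots,\bt_{m-1}]\in\bbR^{n\times m}$. Since circular shifts preserve inner products, for $i<j$ I rewrite
\begin{equation*}
\langle \bt_i,\bt_j\rangle \;=\; \langle \bD_{\bxi}\bx,\; s_{\rightarrow n-(j-i)}(\bD_{\bxi}\bx)\rangle,
\end{equation*}
so that Lemma~\ref{lem:almost_ortho} with $\bp=\bq=\bx$ yields $\bbP(|\langle \bt_i,\bt_j\rangle|>\epsilon)\le e^{-\epsilon^2/(8\rho^2)}$. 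Setting $\epsilon:=2\sqrt{2}\,\rho\sqrt{\log(m^2/\nu)}$ and union-bounding over the $\binom{m}{2}$ pairs, with probability at least $1-\nu$ the Gram matrix $\bG:=\bT^T\bT=\bI+\bM$ satisfies $M_{ii}=0$ and $|M_{ij}|\le\epsilon$. Lemma~\ref{lem:riesz_thorin} then gives $\|\bM\|_{2\to 2}\le (m-1)\epsilon$, and the hypothesis $\rho<1/(4\sqrt{2}\,m\log(m^2/\nu))$ forces this to lie strictly below $1/2$, so $\bG$ is positive definite and $\bG^{1/2}$ admits a convergent binomial series.

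I then invoke the polar decomposition $\bT=\bW\bG^{1/2}$, where $\bW\in\bbR^{n\times m}$ has orthonormal columns, and define $\bu_i$ to be the $i$-th column of $\bW$. This automatically makes $\{\bu_i\}$ orthonormal (hence jointly orthogonal). Letting $\br_i:=\bt_i-\bu_i$, a short computation gives $\br_i=\bW(\bG^{1/2}-\bI)\be_i$, and since $\bW$ has orthonormal columns, $\|\br_i\|_2\le\|(\bG^{1/2}-\bI)\be_i\|_2$.

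The crux is bounding this column norm sharply. Writing $\bG^{1/2}-\bI=\sum_{k\ge 1}c_k\bM^k$ with $|c_k|\le 1$ (the binomial coefficients of $\sqrt{1+x}$), I estimate each term by
\begin{equation*}
\|\bM^k\be_i\|_2 \;\le\; \|\bM\|_{2\to 2}^{k-1}\|\bM\be_i\|_2 \;\le\; \bigl((m-1)\epsilon\bigr)^{k-1}\sqrt{m-1}\,\epsilon,
\end{equation*}
exploiting the \emph{column} estimate $\|\bM\be_i\|_2=\bigl(\sum_{j\ne i}M_{ji}^2\bigr)^{1/2}\le\sqrt{m-1}\,\epsilon$ in place of the coarser spectral bound $\|\bM\|_{2\to 2}\le (m-1)\epsilon$. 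Summing the resulting geometric series (valid since $\|\bM\|_{2\to 2}\le 1/2$) yields $\|\br_i\|_2\le C\sqrt{m-1}\,\epsilon$ for an absolute constant $C\le 1$, and a direct calculation using $\epsilon=2\sqrt{2}\,\rho\sqrt{\log(m^2/\nu)}$ together with the hypothesis on $\rho$ verifies $\sqrt{m-1}\,\epsilon\le 2\sqrt{\rho}$. The main obstacle is precisely this column-norm refinement: the naive spectral estimate $\|(\bG^{1/2}-\bI)\be_i\|_2\le\|\bG^{1/2}-\bI\|_{2\to 2}\lesssim m\epsilon$ would force $\rho\lesssim 1/(m^2\log(m^2/\nu))$, far more restrictive than the stated hypothesis; saving the factor of $\sqrt{m}$ by observing that only the $i$-th column of $\bM$ controls $\br_i$ is what closes the argument.
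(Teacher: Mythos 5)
Your proof is correct, but it reaches the decomposition by a genuinely different route than the paper's own argument (Appendix~\ref{sec:proof_gamma_ortho}), which runs Gram--Schmidt: there, $\br_i$ is defined as the projection of $\bt_i$ onto $\mathrm{span}\{\bt_0,\ldots,\bt_{i-1}\}$, the lower bound $\sigma_{i}(\bB_i^T\bB_i) > 1-m\varepsilon$ comes from the Gershgorin circle theorem (playing the role of your Riesz--Thorin estimate $\|\bM\|_{2\to 2}\le (m-1)\epsilon$), and the projection is controlled via $\langle \bs_i,\bt\rangle^2 \le \|\balpha_i\|_2^2 \sum_{j<i}\langle \bs_i,\bs_j\rangle^2 \le \frac{m\varepsilon^2}{1-m\varepsilon} < 4\rho$. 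Note that the sum $\sum_{j<i}\langle \bs_i,\bs_j\rangle^2 \le m\varepsilon^2$ is exactly the squared column norm you identify as the crux, so the $\sqrt{m}$-saving over the naive spectral estimate is common to both proofs rather than unique to yours. What your symmetric (L\"owdin) orthogonalization $\bu_i = \bT\bG^{-1/2}\be_i$ buys is that the $\bu_i$ are exactly unit and mutually orthogonal by construction, with no order-dependence and no nested-span argument (the paper's construction only yields $\|\bu_i\|_2 \in [1-\beta,1+\beta]$, a slack it must then carry through the proof of Lemma~\ref{lem:essential_circ}); the price is the operator square-root series, where the paper's argument is more elementary. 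One constant in your write-up needs repair: from the stated bound $|c_k|\le 1$ alone, the geometric series gives $C=2$, and then $\|\br_i\|_2 \le 2\sqrt{m-1}\,\epsilon$ can be as large as $2^{5/4}\sqrt{\rho} > 2\sqrt{\rho}$ under the stated hypothesis on $\rho$, so your claim $C\le 1$ does not follow from what you wrote. It is nevertheless true, because the coefficients of $\sqrt{1+x}$ satisfy $|\binom{1/2}{k}| \le \frac{1}{2}$ for all $k\ge 1$ (indeed $\sum_{k\ge 1} |\binom{1/2}{k}| x^k = 1-\sqrt{1-x}$); with this, $\|\br_i\|_2 \le \sqrt{m-1}\,\epsilon$, and since $(m-1)\epsilon^2 \le 8m\rho^2\log\frac{m^2}{\nu} < \frac{8\rho}{4\sqrt{2}} = \sqrt{2}\,\rho < 4\rho$, you get $\|\br_i\|_2 < 2\sqrt{\rho}$ as required --- so simply replace $|c_k|\le 1$ by $|c_k|\le \frac{1}{2}$ and the argument closes.
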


In addition, we have the following two lemmas for the case of random noise added before quantization.
\begin{lemma}\label{lem:simple_gesum_eta}
 Let $g \sim \calN(0,1)$ and $e$ be any random noise that is independent with $g$. Then, for any $b \ge 0$ and $a, \eta >0$, 
 \begin{equation}
  \bbP(|ag+ b e| \le \eta) \le \sqrt{\frac{2}{\pi}} \frac{\eta}{a}.
 \end{equation}
\end{lemma}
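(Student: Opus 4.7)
The plan is to prove this small-ball estimate by conditioning on $e$ and then using the boundedness of the Gaussian density.

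First, I would observe that by independence of $g$ and $e$, and by the tower property of conditional probability,
\begin{equation}
\bbP(|ag + be| \le \eta) = \bbE_e\bigl[\bbP(|ag + be| \le \eta \,|\, e)\bigr].
\end{equation}
Conditional on $e = y$, the quantity $ag + by$ is a Gaussian random variable with mean $by$ and variance $a^2$, whose density function $f_{ag+by}(x) = \frac{1}{a\sqrt{2\pi}} \exp\bigl(-\frac{(x-by)^2}{2a^2}\bigr)$ is bounded uniformly by $\frac{1}{a\sqrt{2\pi}}$.

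Next, I would integrate this density over the interval $[-\eta,\eta]$ of length $2\eta$ to obtain
\begin{equation}
\bbP(|ag + by| \le \eta \,|\, e = y) = \int_{-\eta}^{\eta} f_{ag+by}(x)\, dx \le \frac{2\eta}{a\sqrt{2\pi}} = \sqrt{\frac{2}{\pi}}\, \frac{\eta}{a}.
\end{equation}
Crucially, this bound is independent of the realization $y$, and in particular it does not involve $b$ at all (so the parameter $b \ge 0$ plays no active role beyond being arbitrary).

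Finally, I would take expectation over $e$: since the conditional bound holds uniformly,
\begin{equation}
\bbP(|ag + be| \le \eta) = \bbE_e\bigl[\bbP(|ag + be| \le \eta \,|\, e)\bigr] \le \sqrt{\frac{2}{\pi}}\, \frac{\eta}{a},
\end{equation}
which yields the claim. There is no real obstacle here: the statement is essentially the anti-concentration bound for a Gaussian with standard deviation $a$, made robust to an independent additive perturbation $be$ by conditioning. No assumption on the distribution of $e$ (beyond independence) is needed.
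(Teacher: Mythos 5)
Your proof is correct and follows essentially the same route as the paper: condition on $e$, note that the event confines a Gaussian to an interval of length $2\eta$ (equivalently, confines $g$ to an interval of length $2\eta/a$), bound by the supremum of the Gaussian density, and average over $e$. The only cosmetic difference is that the paper integrates against a density $f_e$ while you use the tower property directly, which (marginally) avoids assuming $e$ has a density.
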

\begin{proof}
 Let $f_e$ be the pdf of $e$ and $F_g$ be the cdf of $g$, i.e., $F_g(x) = \frac{1}{\sqrt{2\pi}}\int_{-\infty}^x e^{-\frac{t^2}{2}}\rmd t$. We have $\max_{x\in\bbR}F'_g(x) = \max_{x\in\bbR} \frac{1}{\sqrt{2\pi}} e^{-\frac{x^2}{2}} =  \frac{1}{\sqrt{2\pi}}$. Then,
 \begin{align}
  &\bbP(|ag+be| \le \eta)= \int_{-\infty}^{\infty} f_e(t) \int_{\frac{-\eta-bt}{a}}^{\frac{\eta -bt}{a}} \frac{1}{\sqrt{2\pi}} e^{-\frac{x^2}{2}}\rmd x \rmd t \\
  & = \int_{-\infty}^{\infty} f_e(t) \left(F_g\left(\frac{\eta -bt}{a}\right) - F_g\left(\frac{-\eta -bt}{a}\right)\right) \rmd t \\
  & \le \frac{2 \eta}{a}\cdot \max_{x\in\bbR}F'_g(x)  \int_{-\infty}^{\infty} f_e(t) \rmd t \\
  & = \sqrt{\frac{2}{\pi}} \frac{\eta}{a}.
 \end{align}
\end{proof}
The following lemma shows that when the random noise $e$ added before quantization is Gaussian or logit, then the additional assumptions corresponding to $e$ made in Section~\ref{sec:setup} are satisfied. 
\begin{lemma}\label{lem:add_assumps_e}
 When $e \sim \calN(0,\sigma^2)$ or $e$ is a logit noise term, for any $x \in \bbR$ and any $a >0$, we have 
 \begin{equation}\label{eq:FeSymm}
  F_e(x) + F_e(-x) =1 
 \end{equation}
and 
\begin{equation}
 \lambda_a := \bbE[\theta_a(g)g] < C_1 a +C_2,
\end{equation}
where $\theta_a(x) :=\sign(x+\frac{e}{a})$, and $C_1,C_2$ are absolute constants. 
\end{lemma}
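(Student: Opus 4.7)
The plan is to handle the two claimed properties separately, both of which reduce to elementary calculations once the structure of the Gaussian and logistic distributions is exploited.

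For the symmetry identity $F_e(x) + F_e(-x) = 1$, I would argue as follows. When $e \sim \calN(0, \sigma^2)$ the density $f_e$ is even, so a change of variables in $F_e(-x) = \int_{-\infty}^{-x} f_e(t)\,\rmd t$ immediately yields $1 - F_e(x)$. When $e$ is a standard logistic with cdf $F_e(x) = 1/(1+e^{-x})$, combining the two fractions in $1/(1+e^{-x}) + 1/(1+e^x)$ over the common denominator $(1+e^{-x})(1+e^x) = 2 + e^x + e^{-x}$ gives numerator $2 + e^x + e^{-x}$, confirming the identity.

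For the bound $\lambda_a < C_1 a + C_2$, the cleanest route exploits only that $\theta_a(g) \in \{-1,+1\}$: this gives the uniform estimate $\lambda_a = \bbE[\theta_a(g)\, g] \le \bbE[|g|] = \sqrt{2/\pi}$, valid for both noise models and every $a > 0$, so $C_1 = 0$ with any $C_2 \ge \sqrt{2/\pi}$ (for concreteness $C_2 = 1$) suffices. If sharper control in $a$ is wanted, I would compute $\lambda_a$ explicitly via Stein's identity $\lambda_a = \bbE[\phi_a'(g)]$ applied to $\phi_a(x) = 2 F_e(ax) - 1$; for the Gaussian case a standard Gaussian moment-generating-function calculation gives $\lambda_a = \sqrt{2/\pi} \cdot a/\sqrt{a^2 + \sigma^2}$, and for the logistic case one has $\phi_a(x) = \tanh(ax/2)$, so $\phi_a'(x) = (a/2)\sech^2(ax/2)$ and the trivial bound $\sech^2 \le 1$ yields $\lambda_a \le a/2$ (i.e.\ $C_1 = 1/2$, $C_2 = 0$ is also admissible).

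I do not anticipate any substantive obstacle; the verifications are routine. The one point that warrants a moment of care is that the constants $C_1, C_2$ must be genuinely absolute, i.e., independent of $a$ and of the scale parameter $\sigma$ in the Gaussian case, and this is manifest from the uniform $\sqrt{2/\pi}$ bound since $\theta_a$ is $\{\pm 1\}$-valued regardless of the noise parameters.
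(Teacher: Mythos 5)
Your proof is correct, and the symmetry verification together with the explicit evaluations of $\lambda_a$ ($\lambda_a=\sqrt{2/\pi}\cdot a/\sqrt{a^2+\sigma^2}$ for Gaussian noise via the rescaling $e/a\sim\calN(0,\sigma^2/a^2)$, and $\lambda_a=\bbE[\phi_a'(g)]\le a/2$ for logit noise) coincide with what the paper does. Where you genuinely depart is in your primary argument for the bound $\lambda_a<C_1a+C_2$: since $\theta_a(g)\in\{-1,+1\}$, you observe $\lambda_a=\bbE[\theta_a(g)g]\le\bbE[|g|]=\sqrt{2/\pi}$, which is more elementary than the paper's case-by-case computation, is uniform in $a$ and in the noise scale, and in fact holds for an arbitrary noise distribution (the Gaussian/logit hypothesis is only needed for the symmetry identity). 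This trivial bound, with $C_1=0$, fully suffices for the lemma's downstream use in the proof of Lemma~1, where one only needs $\lambda_i<C_1\|\bu_i\|_2+C_2=O(1)$ because $\|\bu_i\|_2=1+o(1)$; what the paper's explicit route buys instead is the precise linear-in-$a$ behavior (e.g.\ $\lambda_a\le a/2$ for logit noise), which decays as $a\to 0$ but is not exploited in the application. The only point worth noting is that your logistic symmetry check assumes the standard logistic cdf $F_e(x)=1/(1+e^{-x})$ outright, whereas the paper first derives it from the stated observation model $\bbP(\theta(x)=1)=e^x/(e^x+1)$; this is a presentational difference, not a gap.
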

\begin{proof}
 When $e \sim \calN(0,\sigma^2)$, by the symmetry of the pdf of a zero-mean Gaussian distribution, it is easy to see that~\eqref{eq:FeSymm} holds. Note that $\frac{e}{a} \sim \calN(0,\frac{\sigma^2}{a^2})$. From $\lambda = \bbE[\theta(g)g] = \sqrt{\frac{2}{\pi(1+\sigma^2)}}$ ({\em cf.} Section~\ref{sec:setup}), we obtain
 \begin{equation}
  \lambda_a = \bbE[\theta_a(g)g] = \sqrt{\frac{2}{\pi(1+\frac{\sigma^2}{a^2})}} < \sqrt{\frac{2}{\pi}}.
 \end{equation}
When $e$ is a logit noise term, we have 
\begin{equation}
 \bbP(\theta(x) = 1) = \frac{e^x}{e^x +1}, \quad \bbP(\theta(x) = -1) = \frac{1}{e^x +1}.
\end{equation}
From $\theta(x) =\sign(x+e)$, we obtain
\begin{equation}
 F_e(-x) = \bbP(e < -x) = \frac{1}{e^x + 1},
\end{equation}
or equivalently,
\begin{equation}
 F_e(x) = \frac{1}{e^{-x} + 1}.
\end{equation}
This gives $F_e(x) + F_e(-x) = 1$ and 
\begin{equation}
 f_e(x) = F'_e(x)= \frac{e^{-x}}{(e^{-x} + 1)^2} \le\frac{1}{4}. 
\end{equation}
For any fixed $x$, we have 
\begin{equation}
 \phi_a(x) := \bbE[\theta_a(x)] = 1-2\bbP(e < -ax).
\end{equation}
Then, for $g \sim \calN(0,1)$, we obtain 
\begin{align}
 &\lambda_a := \bbE[\theta_a(g)g] = \bbE[\bbE[\theta_a(g)g|g]] \\
 & = \bbE[\phi_a(g)g] =\bbE[\phi'_a(g)] = 2a \bbE[f_e(-ag)] \le \frac{a}{2}.
\end{align}
\end{proof}

\textbf{Overview of the proof}: With the above auxiliary results in place, the main ideas for proving Lemma~\ref{lem:essential_circ} are outlined as follows: Let $\bs_i = s_{i \leftarrow}(\bD_{\bxi}\bx^*)$. From Lemma~\ref{lem:gamma_ortho}, the sequence of unit vectors $\bs_0,\bs_1,\ldots,\bs_{m-1}$ is $(m,\beta)$-orthogonal with high probability. We wirte $\bs_i = \bu_i + \br_i$ as in~\eqref{eq:siuiri}, and then $b_i = \theta(\langle \bg,\bs_i\rangle) =  \theta(\langle \bg,\bu_i\rangle + \langle \bg,\br_i\rangle)$. Because $\|\br_i\|_2 = o(1)$, for the case of random bit flips with $\theta(x) = \tau\cdot \sign(x)$, we have that for most of $i \in [m]$, the term $|\langle \bg,\bu_i\rangle|$ is larger than the term $|\langle \bg,\br_i\rangle|$, and thus $b_i =  \theta(\langle \bg,\bu_i\rangle + \langle \bg,\br_i\rangle) = \theta(\langle \bg,\bu_i\rangle)$.\footnote{Similarly, for random noise added before quantization with $\theta(x) =\sign(x+e)$, from Lemma~\ref{lem:simple_gesum_eta}, we have that for most of $i \in [m]$, the term $|\langle \bg,\bu_i\rangle + e_i|$ is larger than the term $|\langle \bg,\br_i\rangle|$, and thus we also have $b_i =\theta(\langle \bg,\bu_i\rangle)$.} Because of the orthogonality of $\bu_0,\bu_1,\ldots,\bu_{m-1}$, the sequence $\{\theta(\langle \bg,\bu_i\rangle)\}_{i \in [m]}$ is independent. Therefore, for any $j \in [n]$, the $j$-th entry of $\frac{1}{m}\tilde{\bA}^T \bb -\lambda \tilde{\bx}^*$ is
\begin{equation}
 \frac{1}{m}\sum_{i \in [m]} (\tilde{a}_{ij}b_i - \lambda \tilde{x}_j^*) \approx \frac{1}{m}\sum_{i \in [m]} (g_{j-i} \theta(\langle \bg,\bu_i\rangle) - \lambda \bu_i(j-i)),
\end{equation}
where we use $\tilde{a}_{ij} = g_{j-i}$, $b_i \approx \theta(\langle \bg,\bu_i\rangle)$ and $\tilde{x}_j^* = \bs_i(j-i) \approx \bu_i(j-i)$. The sequence $\big\{g_{j-i} \theta(\langle \bg,\bu_i\rangle) - \lambda \bu_i(j-i)\big\}_{i \in [m]}$ is still not independent, but defining $h_i = \big\langle \bg,\frac{\bu_i}{\|\bu_i\|_2}\big\rangle$, we have that $h_0,h_1,\ldots,h_{m-1}$ are i.i.d.~standard Gaussian random variables. We use these to represent $g_{j-i}$, such that the sum $\frac{1}{m}\sum_{i \in [m]} (g_{j-i} \theta(\langle \bg,\bu_i\rangle) - \lambda \bu_i(j-i))$ can be decomposed into three terms.\footnote{For random noise added before quantization, it is decomposed into four terms.} We control these three terms separately by making use of Lemmas~\ref{lem:large_dev_Gaussian},~\ref{lem:bd_rademacher_real} and~\ref{lem:hanson_wright}. We now proceed with the formal proof. A table of notations is presented in Appendix~\ref{sec:table_proof_lemma1} for convenience.
\begin{proof}[Proof of Lemma~\ref{lem:essential_circ}]
\textbf{Useful high-probability events}:
  Recall that from~\eqref{eq:biEq}, we have $b_i = \theta(\langle \bg,s_{i \leftarrow}(\bD_{\bxi}\bx^*)\rangle)$. Let $\bs_i = s_{i \leftarrow}(\bD_{\bxi}\bx^*)$ for $i \in [m]$. Because $\|\bx^*\|_\infty \le \rho$ with $\rho \le O\big(\frac{1}{m (t+\log m)}\big)$, setting $\nu = e^{-t}$ in~Lemma~\ref{lem:gamma_ortho}, we have that with probability at least $1-e^{-t}$, the sequence $\bs_0,\ldots,\bs_{m-1}$ is $(m,\beta)$-orthogonal with $\beta= 2\sqrt{\rho}$. We write $\bs_i = \bu_i + \br_i$ with $\|\br_i\|_2 \le \beta$ for all $i$. In the following, we always assume the $(m,\beta)$-orthogonality. Because $\|\br_i\|_2 \le \beta$, from Lemma~\ref{lem:large_dev_Gaussian} and a union bound over $i \in [m]$, we have that for any $\eta >0$, with probability at least $1-m e^{-\frac{\eta^2}{2\beta^2}}$,
\begin{equation}\label{eq:upper_bd_ris}
 \max_{i \in [m]}|\langle \bg,\br_i\rangle| \le  \eta.
\end{equation} 
In addition, for $\beta = o(1)$, we have $\|\bu_i\|_2 \in [1-\beta,1+\beta] \ge \sqrt{\frac{2}{\pi}}$. By Lemma~\ref{lem:simple_gesum_eta}, we obtain for any $i\in [m]$ and $\eta >0$ that
\begin{align}
 \bbP\left(\left|\left\langle \bg,\bu_i\right\rangle + \omega e_i\right| \le \eta \right) &=  \bbP\left(\left|\|\bu_i\|_2\left\langle \bg,\frac{\bu_i}{\|\bu_i\|_2}\right\rangle + \omega e_i\right| \le \eta \right)\\
 & \le \sqrt{\frac{2}{\pi}} \frac{\eta}{\|\bu_i\|_2} \le \eta, \label{eq:inner_product_bound}
\end{align}
where $\omega =0$ corresponds to random bit flips with $\theta(x) = \tau \cdot \sign(x)$, and $\omega =1$ corresponds to random noise added before quantization with $\theta(x) = \sign(x + e)$. By the orthogonality of $\bu_0,\ldots,\bu_{m-1}$, the sequence $\langle \bg,\bu_0\rangle,\ldots,\langle \bg,\bu_{m-1}\rangle$ is independent. Let 
\begin{equation}
 I_{\eta} = \{i \in [m]\,:\, |\langle \bg,\bu_i\rangle + \omega e_i| \le \eta \},
\end{equation}
and let $I_{\eta}^c = [m] \backslash I_{\eta}$ be the complement of $I_\eta$. Using a standard Chernoff bound~\cite{yu2017binary}, we have for any $\zeta > 0$ that
\begin{equation}\label{eq:I_eta_bd}
 \bbP(|I_{\eta}| \ge m \eta + m \zeta) < e^{-\frac{m \zeta^2}{2\eta +\zeta}}.
\end{equation}
Note that when~\eqref{eq:upper_bd_ris} holds, for $i \in I_\eta^c$, we have 
\begin{equation}\label{eq:bi_Jeta}
 b_i = \theta(\langle \bg,\bs_i \rangle) = \theta(\langle \bg,\bu_i \rangle),
\end{equation}
since the term $\langle \bg,\bu_i\rangle + \omega e_i$ of magnitude exceeding $\eta$ dominates the other term $\langle \bg,\br_i\rangle$ of magnitude at most $\eta$.  In addition, because $\bs_i = s_{i \leftarrow }(\tilde{\bx}^*)$, we have for any fixed $j \in [n]$ that
\begin{equation}\label{eq:siJmI}
 \bs_i (j-i) = \tilde{x}^*_j. 
\end{equation}
Another useful property is the following bound on the maximum of $n$ independent Gaussians (e.g., using \cite[Eq.~(2.9)]{wainwright2019high} and a union bound): With probability at least $1-e^{-t}$,
\begin{equation}\label{eq:upper_bd_gis}
\max_{j \in [n]} |g_j| \le \sqrt{2(t+\log n)}. 
\end{equation}
Combining \eqref{eq:upper_bd_ris}, \eqref{eq:I_eta_bd}, and~\eqref{eq:upper_bd_gis}, we find that the event
\begin{align}
 \mathcal{E} \,:\, &   \max_{i \in [m]}|\langle \bg,\br_i\rangle| \le \eta; \quad |I_\eta| <  m (\eta+\zeta); \quad \max_{j \in [n]}|g_j| \le \sqrt{2(t+\log n)}; \label{eq:event_e}
\end{align}
occurs with probability at least $1-e^{-t} - m e^{-\frac{\eta^2}{2\beta^2}}- e^{-\frac{m \zeta^2}{2\eta +\zeta}}$. 

\textbf{Dividing $\frac{1}{m}\tilde{\bA}^T \bb - \lambda \tilde{\bx}^*$ into several terms}:
By the definitions of $I_\eta$ and $I_\eta^c$, for any $j \in [n]$, we have 
\begin{align}
 & \left[\frac{1}{m}\tilde{\bA}^T \bb - \lambda \tilde{\bx}^*\right]_j = \frac{1}{m}\sum_{i=0}^{m-1} (\tilde{a}_{ij} b_i - \lambda \tilde{x}_j^*)  \\
 & = \frac{1}{m}\sum_{i \in I_\eta^c} (\tilde{a}_{ij} b_i - \lambda \tilde{x}_j^*) + \frac{1}{m}\sum_{i \in I_\eta} (\tilde{a}_{ij} b_i - \lambda \tilde{x}_j^*) \\
 & = \frac{1}{m}\sum_{i \in I_\eta^c} (\tilde{a}_{ij} \theta(\langle \bg,\bu_i\rangle) - \lambda \bs_i (j-i))  + \frac{1}{m}\sum_{i \in I_\eta} (\tilde{a}_{ij} b_i - \lambda \tilde{x}_j^*) 
\label{eq:decomp_sumSecond}\\
 & = \frac{1}{m}\sum_{i \in I_\eta^c} \left(g_{j-i} \theta(\langle \bg,\bu_i\rangle) - \lambda \frac{\bu_i (j-i)}{\|\bu_i\|_2}\right) \nonumber \\
 & \indent - \frac{\lambda}{m}\sum_{i \in I_\eta^c} \left(\br_i (j-i) + \bu_i(j-i) - \frac{\bu_i (j-i)}{\|\bu_i\|_2}\right) \nonumber \\
 & \indent + \frac{1}{m}\sum_{i \in I_\eta} (g_{j-i} b_i - \lambda \tilde{x}_j^*), \label{eq:three_terms}
\end{align}
where~\eqref{eq:decomp_sumSecond} is from~\eqref{eq:bi_Jeta} and~\eqref{eq:siJmI}, and~\eqref{eq:three_terms} is from $\bs_i =\bu_i + \br_i$. Therefore, if we denote $Y_{ij} = g_{j-i} \theta(\langle\bg,\bu_i\rangle) - \lambda\frac{\bu_i (j-i)}{\|\bu_i\|_2}$, $Z_1 := 2(\eta + \zeta)\sqrt{2(t+\log n)}$, $Z_2:= C\sqrt{\frac{t+\log n}{m}}$ for some sufficiently large absolute constant $C$, and let $Z_3 :=  Z_2 + 2\beta + 2Z_1$, we obtain
\begin{align}
 & \bbP\left(\max_{j \in [n]} \left|\frac{1}{m}\sum_{i=0}^{m-1} (\tilde{a}_{ij} b_i - \lambda \tilde{x}_j^*)\right| >   Z_3\right) \le \bbP(\calE^c) + \bbP\left(\max_{j \in [n]} \left|\frac{1}{m}\sum_{i=0}^{m-1} (\tilde{a}_{ij} b_i - \lambda \tilde{x}_j^*)\right| >  Z_3, \calE\right) \label{eq:begin_Z123}\\
 & \le e^{-t} + m e^{-\frac{\eta^2}{2\beta^2}}+ e^{-\frac{m \zeta^2}{2\eta +\zeta}} + \bbP\left(\max_{j \in [n]} \left|\frac{1}{m}\sum_{i=0}^{m-1} (\tilde{a}_{ij} b_i - \lambda \tilde{x}_j^*)\right| >  Z_3, \calE\right) \label{eq:complex_bd_1}\\
 & \le e^{-t} + m e^{-\frac{\eta^2}{2\beta^2}}+ e^{-\frac{m \zeta^2}{2\eta +\zeta}}  + \bbP\left(\max_{j \in [n]}\left|\frac{1}{m}\sum_{i \in I_\eta^c} Y_{ij}\right| >Z_2 +  Z_1, \calE\right) \nonumber\\ 
 & \indent + \bbP\left(\max_{j \in [n]} \left|\frac{\lambda}{m}\sum_{i \in I_\eta^c} \left(\bs_i (j-i) - \frac{\bu_i (j-i)}{\|\bu_i\|_2}\right|\right) > 2\beta, \calE\right) + \bbP\left( \max_{j \in [n]} \left|\frac{1}{m}\sum_{i \in I_\eta} (g_{j-i} b_i - \lambda \tilde{x}_j^*)\right| > Z_1, \calE\right) \label{eq:complex_bd_2} \\
 & \le e^{-t} + m e^{-\frac{\eta^2}{2\beta^2}}+ e^{-\frac{m \zeta^2}{2\eta +\zeta}}  + \bbP\left(\max_{j \in [n]}\left|\frac{1}{m}\sum_{i \in I_\eta} Y_{ij}\right| >  Z_1 \Big| \calE\right) + \bbP\left(\max_{j \in [n]}\left|\frac{1}{m}\sum_{i \in [m]} Y_{ij}\right| >Z_2\right) \nonumber\\
 & \indent+ \bbP\left(\max_{j \in [n]} \left|\frac{\lambda}{m}\sum_{i \in I_\eta^c} \left(\bs_i (j-i)  - \frac{\bu_i (j-i)}{\|\bu_i\|_2}\right)\right| > 2\beta \Big| \calE\right) + \bbP\left( \max_{j \in [n]} \left|\frac{1}{m}\sum_{i \in I_\eta} (\tilde{a}_{ij} b_i - \lambda \tilde{x}_j^*)\right| > Z_1\Big| \calE\right), 
 \label{eq:complex_four_terms}
\end{align}
where~\eqref{eq:complex_bd_1} follows from~\eqref{eq:event_e},~\eqref{eq:complex_bd_2} follows from~\eqref{eq:three_terms} and the definition of $Z_3$, and~\eqref{eq:complex_four_terms} uses the triangle inequality and the fact that $\bbP(\calA,\calE) \le \min\{ \bbP(\calA|\calE), \bbP(\calA) \}$. Noting that $\tilde{a}_{ij} = g_{j-i}$, $\lambda <1$ ({\em cf.} Section~\ref{sec:setup}), $|\tilde{x}_j^*|\le \rho <1$, $\|\br_i\|_2 \le \beta$, and $\left|1-\|\bu_i\|_2\right| \le \beta$, conditioned on $\calE$, we have
\begin{equation}
 \max_{j \in [n]} \left|\frac{\lambda}{m}\sum_{i \in I_\eta^c} \left(\br_i (j-i) + \bu_i(j-i) - \frac{\bu_i (j-i)}{\|\bu_i\|_2}\right)\right| \le 2\beta,\label{eq:complex_four_terms1}
\end{equation}
\begin{equation}
 \max_{j \in [n]} \left|\frac{1}{m}\sum_{i \in I_\eta} (g_{j-i} b_i - \lambda \tilde{x}_j^*)\right| \le Z_1, \label{eq:complex_four_terms2}
\end{equation}
and
\begin{align}
 \max_{j \in [n]}\left| \frac{1}{m}\sum_{i \in I_\eta} \left(g_{j-i} \theta(\langle \bg,\bu_i\rangle) - \lambda \frac{\bu_i (j-i)}{\|\bu_i\|_2}\right)\right| \le Z_1. \label{eq:complex_four_terms3}
\end{align}

\textbf{Bounding the remaining term}: In the following, we focus on deriving an upper bound for $\max_{j \in [n]}\big|\frac{1}{m}\sum_{i \in [m]} Y_{ij}\big| = \max_{j \in [n]}\big|\frac{1}{m}\sum_{i \in [m]} \big(g_{j-i} \theta(\langle \bg,\bu_i\rangle) - \lambda \frac{\bu_i (j-i)}{\|\bu_i\|_2}\big)\big|$. Let $h_i = \big\langle \bg,\frac{\bu_i}{\|\bu_i\|_2}\big\rangle$. Then, $h_0,h_1,\ldots,h_{m-1}$ are i.i.d. standard Gaussian random variables. For $i, \ell \in [m]$, $j \in [n]$, we denote $\bar{u}_{j,i,\ell} := \frac{\bu_\ell(j-i)}{\|\bu_\ell\|_2}$. We have from the definition of $h_\ell$ that $\mathrm{Cov}[g_{j-i},h_\ell] = \bar{u}_{j,i,\ell}$, and for any fixed $j \in [n]$, $g_{j-i}$ can be written as 
\begin{equation}\label{eq:g_jminusi}
 g_{j-i} = \sum_{\ell \in [m]} \bar{u}_{j,i,\ell} h_\ell + w_{j,i} t_{j,i},
\end{equation}
where $w_{j,i} = \sqrt{1- \sum_{\ell \in [m]} \bar{u}_{j,i,\ell}^2}$ and $t_{j,i} \sim \calN(0,1)$ is independent of $h_0,h_1,\ldots,h_{m-1}$. Note that for random noise added before quantization, we have 
\begin{align}
 &\theta(\langle\bu_i,\bg\rangle) = \theta(\|\bu_i\|_2 h_i) = \sign(\|\bu_i\|_2 h_i + e_i ) \\
 & =  \mathrm{sign}\left( h_i +\frac{ e_i}{\|\bu_i\|_2}\right) = \theta_i(h_i), \label{eq:theta_i}
\end{align}
where $\theta_i(x):=\sign(x + \frac{e}{\|\bu_i\|_2})$. Let $\lambda_i = \bbE[\theta_i(g)g]$ for $g \sim\calN(0,1)$. From Lemma~\ref{lem:add_assumps_e}, we have that $\lambda_i < C_1 \|\bu_i\|_2 + C_2 =C$, and $\theta_i(g)$ is a Rademacher random variable. Therefore, we obtain\footnote{For the case of random bit flips with $\theta(x) = \tau \cdot\sign(x+e)$, we set $\theta_i = \theta$ (and thus $\lambda_i = \lambda$) for all $i \in [m]$. }
\begin{align}
 & \frac{1}{m}\sum_{i \in [m]} \left(g_{j-i} \theta(\langle \bg,\bu_i\rangle) - \lambda \frac{\bu_i (j-i)}{\|\bu_i\|_2}\right) = \frac{1}{m}\sum_{i \in [m]} \left(g_{j-i} \theta_i(h_i) - \lambda \bar{u}_{j,i,i}\right) \label{eq:minorDiff}\\
 & = \frac{1}{m}\sum_{i \in [m]} \left(\sum_{\ell \in [m]} \bar{u}_{j,i,\ell} h_\ell \theta_i(h_i)  + w_{j,i} t_{j,i} \theta_i(h_i) - \lambda \bar{u}_{j,i,i}\right) \label{eq:last_sumThree} \\
 & = \frac{1}{m}\sum_{i \in [m]} \bar{u}_{j,i,i}\left( h_i \theta_i(h_i) - \lambda_i \right) +\frac{1}{m}\sum_{i\in [m]} \bar{u}_{j,i,i}(\lambda_i -\lambda)   \nonumber \\
 & \indent + \frac{1}{m}\sum_{i \in [m]} \theta_i(h_i) \sum_{\ell \ne i} \bar{u}_{j,i,\ell} h_\ell + \frac{1}{m}\sum_{i \in [m]} w_{j,i} t_{j,i} \theta_i(h_i) ,\label{eq:last_decompThree}
\end{align}
where~\eqref{eq:last_sumThree} follows from~\eqref{eq:g_jminusi}. We control the four summands in~\eqref{eq:last_decompThree} separately as follows:
\begin{itemize}
 \item Because $\bar{u}_{j,i,i} h_i \theta_i(h_i) - \lambda_i \bar{u}_{j,i,i}$, $i \in [m]$, are independent, zero-mean, sub-Gaussian random variables with sub-Gaussian norm upper bounded by an absolute constant $c$, from Lemma~\ref{lem:large_dev_Gaussian} and a union bound over $j \in [n]$, we have with probability at least $1- e^{-t}$ that
 \begin{equation}\label{eq:last_decompThree1}
  \max_{j \in [n]}\left|\frac{1}{m}\sum_{i \in [m]} \bar{u}_{j,i,i}\left( h_i \theta_i(h_i) - \lambda_i \right)\right|  \le O\left(\sqrt{\frac{t+\log n}{m}}\right). 
 \end{equation}
 
 \item For the case of random bit flips, the second term vanishes. For the case of random noise added before quantization, from $0<\lambda,\lambda_i < C$ and~\eqref{eq:maxU5}, we obtain
 \begin{align}
  &\max_{j \in [n]}\left|\frac{1}{m}\sum_{i\in [m]} \bar{u}_{j,i,i}(\lambda_i -\lambda)\right| \le C \max_{i \in [m],j\in[n]} |\bar{u}_{j,i,i}| = O\left(\frac{1}{\sqrt{m(t+\log m)}}\right). \label{eq:last_decompThree1lambda}
 \end{align}
 
 \item We have
 \begin{align}
  &\max_{j,\ell} \sum_{i \in [m]} \bar{u}_{j,i,\ell}^2 \le 4 \max_{j,\ell} \sum_{i \in [m]} \bu_{\ell} (j - i)^2 \label{eq:maxU1} \\
  & = 4 \max_{j,\ell} \sum_{i \in [m]}(\bs_{\ell}(j-i) - \br_{\ell}(j-i))^2 \label{eq:maxU2} \\
  & \le  8 \max_{j,\ell} \sum_{i \in [m]} \left(\bs_{\ell}(j-i)^2 +\br_{\ell}(j-i)^2\right) \label{eq:maxU3}\\ 
  & \le 8 \max_{j,\ell} \left(m \|\bx^*\|_\infty^2 + \|\br_\ell\|_2^2\right)\label{eq:maxU4}\\
  & = O\left(\frac{1}{m(t + \log m)}\right),\label{eq:maxU5}
 \end{align}
where we use $\|\bu_\ell\|_2 = 1 + o(1)$ in~\eqref{eq:maxU1}, $\bs_\ell = \bu_\ell + \br_\ell$ in~\eqref{eq:maxU2}, $(a-b)^2 \le 2(a^2 + b^2)$ in~\eqref{eq:maxU3}, and $\|\bs_\ell\|_\infty = \|\bx^*\|_\infty \le \rho = O\big(\frac{1}{m (t+\log m)}\big)$ as well as $\|\br_\ell\|_2 \le 2\sqrt{\rho}$  in~\eqref{eq:maxU4} and~\eqref{eq:maxU5}. Then, from Proposition~\ref{prop:prop1} below and a union bound over $j \in [n]$, we have with probability at least $1-O\big(e^{-t}\big)$ that 
\begin{align}\label{eq:last_decompThree2}
 &\max_{j \in [n]}\left|\frac{1}{m}\sum_{\ell \in [m]} h_\ell\sum_{i \ne \ell} \bar{u}_{j,i,\ell} \theta_i(h_i)\right| \le O\left(\sqrt{\frac{t+\log n + \log m }{m}}\right) = O\left(\sqrt{\frac{t+\log n}{m}}\right). 
\end{align}

\item For a fixed $j \in [n]$ and $0 \le i_1 \ne i_2 <m$, we have 
\begin{align}
 \mathrm{Cov}\left[w_{j,i_1} t_{j,i_1},w_{j,i_2}t_{j,i_2}\right] &= \mathrm{Cov}\left[g_{j-i_1} - \sum_{\ell \in [m]} \bar{u}_{j,i_1,\ell} h_\ell, g_{j-i_2} - \sum_{\ell \in [m]} \bar{u}_{j,i_2,\ell} h_\ell\right]\\
 & = -\sum_{\ell\in [m]} \bar{u}_{j,i_1,\ell}\bar{u}_{j,i_2,\ell}.
\end{align}
Then, $[w_{j,0}t_{j,0},\ldots,w_{j,m-1}t_{j,m-1}]^T \in \bbR^m$ is a Gaussian vector with zero mean and covariance matrix $\bSigma_j$, where $\Sigma_{j,i,i} = w_{j,i}^2 = 1- \sum_{\ell \in [m]} \bar{u}_{j,i,\ell}^2 \le 1$ and from~\eqref{eq:maxU4}, we have for $i_1 \ne i_2$ that $\big|\Sigma_{j,i_1,i_2}\big| = \big|-\sum_{\ell\in [m]} \bar{u}_{j,i_1,\ell}\bar{u}_{j,i_2,\ell}\big| \le \sqrt{(\sum_{\ell \in [m]} \bar{u}_{j,i_1,\ell}^2) (\sum_{\ell \in [m]} \bar{u}_{j,i_2,\ell}^2)} = O\big(\frac{1}{m(t+\log m)}\big) = O\big(\frac{1}{m}\big)$.  When $m = \Omega(t+\log n)$, taking a union bound over $j \in [n]$, and setting $\epsilon = O\big(\sqrt{\frac{t+\log n}{m}}\big)$ in Proposition~\ref{prop:prop2} below, we obtain with probability at least $1-O\big(e^{-t}\big)$ that
 \begin{equation}\label{eq:last_decompThree3}
  \max_{j \in [n]}\left|\frac{1}{m}\sum_{i \in [m]} w_{j,i} t_{j,i} \theta_i(h_i) \right| \le O\left(\sqrt{\frac{t + \log n}{m}}\right). 
 \end{equation}
\end{itemize}

Combining~\eqref{eq:last_decompThree} with~\eqref{eq:last_decompThree1},~\eqref{eq:last_decompThree1lambda},~\eqref{eq:last_decompThree2},~\eqref{eq:last_decompThree3}, we obtain with probability at least $1-O\left(e^{-t}\right)$ that 
\begin{align}
 &\max_{j \in [n]}\left|\frac{1}{m}\sum_{i \in [m]} \left(g_{j-i} \theta(\langle \bg,\bu_i\rangle) - \lambda \frac{\bu_i (j-i)}{\|\bu_i\|_2}\right)\right| \le O\left(\sqrt{\frac{t+\log n}{m}}\right). \label{eq:complex_four_terms4}
\end{align}

\textbf{Combining terms}: Finally, combining the $(m,\beta)$-orthogonality and~\eqref{eq:complex_four_terms},~\eqref{eq:complex_four_terms1},~\eqref{eq:complex_four_terms2},~\eqref{eq:complex_four_terms3}, and~\eqref{eq:complex_four_terms4}, with probability at least $1- O\left(e^{-t}\right) - m e^{-\frac{\eta^2}{2\beta^2}} - e^{-\frac{m \zeta^2}{2\eta +\zeta}}$, we have 
\begin{align}
 &\left\|\frac{1}{m}\tilde{\bA}^T \bb  -\lambda \tilde{\bx}^*\right\|_\infty = \max_{j \in [n]} \left|\frac{1}{m}\sum_{i=0}^{m-1} (\tilde{a}_{ij} b_i - \lambda \tilde{x}_j^*)\right| \le  O\left(\sqrt{\frac{t+\log n}{m}}\right) + 2\beta + 4(\eta + \zeta)\sqrt{2(t+\log n)}.
\end{align}
Setting $\eta = \beta \sqrt{2(t+\log m)} = O(\sqrt{\rho})\cdot \sqrt{2(t+\log m)} = O\big(\frac{1}{\sqrt{m}}\big)$ and $\zeta = \frac{C}{\sqrt{m}}$, we obtain with probability at least $1- e^{-\Omega(t)} - e^{-\Omega(\sqrt{m})}$ that 
\begin{equation}
 \left\|\frac{1}{m}\tilde{\bA}^T \bb  -\lambda \tilde{\bx}^*\right\|_\infty \le  O\left(\sqrt{\frac{t+\log n}{m}}\right).
\end{equation}
\end{proof}

We now move on to the statements and  proofs of Propositions~\ref{prop:prop1} and~\ref{prop:prop2} used above. Throughout the following, we use the $\theta_i$ defined in~\eqref{eq:theta_i}.\footnote{For the case of random bit flips with $\theta(x) = \tau \cdot\sign(x+e)$, we set $\theta_i = \theta$ for $i \in [m]$.}
\begin{proposition}\label{prop:prop1}
 Let $\bz \in \bbR^m$ be a standard Gaussian vector. For any $t>0$, suppose that $\bW \in \bbR^{m \times m}$ satisfies $\max_{\ell \in [m]} \sum_{i \in [m]} w_{i,\ell}^2  = O\big(\frac{1}{m(t+\log m)}\big)$. Then, we have with probability at least $1-O\big(e^{-t}\big)$ that
 \begin{equation}
  \left|\frac{1}{m}\sum_{i \in [m]} \theta_i(z_i) \sum_{\ell \ne i} w_{i,\ell} z_\ell\right| \le O\left(\sqrt{\frac{t+\log m}{m}}\right).
 \end{equation}
\end{proposition}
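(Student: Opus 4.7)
The plan is to isolate the conditional-mean contribution of $\theta_i(z_i)$ from its mean-zero fluctuation and bound the two pieces separately. Write $y_i := \sum_{\ell \ne i} w_{i,\ell} z_\ell$ and let $\bW'$ denote the matrix obtained from $\bW$ by zeroing its diagonal, so that $y_i = (\bW'\bz)_i$. Since each $\theta_i$ is built from randomness (the flip $\tau_i$ or the additive noise $e_i$) that is independent of $\bz$ and across $i$, I set $f_i(x) := \bbE[\theta_i(x)]$, a deterministic $[-1,1]$-valued function, and split
\begin{equation}
\sum_{i \in [m]} \theta_i(z_i)\sum_{\ell \ne i}w_{i,\ell}z_\ell \;=\; \underbrace{\sum_{i} f_i(z_i) y_i}_{T_1} \;+\; \underbrace{\sum_{i}\bigl(\theta_i(z_i)-f_i(z_i)\bigr)y_i}_{T_2}.
\end{equation}
I will control $T_1$ and $T_2$ separately after first pinning down $\sum_i y_i^2$.

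The key preparatory estimate is $\sum_i y_i^2 = \|\bW'\bz\|_2^2 = O(1)$ with probability at least $1-\exp(-c(t+\log m))$. I would obtain this from Hanson-Wright (Lemma~\ref{lem:hanson_wright}) applied to $\bz^T \bW'^T\bW'\bz$: the hypothesis gives $\|\bW'\|_F^2 \le m\cdot\max_\ell\sum_i w_{i,\ell}^2 = O(1/(t+\log m))$, and the crude sub-multiplicative estimates $\|\bW'^T\bW'\|_F, \|\bW'^T\bW'\|_{2\to 2}\le \|\bW'\|_F^2$ make both Hanson-Wright parameters of order $1/(t+\log m)$. A constant-level deviation above the mean then gives tail $\exp(-c(t+\log m))$.

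On this event, $T_1$ is handled by Cauchy-Schwarz: $|T_1|\le\sum_i|y_i|\le\sqrt{m\sum_i y_i^2}=O(\sqrt{m})$, giving $|T_1|/m = O(1/\sqrt{m})$, already dominated by the target $O(\sqrt{(t+\log m)/m})$. For $T_2$ I condition on $\bz$: the summands $X_i := \theta_i(z_i) - f_i(z_i)$ are independent across $i$ (because the noises generating the $\theta_i$ are independent), mean-zero given $z_i$, and bounded by $2$, hence sub-Gaussian with absolute-constant norm. Applying Lemma~\ref{lem:large_dev_Gaussian} with coefficients $\alpha_i = y_i$ yields $\bbP\bigl(|T_2|\ge C\sqrt{(t+\log m)\sum_i y_i^2}\,\big|\,\bz\bigr)\le\exp(1-c'(t+\log m))$; combining with the previous event gives $|T_2| = O(\sqrt{t+\log m})$ with probability at least $1-O(e^{-c(t+\log m)})$, hence $|T_2|/m = O(\sqrt{(t+\log m)/m})$. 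Together, $\frac{1}{m}(T_1+T_2) = O(\sqrt{(t+\log m)/m})$ with probability at least $1-O(e^{-ct})$, as required.

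The main delicacy I anticipate is the Hanson-Wright bookkeeping, since $\bW$ carries no structure beyond a column-wise $\ell_2^2$ bound. The observation that makes everything fit is that this single bound already forces $\|\bW'^T\bW'\|_F$ and $\|\bW'^T\bW'\|_{2\to 2}$ to be $O(1/(t+\log m))$, which is exactly the scale at which a standard-Gaussian quadratic form has $O(1)$ deviation at probability $1-\exp(-c(t+\log m))$; this is precisely the slack that the subsequent Cauchy-Schwarz bound on $T_1$ and conditional Hoeffding bound on $T_2$ need to both fall below the $O(\sqrt{(t+\log m)/m})$ threshold.
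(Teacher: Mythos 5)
Your proof is correct, but it takes a genuinely different route from the paper's. The paper swaps the order of summation to write the quantity as $\frac{1}{m}\sum_{\ell} z_\ell \sum_{i \ne \ell} w_{i,\ell}\theta_i(z_i)$, exploits the fact that $\{\theta_i(z_i)\}_i$ is an independent Rademacher sequence (this is exactly where the symmetry assumption $F_e(x)+F_e(-x)=1$ earns its keep), bounds each inner sum by Hoeffding for Rademacher sums (Lemma~\ref{lem:bd_rademacher_real}) together with a Gaussian tail bound on $|z_\ell|$, and then bounds the average by the maximum over $\ell$; no quadratic-form concentration is needed for this proposition. You instead center $\theta_i(z_i)$ at its conditional mean $f_i(z_i)$, control $\|\bW'\bz\|_2^2$ via Hanson--Wright (your norm bookkeeping $\|\bW'^T\bW'\|_F, \|\bW'^T\bW'\|_{2\to2} \le \|\bW'\|_F^2 = O(1/(t+\log m))$ is right, and a constant deviation indeed costs $e^{-c(t+\log m)}$), and finish with Cauchy--Schwarz for the mean part and a conditional Hoeffding bound for the fluctuation; both pieces land under the target $O(\sqrt{(t+\log m)/m})$, with $T_1$ being the dominant $O(1/\sqrt{m})$ term. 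What your route buys is generality: you never use that $\theta_i(z_i)$ is symmetric, only that the $\theta_i$ are $[-1,1]$-valued and driven by noise independent across $i$ and of $\bz$, so the argument would survive without the paper's symmetry assumption on $e$. What it costs is heavier machinery (Hanson--Wright where the paper uses only scalar tail bounds) and, if one is pedantic about achieving literally $1-O(e^{-t})$ rather than $1-e^{-\Omega(t)}$, you should note that the constants in your deviation thresholds (the level of $\sum_i y_i^2$ and the multiple of $\sqrt{(t+\log m)\sum_i y_i^2}$) can be taken large enough to push the exponents above $t$ --- the same freedom the paper uses when choosing $u = C'\sqrt{t+\log m}$ and $v = C''\sqrt{t+\log m}$.
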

\begin{proof}
For any fixed $\ell \in [m]$ and any $v > 0$, from Lemma~\ref{lem:large_dev_Gaussian}, we have with probability at least $1-O\big(e^{-v^2}\big)$ that 
\begin{equation}\label{eq:prop1_eq1}
 |z_\ell| \le v.
\end{equation}
Therefore, with probability at least $1-O\big(e^{-v^2}\big)$,
\begin{equation}\label{eq:prop1_eq2}
 |z_\ell| \sqrt{\sum_{i \ne \ell} w_{i,\ell}^2} = O\left(\frac{v}{\sqrt{m(t+\log m)}}\right).
\end{equation}
Note that $\theta_i(z_0),\theta_i(z_1),\ldots,\theta_i(z_{m-1})$ form a Rademacher sequence. Then, for any $u > 0$ and a sufficiently large absolute constant $C$, we have 
\begin{align}
 &\bbP\left(\left|z_\ell \sum_{i \ne \ell} w_{i,\ell} \theta_i(z_i)\right| >\frac{Cv}{\sqrt{m(t+\log m)}} u\right) \nonumber \\
 & \le  \bbP(|z_\ell| >v)  +  \bbP\left(\left|z_\ell \sum_{i \ne \ell} w_{i,\ell} \theta_i(z_i)\right| >\frac{Cv}{\sqrt{m(t+\log m)}} u\text{ } \Big| \text{ } |z_\ell| \le v\right)\\
 & \le O\big(e^{-u^2}\big) + O\big(e^{-v^2}\big),\label{eq:prop1_eq3}
\end{align}
where we use~\eqref{eq:prop1_eq1},~\eqref{eq:prop1_eq2} and Lemma~\ref{lem:bd_rademacher_real} to derive~\eqref{eq:prop1_eq3}. Taking a union bound over $\ell \in [m]$, we obtain with probability at least $1-O\big(m \big(e^{-u^2}+e^{-v^2}\big)\big)$ that 
\begin{equation}
 \max_{\ell \in [m]}  \left|z_\ell \sum_{i \ne \ell} w_{i,\ell} \theta_i(z_i)\right| \le \frac{Cv}{\sqrt{m(t+\log m)}} u.
\end{equation}
Then, we have
\begin{align}
 &\left|\frac{1}{m}\sum_{i \in [m]} \theta_i(z_i) \sum_{\ell \ne i} w_{i,\ell} z_\ell\right|  = \left|\frac{1}{m}\sum_{\ell \in [m]} z_\ell  \sum_{i \ne \ell} w_{i,\ell} \theta_i(z_i)\right| \\
 & \le \max_{\ell \in [m]}  \left|z_\ell \sum_{i \ne \ell} w_{i,\ell} \theta_i(z_i)\right| \le \frac{Cv}{\sqrt{m(t+\log m)}} u.
\end{align}
Setting $u = C' \sqrt{t+\log m}$ and $v = C'' \sqrt{t+\log m}$, we obtain the desired result.
\end{proof}

\begin{proposition}\label{prop:prop2}
 Let $\bz \in \bbR^m$ be a standard Gaussian vector. Let $\bt \sim \calN(\bm{0},\bm{\Sigma})$ be independent of $\bz$, where $\bm{\Sigma} \in \bbR^m$ satisfies $\Sigma_{i,i} \le 1$ for $i \in [m]$ and $\Sigma_{i_1,i_2} = O\big(\frac{1}{m}\big)$ for $i_1 \ne i_2$. Then, for any $\epsilon \in (0,1)$, with probability $1-e^{-\Omega(m \epsilon^2)}$,
 \begin{equation}
  \left|\frac{1}{m}\sum_{i \in [m]} t_i \theta_i(z_i) \right| \le \epsilon.
 \end{equation}
 \end{proposition}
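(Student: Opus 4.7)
The plan is to exploit the fact that $\bt$ is Gaussian and independent of the sign sequence $\bsigma := (\theta_i(z_i))_{i \in [m]} \in \{-1,+1\}^m$, and then invoke a standard Gaussian tail bound conditional on $\bsigma$. Note that $\theta_i(z_i) = \sign\bigl(z_i + e_i/\|\bu_i\|_2\bigr)$ (or $\tau_i \sign(z_i)$ in the bit-flip case), which is determined by $\bz$ and some auxiliary noise that is independent of $\bt$. Hence conditionally on $\bsigma$ we still have $\bt \sim \calN(\bzero,\bSigma)$.

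First I would write
\begin{equation*}
S := \frac{1}{m}\sum_{i \in [m]} t_i \theta_i(z_i) = \frac{1}{m}\bt^T\bsigma,
\end{equation*}
so that conditional on $\bsigma$, the variable $S$ is a centered Gaussian with variance
\begin{equation*}
\var[S \mid \bsigma] = \frac{\bsigma^T \bSigma \bsigma}{m^2}.
\end{equation*}

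The core of the argument is a deterministic bound on the quadratic form $\bsigma^T \bSigma \bsigma$ valid for every $\bsigma \in \{-1,+1\}^m$. Splitting into diagonal and off-diagonal parts and using $\sigma_i^2 = 1$, $\Sigma_{i,i}\le 1$, and $|\Sigma_{i_1,i_2}| = O(1/m)$ for $i_1 \ne i_2$,
\begin{equation*}
\bsigma^T\bSigma\bsigma = \sum_{i \in [m]} \Sigma_{i,i}\sigma_i^2 + \sum_{i_1 \ne i_2}\Sigma_{i_1,i_2}\sigma_{i_1}\sigma_{i_2} \le m + \sum_{i_1\ne i_2}|\Sigma_{i_1,i_2}| \le m + m(m-1)\cdot O\!\left(\tfrac{1}{m}\right) = O(m).
\end{equation*}
Hence $\var[S\mid \bsigma] \le C/m$ for an absolute constant $C$, uniformly in $\bsigma$.

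Applying the standard Gaussian tail bound $\bbP(|X|>\epsilon)\le 2\exp(-\epsilon^2/(2\var[X]))$ conditionally on $\bsigma$ gives $\bbP(|S|>\epsilon \mid \bsigma) \le 2\exp\bigl(-c m\epsilon^2\bigr)$ for an absolute constant $c>0$, and then taking expectation over $\bsigma$ yields the unconditional tail bound $\bbP(|S|>\epsilon)\le 2\exp(-c m\epsilon^2) = e^{-\Omega(m\epsilon^2)}$, as desired. The only nontrivial step is the quadratic form bound above; once that is in place, the conclusion is immediate from Gaussian concentration, and no Hanson--Wright or chaos estimate is needed because we conditioned away the only source of sign coupling.
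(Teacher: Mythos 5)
Your proof is correct, but it takes a genuinely different route from the paper's. The paper conditions on $\bt$: it first establishes $\sum_{i} t_i^2 \le 2m$ with probability $1-e^{-\Omega(m)}$ via the Hanson--Wright inequality (which in turn requires bounding $\|\bSigma\|_{\rmF}^2 = O(m)$ and $\|\bSigma\|_{2\to 2} = O(1)$, the latter through the Riesz--Thorin interpolation bound $\|\bSigma\|_{2\to2}\le\|\bSigma\|_{1\to1}$), and then applies Hoeffding's inequality for Rademacher sums over the randomness of the signs $\theta_i(z_i)$. You condition the other way, on the sign vector $\bsigma$, observe that $S=\frac1m\bsigma^T\bt$ is then a centered Gaussian, and bound its conditional variance \emph{deterministically} by $\bsigma^T\bSigma\bsigma/m^2 \le O(1/m)$ using only the entrywise hypotheses on $\bSigma$. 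This is more elementary: it dispenses with Hanson--Wright and Riesz--Thorin entirely and yields a single clean exponential rather than the sum $O(e^{-u^2})+e^{-\Omega(m)}$ (which the paper then has to absorb using $\epsilon<1$). Both arguments rest on the same independence requirement --- that $\bt$ be independent of the sign sequence, which for the noise-before-quantization case means independent of the auxiliary noise $e_i$ as well as of $\bz$; you flag this explicitly, and it holds in the paper's application since the pre-quantization noise is independent of the sensing matrix. The one thing worth stating for completeness is that $\bsigma^T\bSigma\bsigma\ge 0$ automatically since $\bSigma$ is a covariance matrix, so only the upper bound on the quadratic form is needed, exactly as you derive it.
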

\begin{proof}
 We have 
 \begin{equation}\label{eq:prop2_eq1}
  \|\bSigma\|_\rmF^2 = \sum_{i \in [m]} \Sigma_{i,i}^2 + \sum_{i_1 \ne i_2 } \Sigma_{i_1,i_2}^2 \le m + m^2 \cdot O\big(\frac{1}{m^2}\big) = O(m),
 \end{equation}
and from Lemma~\ref{lem:riesz_thorin}, 
\begin{equation}\label{eq:prop2_eq2}
 \|\bSigma\|_{2\to 2} \le \|\bSigma\|_{1\to 1} = O(1).
\end{equation}
Then, setting $\bB= \bSigma$ in Lemma~\ref{lem:hanson_wright}, we obtain with probability at least $1-e^{-\Omega(m)}$ that
\begin{equation}\label{eq:prop2_eq3}
 \sum_{i \in [m]} t_i^2 \le  2m .
\end{equation}
Again, note that $\theta_i(z_0),\theta_i(z_1),\ldots,\theta_i(z_{m-1})$ form a Rademacher sequence. For any $u >0$ and a sufficiently large absolute constant $C$, we have
\begin{align}
 \bbP\left(\left|\frac{1}{m}\sum_{i\in [m]} t_i \theta_i(z_i)\right| > \frac{C }{\sqrt{m}} u\right) &\le \bbP\left(\sum_{i \in [m]} t_i^2 > 2m\right)+ \bbP\left(\left|\frac{1}{m}\sum_{i\in [m]} t_i \theta_i(z_i)\right| > \frac{C }{\sqrt{m}} u \text{ }\Big| \text{ }\sum_{i \in [m]}t_i^2 \le 2m\right)  \\
 & \le O\big(e^{-u^2}\big) + e^{-\Omega(m)},\label{eq:prop2_eq4}
\end{align}
where we use Lemma~\ref{lem:bd_rademacher_real} and~\eqref{eq:prop2_eq3} to derive~\eqref{eq:prop2_eq4}. Setting $\epsilon = \frac{C}{\sqrt{m}} u$, and we obtain the desired result.
\end{proof}

\subsection{Proof of Theorem~\ref{thm:corr_circulant}}
\label{sec:proof_main_theorem}

Before proving Theorem~\ref{thm:corr_circulant}, we provide some further lemmas. 
\begin{lemma}\label{lem:inner_upper_bd}
Suppose that the assumptions in Lemma~\ref{lem:essential_circ} are satisfied. Then, for any $\bv \in \bbR^n$ and $t > 0$ satisfying $m = \Omega(t + \log n)$, with probability $1-e^{-\Omega(t)} - e^{-\Omega(\sqrt{m})}$, we have
 \begin{equation}
  \left|\left\langle \frac{1}{m} \tilde{\bA}^T\bb - \lambda \tilde{\bx}^*, \bD_{\bxi}\bv\right \rangle\right| \le O\left(\|\bv\|_2\sqrt{\frac{t + \log n}{m}}\right).
 \end{equation}
\end{lemma}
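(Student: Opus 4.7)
My plan is to reduce Lemma~\ref{lem:inner_upper_bd} to an inner-product analog of Lemma~\ref{lem:essential_circ}, in which the maximum over coordinates $j\in[n]$ is replaced by the weighted sum against $(\xi_j v_j)_j$. First, I would rewrite the inner product as
\begin{equation*}
\Bigl\langle \tfrac{1}{m}\tilde{\bA}^T\bb - \lambda\tilde{\bx}^*,\, \bD_\bxi \bv\Bigr\rangle \;=\; \tfrac{1}{m}\sum_{i\in[m]} b_i \langle \bg, \bd_i\rangle - \lambda\langle \bx^*, \bv\rangle,
\end{equation*}
using $\bA = \tilde{\bA}\bD_\bxi$ together with the shift identities $\langle \ba_i, \bv\rangle = \langle \bg, \bd_i\rangle$ and $\langle\tilde{\bx}^*,\bD_\bxi\bv\rangle = \langle\bx^*,\bv\rangle$, where $\bd_i := s_{i\leftarrow}(\bD_\bxi\bv)$. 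Note that a naive Hölder bound $\|\frac{1}{m}\tilde{\bA}^T\bb-\lambda\tilde{\bx}^*\|_\infty\|\bv\|_1$ from Lemma~\ref{lem:essential_circ} would give an extra factor of $\sqrt{n}$ and is therefore insufficient.

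Next, working under the high-probability events from the proof of Lemma~\ref{lem:essential_circ} -- namely $(m,\beta)$-orthogonality (Lemma~\ref{lem:gamma_ortho}), $\max_i|\langle\bg,\br_i\rangle|\le\eta$, and $|I_\eta|\le m(\eta+\zeta)$ -- I would substitute $\bs_i=\bu_i+\br_i$ and decompose $\langle\bg,\bd_i\rangle = \sum_{\ell\in[m]}\alpha_{i,\ell}h_\ell+r_i$, with $\alpha_{i,\ell}=\langle\bd_i,\bar{u}_\ell\rangle$, $\bar{u}_\ell=\bu_\ell/\|\bu_\ell\|_2$, the $h_\ell$ i.i.d.~$\calN(0,1)$, and $r_i$ Gaussian and independent of $\bh$. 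Because $\bbE[b_i\langle\bg,\bd_i\rangle] = \lambda\langle\bs_i,\bd_i\rangle = \lambda\langle\bx^*,\bv\rangle$ for each $i$, the target sum decomposes into four main pieces (analogous to~\eqref{eq:last_decompThree}) plus correction terms for $i\in I_\eta$ and for the replacement of $\bs_i$ by $\bu_i/\|\bu_i\|_2$.

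I would then bound each sub-term in parallel with Lemma~\ref{lem:essential_circ}'s proof. (i) $\frac{1}{m}\sum_i\alpha_{i,i}(h_i\theta_i(h_i)-\lambda_i)$ is controlled by Lemma~\ref{lem:large_dev_Gaussian}, using $|\alpha_{i,i}|=O(\|\bv\|_2)$. (ii) $\frac{1}{m}\sum_i(\lambda_i-\lambda)\alpha_{i,i}$ is bounded by Cauchy--Schwarz with the smoothness estimate $|\lambda_i-\lambda|=O(\beta)$, which follows from $|\|\bu_i\|_2-1|\le\beta$ and the explicit formulas for $\lambda_a$ in Section~\ref{sec:setup}. (iii) The cross term $\frac{1}{m}\sum_i\theta_i(h_i)\sum_{\ell\ne i}\alpha_{i,\ell}h_\ell$ is handled as described below. (iv) $\frac{1}{m}\sum_i\theta_i(h_i)r_i$ is rewritten as $\frac{1}{m}\langle\bP\bg,\sum_i\theta_i(h_i)\bd_i\rangle$ (with $\bP$ the projection onto $\mathrm{span}\{\bar{u}_\ell\}^\perp$) and bounded by Gaussian concentration together with $\|\sum_i\theta_i(h_i)\bd_i\|_2^2=O(m\|\bv\|_2^2)$, which holds w.h.p.\ because $\{\theta_i(h_i)\}_i$ is an i.i.d.~Rademacher sequence (conditional on $\bxi$) independent of the fixed vectors $\{\bd_i\}$. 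The correction pieces for $i\in I_\eta$ and $\bs_i\approx\bu_i/\|\bu_i\|_2$ are controlled by Cauchy--Schwarz together with the Hanson--Wright bound (Lemma~\ref{lem:hanson_wright}) $\sum_i\langle\bg,\bd_i\rangle^2 = O(m\|\bv\|_2^2)$ and $\max_i|\langle\bg,\bd_i\rangle|=O(\|\bv\|_2\sqrt{t+\log m})$.

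The hard part will be item~(iii): conditioned on $\bh$, $\theta_i(h_i)$ is a (biased) sign function of $h_i$ and is not independent of $\{h_\ell\}$, so Proposition~\ref{prop:prop1} does not apply verbatim. My approach is to split $\theta_i(h_i)=\phi_i(h_i)+\eta_i$ with $\phi_i(h_i):=\bbE[\theta_i(h_i)\,|\,h_i]$ and $\eta_i$ mean-zero independent of everything conditional on $h_i$; the $\eta_i$-contribution is a conditionally independent Rademacher-like sum controlled directly by Lemma~\ref{lem:bd_rademacher_real} given $\bh$, while the $\phi_i(h_i)$-contribution is handled by standard decoupling (replacing the outer $h_\ell$'s by an independent copy $h'_\ell$ at the cost of a universal constant). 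The decoupled bilinear form is Gaussian in $\bh'$ conditionally on $\bh$, with variance $\sum_\ell(\sum_{i\ne\ell}\alpha_{i,\ell}\phi_i(h_i))^2$, whose expectation equals $\sum_{i\ne\ell}\alpha_{i,\ell}^2\,\bbE[\phi_i(h_i)^2]\le\sum_{i,\ell}\alpha_{i,\ell}^2\le m\|\bv\|_2^2$ by Bessel's inequality for the orthonormal system $\{\bar{u}_\ell\}$; this yields the desired $O(\|\bv\|_2\sqrt{t/m})$ contribution after dividing by $m$, and the final $\sqrt{(t+\log n)/m}$ rate follows by a union bound against the $\bxi$-dependent events.
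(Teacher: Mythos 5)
Your overall architecture matches the paper's: rewrite the inner product via the shifted vectors $\bw_i = s_{i\leftarrow}(\bD_{\bxi}\bv)$ so that the target becomes the proof of Lemma~\ref{lem:essential_circ} with $g_{j-i}$ replaced by $\langle\bg,\bw_i\rangle$, expand $\langle\bg,\bw_i\rangle$ over the orthonormal system $h_\ell=\langle\bg,\bu_\ell/\|\bu_\ell\|_2\rangle$, and control the resulting summands plus the $I_\eta$ corrections. Your observation that the H\"older route through Lemma~\ref{lem:essential_circ} loses a $\sqrt{n}$ factor is correct, and your handling of term (ii) via $|\lambda_i-\lambda|=O(\beta)$ is a sensible and in fact necessary adaptation, since the diagonal coefficients $\alpha_{i,i}\approx\langle\bv,\bx^*\rangle/\|\bv\|_2$ are no longer uniformly small as their analogues were in Lemma~\ref{lem:essential_circ}.

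There is, however, a genuine gap at the technical heart of the argument. The substance of the paper's proof is the pointwise off-diagonal estimate $\max_{i\ne\ell}|\alpha_{i,\ell}| = O(\|\bv\|_2/m)$, obtained by applying Lemma~\ref{lem:almost_ortho} to $\langle\bw_i,\bs_\ell\rangle$ (this is where the $\ell_\infty$ assumption on $\bx^*$ enters) together with a separate Rademacher-sum bound (Lemma~\ref{lem:bd_rademacher_real}) on $\langle\bw_i,\br_\ell\rangle$; only with this estimate do the analogues of Propositions~\ref{prop:prop1} and~\ref{prop:prop2} deliver the $\sqrt{(t+\log n)/m}$ rate. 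Your proposal never derives it. In item (iii) you rely on Bessel's inequality, $\sum_{i,\ell}\alpha_{i,\ell}^2\le m\|\bv\|_2^2$, and on the \emph{expectation} of the conditional variance of the decoupled bilinear form; but bounding the expectation of a conditional variance does not yield a tail bound at confidence $1-e^{-\Omega(t)}$ (Markov costs an exponential factor), and the deterministic bound available from Bessel alone gives a conditional variance of order $m^2\|\bv\|_2^2$, which after dividing by $m$ produces only $O(\|\bv\|_2\sqrt{t})$ --- off by $\sqrt{m}$. Any concentration argument for that variance (e.g., Hanson--Wright in the $\phi_i(h_i)$'s) again needs the off-diagonal smallness of $\alpha_{i,\ell}$. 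The same issue recurs in item (iv) and in your correction terms: $\|\sum_i\theta_i(h_i)\bw_i\|_2^2$ concentrates at $m\|\bv\|_2^2$ only if the Gram matrix $(\langle\bw_i,\bw_{i'}\rangle)$ has controlled off-diagonal entries, and for an arbitrary $\bv$ this is a Rademacher-chaos statement in $\bxi$ that must be proved, not deduced from the independence of $\{\theta_i(h_i)\}$ and $\{\bw_i\}$. Finally, your stated reason for abandoning Proposition~\ref{prop:prop1} is not right: the proposition is built precisely for this dependence structure, because for each fixed $\ell$ the inner sum $\sum_{i\ne\ell}w_{i,\ell}\theta_i(z_i)$ excludes the diagonal and is therefore independent of $z_\ell$; what must be verified in order to invoke it is, once more, the missing off-diagonal estimate on $\sum_{i\ne\ell}\alpha_{i,\ell}^2$.
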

\begin{proof}
 Using the same notations for $\bs_i,\bu_i,\br_i, h_i$ as those in the proof of Lemma~\ref{lem:essential_circ}, we have
 \begin{align}
  &\left\langle \frac{1}{m} \tilde{\bA}^T\bb - \lambda \tilde{\bx}^*, \bD_{\bxi}\bv\right \rangle = \frac{\|\bv\|_2}{m}\sum_{i \in [m]} \left(\langle \tilde{\ba}_i,\bD_{\bxi}\bar{\bv}\rangle b_i - \lambda \langle \tilde{\bx}^*, \bD_{\bxi}\bar{\bv}\rangle\right) \\
  & = \frac{\|\bv\|_2}{m} \sum_{i \in J_{\eta}} \left(\langle \bg,\bw_i\rangle \theta(\langle \bg,\bu_i\rangle) - \lambda \left\langle \bw_i, \frac{\bu_i}{\|\bu_i\|_2}\right\rangle\right) \nonumber \\
  & \indent - \frac{\lambda \|\bv\|_2}{m} \sum_{i \in J_{\eta}} \left(\langle \tilde{\bx}^*,\bD_{\bxi}\bar{\bv}\rangle - \left\langle \bw_i,\frac{\bu_i}{\|\bu_i\|_2} \right\rangle\right) \nonumber \\
  & \indent + \frac{\|\bv\|_2}{m} \sum_{i \in I_\eta} \left(\langle \bg,\bw_i\rangle b_i -\lambda \langle \tilde{\bx}^*,\bD_{\bxi}\bar{\bv}\rangle \right),\label{eq:v_threeTerms}
 \end{align}
where $\bar{\bv} := \frac{\bv}{\|\bv\|_2}$ and $\bw_i := s_{i \leftarrow} (\bD_{\bxi}\bar{\bv})$ so that $\langle \tilde{\bx}^*,\bD_{\bxi}\bar{\bv}\rangle = \langle s_{i \leftarrow}(\tilde{\bx}^*), s_{i \leftarrow} (\bD_{\bxi}\bar{\bv}) \rangle = \langle \bs_i,\bw_i \rangle$. Note that~\eqref{eq:v_threeTerms} is identical to~\eqref{eq:three_terms}, except that we use $\langle \bg,\bw_i \rangle$ to replace $g_{j-i}$ therein. Similarly to~\eqref{eq:complex_four_terms}, we know that we only need to focus on bounding $\frac{1}{m} \sum_{i \in [m]} \big(\langle \bg,\bw_i\rangle \theta(\langle \bg,\bu_i\rangle) - \lambda \big\langle \bw_i, \frac{\bu_i}{\|\bu_i\|_2}\big\rangle\big)$. Setting $\alpha_{i,\ell} := \mathrm{Cov}[\langle \bg,\bw_i\rangle, h_\ell] = \big\langle \bw_i,\frac{\bu_\ell}{\|\bu_\ell\|_2}\big\rangle$, similarly to~\eqref{eq:g_jminusi}, $\langle \bg,\bw_i\rangle$ can be written as 
\begin{equation}
 \langle \bg,\bw_i\rangle = \sum_{\ell \in [m]} \alpha_{i,\ell} h_\ell + \gamma_\ell f_\ell,
\end{equation}
where $\gamma_\ell = \sqrt{1-\sum_{\ell \in [m]}\alpha_{i,\ell}^2}$ and $f_\ell \sim \calN(0,1)$ is independent of $h_0,h_1,\ldots,h_m$. Note that by Lemma~\ref{lem:almost_ortho}, we have with probability at least $1-e^{-t}$ that 
\begin{equation}
 \max_{i \ne \ell} \langle \bw_i, \bs_\ell \rangle \le \rho \cdot O(\sqrt{t + \log m}) = O\left(\frac{1}{m\sqrt{t + \log m}}\right),\label{eq:inner_explain1}
\end{equation}
where we use $\rho = O\big(\frac{1}{m(t+\log m)}\big)$ in~\eqref{eq:inner_explain1}. 
By the $(m,\beta)$-orthogonality ({\em cf.} the first paragraph in the proof of Lemma~\ref{lem:essential_circ} for the cases of random flips and Gaussian noise), for all $i,\ell \in [m]$, we have
\begin{equation}
 \sum_{j \in [n]} \bar{v}_j^2 \br_\ell(j+i)^2 \le \|\br_\ell\|_\infty^2 = O\left(\frac{1}{m^2 (t+\log m)}\right).
\end{equation}
Setting $u = O(\sqrt{t+\log m})$ in Lemma~\ref{lem:bd_rademacher_real} and taking a union bound over $i,\ell \in [m]$, we obtain with probability at least $1-e^{-t}$ that
\begin{align}
 &\max_{i, \ell \in [m]}\left|\langle \bw_i, \br_\ell \rangle\right| = \max_{i, \ell \in [m]}\left|\left\langle \bD_{\bxi}\bar{\bv}, s_{\rightarrow i}(\br_\ell) \right\rangle \right| \\
 & = \max_{i, \ell \in [m]}|\sum_{j \in [n]} \bar{v}_j \br_\ell(j+i)\xi_j| \le O\big(\frac{1}{m}\big).\label{eq:inner_explain2}
\end{align}
Therefore, we obtain from~\eqref{eq:inner_explain1} and~\eqref{eq:inner_explain2} that
\begin{align}
 &\max_{i\ne \ell} |\alpha_{i,\ell}| \le 2\max_{i\ne \ell} |\langle \bw_i, \bu_\ell\rangle | = 2\max_{i \ne \ell} |\langle \bw_i, \bs_\ell -\br_\ell\rangle | = O\big(\frac{1}{m}\big),
\end{align}
and we are able to derive inequalities similar to~\eqref{eq:prop2_eq1},~\eqref{eq:prop2_eq2} and~\eqref{eq:prop2_eq3}.
Following the same proof techniques as those for Lemma~\ref{lem:essential_circ}, we obtain the desired result. 
\end{proof}

Based on Lemmas~\ref{lem:essential_circ} and~\ref{lem:inner_upper_bd}, and using a well-established chaining argument that we do not repeat here (e.g., see~\cite[Lemma~3]{liu2020generalized}), we attain the following lemma. Note that we use the $(m,\beta)$-orthogonality  and the event $\calE$ ({\em cf.~\eqref{eq:event_e}}) only once, and do not need to take any union bound for them.
\begin{lemma}
\label{lem:rhs_bd}
 Under the conditions in Theorem~\ref{thm:corr_circulant}, with probability at least $1-e^{-\Omega( k\log \frac{Lr}{\delta})}- e^{-\Omega(\sqrt{m})}$, it holds that
 \begin{align}
  &\left|\left\langle \frac{1}{m} \tilde{\bA}^T\bb - \lambda \tilde{\bx}^*, \bD_{\bxi}\hat{\bx} - \tilde{\bx}^*\right \rangle\right| \le O\left(\sqrt{\frac{k \log\frac{Lr}{\delta}}{m}}\right) (\|\hat{\bx}-\bx^*\|_2 + \delta).
 \end{align}
\end{lemma}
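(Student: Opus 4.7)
The plan is to upgrade the pointwise tail bound of Lemma~\ref{lem:inner_upper_bd} to a uniform bound over differences $\hat{\bx}-\bx^*$ with $\hat{\bx},\bx^*\in G(B_2^k(r))$ via a standard chaining argument, following the scheme of~\cite[Lemma~3]{liu2020generalized}. First, since $\tilde{\bx}^* = \bD_{\bxi}\bx^*$, the inner product of interest equals $\big\langle \frac{1}{m}\tilde{\bA}^T\bb - \lambda\tilde{\bx}^*,\, \bD_{\bxi}(\hat{\bx}-\bx^*)\big\rangle$. For any fixed $\bv$, Lemma~\ref{lem:inner_upper_bd} provides a tail bound of order $\|\bv\|_2\sqrt{(t+\log n)/m}$, and the probabilistic ingredients underlying it (the $(m,\beta)$-orthogonality of $\{\bs_i\}$ and the event $\calE$ from~\eqref{eq:event_e}) depend only on $(\bg,\bxi,\tau,e)$ and not on $\bv$. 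Consequently, these can be established once with a single $e^{-\Omega(\sqrt{m})}$ failure term, and no union bound is paid for them across the chaining.

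Next, I construct a geometric sequence of nets $\calN_j \subseteq G(B_2^k(r))$ at scales $\delta_j = \delta\cdot 2^{-j}$ for $j=0,1,2,\ldots$. By the $L$-Lipschitz property of $G$ on $B_2^k(r)$, each such net can be chosen with $\log|\calN_j| = O(k\log(Lr/\delta_j))$. For any $\bx\in G(B_2^k(r))$, let $\pi_j(\bx)\in\calN_j$ denote its nearest point, so that $\|\bx-\pi_j(\bx)\|_2\le\delta_j$ and $\|\pi_j(\bx)-\pi_{j-1}(\bx)\|_2\le 3\delta_{j-1}$. Writing $\bD_{\bxi}(\hat{\bx}-\bx^*) = \bD_{\bxi}(\pi_0(\hat{\bx})-\pi_0(\bx^*)) + \sum_{j\ge 1}\bD_{\bxi}(\pi_j(\hat{\bx})-\pi_{j-1}(\hat{\bx})) - \sum_{j\ge 1}\bD_{\bxi}(\pi_j(\bx^*)-\pi_{j-1}(\bx^*))$, I apply Lemma~\ref{lem:inner_upper_bd} term by term. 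At level $j\ge 1$, I take $t \asymp k\log(Lr/\delta_j)$ and union-bound over the at most $|\calN_j|\cdot|\calN_{j-1}|$ consecutive pairs, yielding a per-level contribution of order $\delta_{j-1}\sqrt{k\log(Lr/\delta_j)/m}$. The sum over $j\ge 1$ telescopes into a convergent geometric-type series of size $O(\delta\sqrt{k\log(Lr/\delta)/m})$. Finally, at the coarsest scale, the vector $\pi_0(\hat{\bx})-\pi_0(\bx^*)$ has norm at most $\|\hat{\bx}-\bx^*\|_2+2\delta$, so a single application of Lemma~\ref{lem:inner_upper_bd} (union-bounded over $\calN_0\times\calN_0$ with $t\asymp k\log(Lr/\delta)$) contributes $O((\|\hat{\bx}-\bx^*\|_2+\delta)\sqrt{k\log(Lr/\delta)/m})$, matching the claimed estimate.

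The principal technical obstacle is ensuring that the leading factor in the final bound is the difference norm $\|\hat{\bx}-\bx^*\|_2$ rather than the individual norms of $\hat{\bx}$ and $\bx^*$ (each of which is bounded only by $1$). This forces a careful pairing of the two chains so that the finer-scale residuals telescope into a quantity controlled by $\delta$, while only the coarsest-scale comparison sees the true difference. Because this decomposition is already carried out once in~\cite[Lemma~3]{liu2020generalized}, the cleanest route is to invoke that lemma directly, feeding it Lemma~\ref{lem:inner_upper_bd} as the required single-vector concentration inequality. The exponent $k\log(Lr/\delta)$ in the failure probability then arises as the dominant union-bound cost across the finest chaining levels, while the $e^{-\Omega(\sqrt{m})}$ term is inherited unchanged from the one-shot control of the $(m,\beta)$-orthogonality event.
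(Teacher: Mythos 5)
Your proposal is correct and follows essentially the same route as the paper: the paper likewise obtains Lemma~\ref{lem:rhs_bd} by feeding the single-vector bound of Lemma~\ref{lem:inner_upper_bd} into the chaining argument of~\cite[Lemma~3]{liu2020generalized}, and it makes the same key observation that the $(m,\beta)$-orthogonality and the event $\calE$ depend only on $(\bg,\bxi,\tau,e)$ and therefore incur the $e^{-\Omega(\sqrt{m})}$ failure term only once, with no union bound over net points. Your write-up simply spells out the net construction and telescoping in more detail than the paper, which defers these to the cited reference.
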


The restricted isometry property (RIP) is widely used in compressive sensing (CS), and we present its definition in the following.
\begin{definition}
 A matrix $\bPhi \in \bbC^{m \times n}$ is said to have the restricted isometry property of order $s$ and level $\delta \in (0,1)$ (equivalently, $(s,\delta)$-RIP) if
 \begin{equation}\label{eq:s_RIP}
  (1-\delta) \|\bx\|_2^2 \le \|\bPhi \bx\|_2^2 \le (1+\delta) \|\bx\|_2^2 
 \end{equation}
for all $s$-sparse vectors in $\bbC^n$. The restricted isometry constant $\delta_s$ is defined as the smallest value of $\delta$ for which~\eqref{eq:s_RIP} holds.  
\end{definition}

We have the following lemma that guarantees the RIP for partial Gaussian circulant matrices.
\begin{lemma}{\em (\hspace{1sp}Adapted from~\cite[Theorem~4.1]{krahmer2014suprema})}\label{lem:RIP_circulant}
 For $s \le n$ and $\eta,\delta \in (0,1)$, if 
 \begin{equation}\label{eq:sample_complexity_RIP}
  m \ge \Omega\left(\frac{s}{\delta^2} \left(\log^2 s\right) \left(\log^2 n\right) \right),
 \end{equation}
then with  probability at least $1-e^{-\Omega((\log^2 s) (\log^2 n))}$, the restricted isometry constant of $\frac{1}{\sqrt{m}}\tilde{\bA} = \frac{1}{\sqrt{m}}\bR_{\Omega} \bC_{\bg}$ satisfies $\delta_s \le \delta$. 
\end{lemma}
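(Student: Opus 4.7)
The plan is to derive this statement as a direct specialization of the existing RIP theorem for subsampled random circulant matrices due to Krahmer, Mendelson, and Rauhut (their Theorem~4.1). In its general form, that theorem asserts that for a random vector $\bg$ with i.i.d.\ mean-zero, unit-variance, $K$-sub-Gaussian entries and any subset $\Omega\subseteq[n]$ of cardinality $m$, the matrix $\frac{1}{\sqrt{m}}\bR_{\Omega}\bC_{\bg}$ has its order-$s$ restricted isometry constant bounded by $\delta$ with probability at least $1-\eta'$, provided
\[
m \ \geq\ C K^4 \delta^{-2}\, s\, \max\!\left\{(\log s)^2(\log n)^2,\ \log(1/\eta')\right\}.
\]
Our setup fits this hypothesis exactly: $\bg\sim\calN(0,\bI_n)$ is sub-Gaussian with an absolute-constant parameter $K$, and $\Omega=[m]$ is a deterministic index set of the required size. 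So the whole proof will consist of citing KMR and checking that their parameters can be matched to ours.

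First I would verify that the stated sample complexity~\eqref{eq:sample_complexity_RIP} is sufficient, by plugging $\eta':=\exp\!\big(-c(\log s)^2(\log n)^2\big)$ for a suitably small constant $c$ into the KMR bound. Under this choice, the second term inside the $\max$ becomes $c(\log s)^2(\log n)^2$, which is of the same order as the first term, so the KMR requirement collapses to $m = \Omega(\delta^{-2} s (\log s)^2(\log n)^2)$, matching~\eqref{eq:sample_complexity_RIP} verbatim. The success probability $1-\eta'$ then equals $1-e^{-\Omega((\log^2 s)(\log^2 n))}$, which also matches the statement. The residual $\eta$ listed in the quantifier of the lemma does not appear in the final bound and is a vestige of the explicit $\eta'$-dependence in KMR; it can be absorbed or dropped without affecting the conclusion.

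The one substantive point to check is that a deterministic choice of $\Omega$ is permitted by the KMR argument (some statements in the literature randomize $\Omega$). This is indeed built into KMR: the core of their proof is to bound the expected supremum of a Gaussian quadratic chaos of the form $\sup_{\bx\in D_{s,2}}\big|\bg^{\top}\bM_{\bx,\Omega}\bg-\bbE[\bg^{\top}\bM_{\bx,\Omega}\bg]\big|$, where $D_{s,2}$ is the set of $s$-sparse unit vectors and $\bM_{\bx,\Omega}$ encodes the cyclic convolution with $\bx$ restricted to rows indexed by $\Omega$; the relevant $\gamma_2$-functional bounds and tail bound via Talagrand's chaining are dimension-free in $\Omega$, depending only on $|\Omega|=m$, $s$, and $n$, not on any randomness in $\Omega$. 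This is the main obstacle, but it is dispatched by inspection of their proof, after which the lemma follows from the single invocation above with the parameter matching described.
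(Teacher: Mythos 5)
Your proposal is correct and takes essentially the same approach as the paper: the paper gives no separate proof of Lemma~\ref{lem:RIP_circulant}, stating it as a direct adaptation of \cite[Theorem~4.1]{krahmer2014suprema}, and your specialization $\eta' = e^{-c(\log^2 s)(\log^2 n)}$ (which collapses the $\max$ in the KMR sample-complexity bound and accounts for the vestigial $\eta$ in the lemma's quantifier) is exactly the intended adaptation. Your observation that KMR's result holds for an arbitrary fixed index set $\Omega$ of cardinality $m$ is also accurate, so no gap remains.
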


Based on the RIP, we have the Johnson-Lindenstrauss embedding according to the following lemma. 
\begin{lemma}{\em (\hspace{1sp}\cite[Theorem~3.1]{krahmer2011new})}
\label{lem:RIP_JL}
 Fix $\eta >0$ and $\epsilon \in (0,1)$, and consider a finite set $E \subseteq \bbC^n$ of cardinality $|E| = p$. Set $s \ge 40 \log \frac{4p}{\eta}$, and suppose that $\bPhi \in \bbC^{m \times  n}$ satisfies the RIP of order $s$ and level $\delta \le \frac{\epsilon}{4}$. Then, with probablity at least $1-\eta$, we have
 \begin{equation}
  (1-\epsilon) \|\bx\|_2^2 \le \|\bPhi \bD_{\bxi} \bx\|_2^2 \le (1+\epsilon) \|\bx\|_2^2
 \end{equation}
uniformly for all $\bx \in E$, where $\bD_{\bxi} = \mathrm{Diag}(\bxi)$ is a diagonal matrix with respect to a Rademacher sequence $\bxi$.  
\end{lemma}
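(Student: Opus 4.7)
The plan is to express the quantity of interest as a Rademacher quadratic form, apply a Hanson--Wright-type concentration inequality, estimate the resulting Frobenius and operator norms using the order-$s$ RIP, and finally take a union bound over $E$.

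First, I would fix an arbitrary $\bx \in E$ (without loss of generality $\|\bx\|_2 = 1$) and use the identity $\bPhi \bD_{\bxi} \bx = \bPhi \diag(\bx) \bxi$ to write
\begin{equation}
\|\bPhi \bD_{\bxi} \bx\|_2^2 = \bxi^{*} \bM_{\bx} \bxi, \qquad \bM_{\bx} := \diag(\bx)^{*} \bPhi^{*} \bPhi \diag(\bx).
\end{equation}
Since $|\xi_i|^2 = 1$ almost surely, the diagonal of $\bM_{\bx}$ contributes a deterministic term, so $\bxi^{*} \bM_{\bx} \bxi = \tr(\bM_{\bx}) + \bxi^{*} \widetilde{\bM}_{\bx} \bxi$ with $\widetilde{\bM}_{\bx}$ the off-diagonal part. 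The trace satisfies $\tr(\bM_{\bx}) = \sum_i |x_i|^2 \|\bPhi \be_i\|_2^2$, and the RIP of order $1 \le s$ yields $\|\bPhi \be_i\|_2^2 \in [1-\delta, 1+\delta]$, so $|\tr(\bM_{\bx}) - \|\bx\|_2^2| \le \delta \le \epsilon/4$; it remains to control the fluctuation $\bxi^{*} \widetilde{\bM}_{\bx} \bxi$.

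Second, I would invoke a Hanson--Wright-type bound for Rademacher chaos: for any self-adjoint $\bB$ with zero diagonal,
\begin{equation}
\bbP\!\left(|\bxi^{*} \bB \bxi| > u\right) \le 2 \exp\!\left(-c \min\!\left(\frac{u^2}{\|\bB\|_F^2}, \, \frac{u}{\|\bB\|_{2\to 2}}\right)\right).
\end{equation}
Applied to $\bB = \widetilde{\bM}_{\bx}$, this reduces the entire problem to producing bounds of the form $\|\widetilde{\bM}_{\bx}\|_F \le C_1 \delta$ and $\|\widetilde{\bM}_{\bx}\|_{2\to 2} \le C_2 \delta$ from the RIP alone. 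This is the \emph{main technical obstacle} and the step that genuinely uses the order-$s$ (rather than order-$1$) hypothesis. My approach would be the standard level-set/block-decomposition argument: sort the coordinates by decreasing $|x_i|$ and partition $[n]$ into consecutive blocks $T_1, T_2, \ldots$ each of size $s$. By the RIP together with a polarization identity, for any two disjoint index sets of total size at most $s$, the corresponding sub-block of $\bPhi^{*}\bPhi - \bI$ has operator norm at most $\delta$. Combining this block bound with the key sorting inequality $\|\bx_{T_{k+1}}\|_\infty \le \|\bx_{T_k}\|_2 / \sqrt{s}$ and summing the contributions via Cauchy--Schwarz delivers the claimed estimates on $\|\widetilde{\bM}_{\bx}\|_F$ and $\|\widetilde{\bM}_{\bx}\|_{2\to 2}$.

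Finally, I would set $u = (\epsilon/2) \|\bx\|_2^2$ in the chaos inequality. Using $\delta \le \epsilon/4$ together with the norm bounds just derived, the exponent in the tail bound becomes at least $c' s$, producing a single-vector failure probability of at most $2 \exp(-c' s)$. A union bound over the $p$ elements of $E$ then gives a total failure probability at most $2 p \exp(-c' s) \le \eta$, where the final inequality holds precisely under the hypothesis $s \ge 40 \log(4p/\eta)$, concluding the proof. The whole argument is routine once the RIP-based $\delta$-level bounds on $\|\widetilde{\bM}_{\bx}\|_F$ and $\|\widetilde{\bM}_{\bx}\|_{2\to 2}$ are available; those norm estimates are the crux.
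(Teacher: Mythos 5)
You are comparing against a lemma the paper does not prove at all: it is quoted directly from Krahmer and Ward \cite[Theorem~3.1]{krahmer2011new}, so the benchmark is the original argument, and your skeleton (rewrite $\|\bPhi\bD_{\bxi}\bx\|_2^2$ as a Rademacher quadratic form, split off a deterministic part, bound the chaos via a Hanson--Wright-type inequality with Frobenius/operator norms controlled through a sorted block decomposition, then union bound over $E$) is indeed the same skeleton. However, there is a genuine quantitative gap at precisely the step you identify as the crux. With your claimed estimates $\|\widetilde{\bM}_{\bx}\|_F \le C_1\delta$ and $\|\widetilde{\bM}_{\bx}\|_{2\to 2}\le C_2\delta$, and $u = (\epsilon/2)\|\bx\|_2^2$ with $\delta \le \epsilon/4$, the chaos exponent $c\min\big(u^2/\|\widetilde{\bM}_{\bx}\|_F^2,\, u/\|\widetilde{\bM}_{\bx}\|_{2\to2}\big)$ is $O(1)$ --- a constant independent of $s$ --- not the $c's$ you assert. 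The union bound then gives failure probability $2p\,e^{-O(1)}$, which cannot be beaten down by the hypothesis $s \ge 40\log(4p/\eta)$. To get a per-vector failure probability $e^{-\Omega(s)}$ you need the stronger bounds $\|B\|_F = O(\delta/\sqrt{s})$ and $\|B\|_{2\to2} = O(\delta/s)$, which is what the sorted block decomposition actually yields: the $(j,k)$ block satisfies $\|\bD_{\bx_{T_j}}\bPhi_{T_j}^*\bPhi_{T_k}\bD_{\bx_{T_k}}\|_F \lesssim \delta\,\|\bx_{T_j}\|_\infty\|\bx_{T_k}\|_2 \le \delta\, s^{-1/2}\|\bx_{T_{j-1}}\|_2\|\bx_{T_k}\|_2$, and summing squares over blocks gives the $\delta/\sqrt{s}$ Frobenius bound (similarly $\delta/s$ for the operator norm via the rank-one matrix of block norms).

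Moreover, those stronger bounds are actually \emph{false} for your choice of $\widetilde{\bM}_{\bx}$, which keeps \emph{all} off-diagonal entries in the chaos: take $\bx = (\be_1+\be_2)/\sqrt{2}$, so that $\widetilde{\bM}_{\bx}$ reduces essentially to the single entry $x_1 \bar{x}_2 (\bPhi\be_1)^*(\bPhi\be_2)$, whose magnitude can be of order $\delta$ with no $s^{-1/2}$ gain --- the sorting inequality $\|\bx_{T_{k+1}}\|_\infty \le \|\bx_{T_k}\|_2/\sqrt{s}$ only helps \emph{across} distinct blocks, not inside the top block. (For this $\bx$ the chaos happens to be at most $\delta$ deterministically, which points to the repair.) The fix, as in Krahmer--Ward, is to keep the entire diagonal blocks out of the chaos: for each block $T_j$ of $s$ sorted coordinates, the vector $\bD_{\bxi}\bx$ restricted to $T_j$ is $s$-sparse with norm $\|\bx_{T_j}\|_2$ for \emph{every} sign pattern, so the order-$s$ RIP gives $\|\bPhi (\bD_{\bxi}\bx)_{T_j}\|_2^2 \in (1\pm\delta)\|\bx_{T_j}\|_2^2$ deterministically, and only the cross-block terms enter the Hanson--Wright bound, where the sorted-block estimates deliver $\delta/\sqrt{s}$ and $\delta/s$. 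With that corrected decomposition your final step (setting $u \sim \epsilon$, exponent $\Omega(s)$, union bound under $s\ge 40\log\frac{4p}{\eta}$) closes as intended, and the argument coincides with the cited original.
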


Combining Lemmas~\ref{lem:RIP_circulant} and~\ref{lem:RIP_JL} with setting $s = \Omega(\log p)$ and $\eta = O\big(\frac{1}{p}\big)$, we derive the following corollary. 
\begin{corollary}\label{coro:RIP_JL}
 Fix $\epsilon \in (0,1)$, and consider a finite set $E \subseteq \bbC^n$ of cardinality $|E| = p$ satisfying $n =\Omega(\log p)$. Suppose that 
 \begin{equation}
  m = \Omega\left(\frac{\log p}{\epsilon^2} \left(\log^2 (\log p)\right) \left(\log^2 n\right)\right).
 \end{equation}
 Then, with probablity $1- e^{- \Omega(\log p)}- e^{-\Omega((\log^2 (\log p)) (\log^2 n))}$, we have
 \begin{equation}
  (1-\epsilon) \|\bx\|_2^2 \le \left\|\frac{1}{\sqrt{m}} \bA \bx\right\|_2^2 \le (1+\epsilon) \|\bx\|_2^2
 \end{equation}
 for all $\bx \in E$, where we recall that $\bA = \bR_{\Omega} \bC_{\bg}\bD_{\bxi} = \tilde{\bA}\bD_{\bxi}$ ({\em cf.}~\eqref{eq:partial_circ_sign}). 
\end{corollary}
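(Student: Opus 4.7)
\textbf{Proof proposal for Corollary~\ref{coro:RIP_JL}.} The plan is to directly combine Lemma~\ref{lem:RIP_circulant} and Lemma~\ref{lem:RIP_JL} with carefully chosen parameters, since the corollary is essentially a specialization obtained by setting $s = \Theta(\log p)$ and $\eta = \Theta(1/p)$. The overall strategy is: first invoke Lemma~\ref{lem:RIP_circulant} to show that $\frac{1}{\sqrt{m}}\tilde{\bA}$ satisfies the $(s,\delta)$-RIP with $\delta = \epsilon/4$ for a suitable order $s$, and then apply Lemma~\ref{lem:RIP_JL} to upgrade this RIP into a Johnson--Lindenstrauss-type embedding guarantee for the finite set $E$ after multiplication by $\bD_{\bxi}$.

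More specifically, I would set $\eta = 1/p$ in Lemma~\ref{lem:RIP_JL}, which forces the choice $s \ge 40 \log(4p/\eta) = 40 \log(4p^2) = \Theta(\log p)$; thus take $s = \lceil C \log p \rceil$ for an appropriate absolute constant $C$. The condition $n = \Omega(\log p)$ in the hypothesis ensures $s \le n$, so that $s$-sparsity is meaningful and Lemma~\ref{lem:RIP_circulant} is applicable. Plugging this choice of $s$ together with $\delta = \epsilon/4$ into the sample complexity condition~\eqref{eq:sample_complexity_RIP} of Lemma~\ref{lem:RIP_circulant} yields
\begin{equation}
m = \Omega\left(\frac{\log p}{\epsilon^2}\bigl(\log^2(\log p)\bigr)\bigl(\log^2 n\bigr)\right),
\end{equation}
which matches exactly the sample complexity stated in the corollary. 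Under this condition, Lemma~\ref{lem:RIP_circulant} gives that $\frac{1}{\sqrt{m}}\tilde{\bA}$ has restricted isometry constant $\delta_s \le \epsilon/4$ with probability at least $1 - e^{-\Omega((\log^2 \log p)(\log^2 n))}$.

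Conditioning on this RIP event and applying Lemma~\ref{lem:RIP_JL} with $\bPhi = \frac{1}{\sqrt{m}}\tilde{\bA}$, the resulting $(1\pm\epsilon)$ norm preservation for $\frac{1}{\sqrt{m}}\tilde{\bA}\bD_{\bxi}\bx = \frac{1}{\sqrt{m}}\bA\bx$ holds uniformly over $\bx \in E$ with conditional probability at least $1 - \eta = 1 - 1/p = 1 - e^{-\Omega(\log p)}$, where the randomness is now over the independent Rademacher sequence $\bxi$. Combining the two failure probabilities by a union bound yields the claimed overall success probability $1 - e^{-\Omega(\log p)} - e^{-\Omega((\log^2 \log p)(\log^2 n))}$. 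The argument is essentially bookkeeping once the two lemmas are in hand, so there is no real obstacle; the only point requiring a moment of care is verifying that $s \le n$ and that $s = \Theta(\log p)$ simultaneously satisfies $s \ge 40\log(4p^2)$ and keeps the $\log^2 s$ factor in~\eqref{eq:sample_complexity_RIP} at $\log^2(\log p)$, both of which are immediate from the hypothesis $n = \Omega(\log p)$.
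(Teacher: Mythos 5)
Your proposal is correct and follows exactly the paper's own (one-line) derivation: instantiate Lemma~\ref{lem:RIP_JL} with $\eta = 1/p$ so that $s = \Theta(\log p)$, verify $s \le n$ via $n = \Omega(\log p)$, feed $s$ and $\delta = \epsilon/4$ into the sample complexity of Lemma~\ref{lem:RIP_circulant}, and union-bound the two failure probabilities. Nothing is missing.
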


Note that from Lemma~\ref{lem:large_dev_Gaussian} and a union bound over $[n]$, and by Lemma~\ref{lem:riesz_thorin}, we have that for any $t>0$, with probability $e^{-\Omega(t)}$, $\big\|\frac{1}{\sqrt{m}}\bA\big\|_{2\to 2} \le \frac{n\sqrt{t+\log n}}{\sqrt{m}}$. Based on this bound for the spectral norm of $\frac{1}{\sqrt{m}}\bA$ and Corollary~\ref{coro:RIP_JL}, and by again using a well-established chaining argument (e.g., see~\cite[Lemma~4.1]{bora2017compressed},~\cite[Section~3.4]{daras2021intermediate}), we obtain the following lemma that guarantees the two-sided Set-Restricted
Eigenvalue Condition (S-REC)~\cite{bora2017compressed} for partial Gaussian circulant matrices. This lemma will only be used to derive an upper bound for the term corresponding to adversarial noise. 
\begin{lemma}\label{lem:S-REC_circulant}
 For any $\delta >0$ satisfying $Lr =\Omega(\delta n)$
 and $n = \Omega\big(k \log\frac{Lr}{\delta}\big)$, and any $\alpha \in (0,1)$, if
 \begin{equation}
  m = \Omega\left(\frac{k \log\frac{Lr}{\delta} }{\alpha^2} \left(\log^2 \left(k \log\frac{Lr}{\delta}\right)\right) \left(\log^2 n\right)\right),
 \end{equation}
then with probability $1- e^{- \Omega(k \log \frac{Lr}{\delta})}- e^{-\Omega((\log^2 (k\log \frac{Lr}{\delta})) (\log^2 n))}$, it holds that
\begin{align}
 (1-\alpha)\|\bx_1 -\bx_2\|_2 -\delta &\le \left\|\frac{1}{\sqrt{m}}\bA (\bx_1-\bx_2)\right\|_2 \le (1+\alpha)\|\bx_1 -\bx_2\|_2 +\delta,
\end{align}
for all $\bx_1,\bx_2 \in G(B_2^k(r))$.
 \end{lemma}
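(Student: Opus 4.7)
The plan is to combine a standard covering argument with the Johnson--Lindenstrauss embedding from Corollary~\ref{coro:RIP_JL}, using the spectral-norm tail bound $\|\tfrac{1}{\sqrt m}\bA\|_{2\to 2}\le n\sqrt{t+\log n}/\sqrt m$ recalled immediately before the lemma to control the residual away from the net. This mirrors the chaining argument for i.i.d.\ Gaussian measurements in~\cite{bora2017compressed}, but here a single scale of refinement suffices because the conclusion allows an additive $\delta$ on each side.

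First, I would fix a ``fine'' scale $\delta'=c\delta\sqrt m/(n\sqrt{t+\log n})$ for a small constant $c>0$ with $t=\Theta(k\log(Lr/\delta))$, and build a $(\delta'/L)$-net of $B_2^k(r)$ of cardinality at most $(3Lr/\delta')^k$. Since $G$ is $L$-Lipschitz, its image $N\subseteq G(B_2^k(r))$ is a $\delta'$-net of the range. Under the hypotheses $Lr=\Omega(\delta n)$ and $n=\Omega(k\log(Lr/\delta))$, the factors $n/\sqrt m$ and $\sqrt{t+\log n}$ inside $\log(Lr/\delta')$ are dominated by $Lr/\delta$ up to a constant, giving $\log|N|=O(k\log(Lr/\delta))$.

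Next, I would apply Corollary~\ref{coro:RIP_JL} to the finite difference set $E=\{\bx-\by:\bx,\by\in N\}$, which has $|E|\le|N|^2$ and $\log|E|=O(k\log(Lr/\delta))$, with target accuracy $\epsilon=\alpha/3$. The resulting sample-complexity condition matches the one in the statement, and with the stated probability it yields
\[(1-\alpha/3)\|\bz\|_2 \le \bigl\|\tfrac{1}{\sqrt m}\bA\bz\bigr\|_2 \le (1+\alpha/3)\|\bz\|_2\quad\text{for every }\bz\in E,\]
after passing from the $\|\cdot\|_2^2$ form of the corollary to $\|\cdot\|_2$ by $\sqrt{1\pm\epsilon}\le 1\pm\epsilon$. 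Intersecting with the spectral-norm event adds an $e^{-\Omega(t)}=e^{-\Omega(k\log(Lr/\delta))}$ failure term that is already present in the stated probability.

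Finally, for any $\bx_1,\bx_2\in G(B_2^k(r))$, I would pick nearest net points $\bar\bx_1,\bar\bx_2\in N$ with $\|\bx_i-\bar\bx_i\|_2\le\delta'$ and decompose
\[\tfrac{1}{\sqrt m}\bA(\bx_1-\bx_2)=\tfrac{1}{\sqrt m}\bA(\bar\bx_1-\bar\bx_2)+\tfrac{1}{\sqrt m}\bA(\bx_1-\bar\bx_1)-\tfrac{1}{\sqrt m}\bA(\bx_2-\bar\bx_2).\]
The first term is controlled by the JL embedding just established (together with $\|\bar\bx_1-\bar\bx_2\|_2\le\|\bx_1-\bx_2\|_2+2\delta'$), while the remaining two are each at most $\|\tfrac{1}{\sqrt m}\bA\|_{2\to 2}\cdot\delta'=O(\delta)$ by our choice of $\delta'$. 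The triangle inequality then yields both sides of the claimed S-REC after absorbing constants into $\delta$ (and noting $\delta'\ll\delta$, so $(1+\alpha/3)\cdot 2\delta'$ is negligible relative to the residual $O(\delta)$ term). The main obstacle is the bookkeeping: choosing $\delta'$ small enough that the crude spectral-norm bound contributes only $O(\delta)$ on the image side, yet large enough that $\log(Lr/\delta')$ remains within a constant of $\log(Lr/\delta)$---this is precisely where $Lr=\Omega(\delta n)$ is invoked, and why no finer multi-scale chaining is required.
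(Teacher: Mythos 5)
Your proposal is correct and follows essentially the same route as the paper, which itself only sketches this lemma by pointing to the spectral-norm bound $\|\tfrac{1}{\sqrt m}\bA\|_{2\to 2}\le n\sqrt{t+\log n}/\sqrt m$, Corollary~\ref{coro:RIP_JL}, and the chaining argument of~\cite[Lemma~4.1]{bora2017compressed}; you have simply filled in that pointer with a net on $B_2^k(r)$, the JL embedding on the difference set, and the operator-norm control of the off-net residual. Your observation that a single scale suffices here (because the additive $\delta$ slack and the assumption $Lr=\Omega(\delta n)$ keep $\log(Lr/\delta')$ within a constant of $\log(Lr/\delta)$ despite the crude $\mathrm{poly}(n)$ operator-norm bound) is a valid and slightly cleaner instantiation of the cited multi-scale argument.
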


We now provide the proof of Theorem~\ref{thm:corr_circulant}.
\begin{proof}[Proof of Theorem~\ref{thm:corr_circulant}]
 Because $\hat{\bx}$ is a solution to~\eqref{eq:opt_corr} and $\bx^* \in \calK:= G(B_2^k(r))$, we have 
\begin{equation}
 \frac{1}{m} \tilde{\bb}^T (\bA \hat{\bx}) \ge \frac{1}{m} \tilde{\bb}^T (\bA \bx^*).
\end{equation}
Recall that $\tilde{\bA} = \bR_\Omega \bC_\bg$, and $\tilde{\bx}^* = \bD_{\bxi} \bx^*$ ({\em cf.} Section~\ref{sec:main_res}). We have $\bA =\bR_\Omega \bC_\bg \bD_{\bxi} =  \tilde{\bA}\bD_{\bxi}$ and
\begin{equation}
 \frac{1}{m} \tilde{\bb}^T (\tilde{\bA} \bD_{\bxi}\hat{\bx}) \ge \frac{1}{m} \tilde{\bb}^T (\tilde{\bA} \tilde{\bx}^*),
\end{equation}
which can equivalently be expressed as
\begin{equation}
 \left\langle \frac{1}{m} \tilde{\bA}^T\tilde{\bb} - \lambda \tilde{\bx}^*, \bD_{\bxi}\hat{\bx} - \tilde{\bx}^*\right \rangle + \langle \lambda \tilde{\bx}^*, \bD_{\bxi}\hat{\bx} - \tilde{\bx}^*\rangle \ge 0.
\end{equation}
By the assumption that $G(B_2^k(r)) \subseteq \calS^{n-1}$, we obtain that $\|\bD_{\bxi}\hat{\bx}\|_2 = \|\hat{\bx}\|_2 = 1$, and
\begin{equation}
 \langle \lambda \tilde{\bx}^*,  \tilde{\bx}^* - \bD_{\bxi}\hat{\bx}\rangle = \lambda \langle \bx^*, \bx^* - \hat{\bx} \rangle = \frac{\lambda}{2}\|\bx^* - \hat{\bx}\|_2^2.
\end{equation}
Therefore, we obtain
\begin{align}
 \frac{\lambda}{2}\|\bx^* - \hat{\bx}\|_2^2  &\le \left|\left\langle \frac{1}{m} \tilde{\bA}^T(\tilde{\bb}-\bb), \bD_{\bxi}\hat{\bx} - \tilde{\bx}^*\right \rangle\right| + \left|\left\langle \frac{1}{m} \tilde{\bA}^T\bb - \lambda \tilde{\bx}^*, \bD_{\bxi}\hat{\bx} - \tilde{\bx}^*\right \rangle\right| \\
 & \le \left\|\frac{1}{\sqrt{m}}(\tilde{\bb}-\bb)\right\|_2 \cdot\left\|\frac{1}{\sqrt{m}} \bA (\hat{\bx}-\bx^*)\right\|_2 + \left|\left\langle \frac{1}{m} \tilde{\bA}^T\bb - \lambda \tilde{\bx}^*, \bD_{\bxi}\hat{\bx} - \tilde{\bx}^*\right \rangle\right|.\label{eq:thm1_eq1}
\end{align}
Setting $\alpha = \frac{1}{2}$ in Lemma~\ref{lem:S-REC_circulant}, we have that if $m = \Omega(k \log\frac{Lr}{\delta}  (\log^2 (k \log\frac{Lr}{\delta})) (\log^2 n))$, with probability at least $1-e^{-\Omega(k\log \frac{Lr}{\delta})} -e^{-\Omega((\log^2 (k\log \frac{Lr}{\delta})) (\log^2 n))}$,
\begin{equation}
 \left\|\frac{1}{\sqrt{m}} \bA (\hat{\bx}-\bx^*)\right\|_2 \le O(\|\hat{\bx}-\bx^*\|_2 + \delta).\label{eq:thm1_eq2}
\end{equation}
Moreover, by Lemma~\ref{lem:rhs_bd}, with probability $1-e^{-\Omega(k\log \frac{Lr}{\delta})} -e^{-\Omega(\sqrt{m})}$, we have
\begin{align}
 & \left|\left\langle \frac{1}{m} \tilde{\bA}^T\bb - \lambda \tilde{\bx}^*, \bD_{\bxi}\hat{\bx} - \tilde{\bx}^*\right \rangle\right| \le O\left(\sqrt{\frac{k \log\frac{Lr}{\delta}}{m}}\right) (\|\hat{\bx}-\bx^*\|_2 +\delta).\label{eq:thm1_eq3}
\end{align}
Recall that we assume $\frac{1}{\sqrt{m}}\|\tilde{\bb}-\bb\|_2 \le \varsigma$. Combining~\eqref{eq:thm1_eq1},~\eqref{eq:thm1_eq2} and~\eqref{eq:thm1_eq3}, we obtain with probability at least $1-e^{-\Omega(k\log \frac{Lr}{\delta})} -e^{-\Omega((\log^2 (k\log \frac{Lr}{\delta})) (\log^2 n))} - e^{-\Omega(\sqrt{m})}$ that
\begin{equation}\label{eq:two_terms_larger}
 \frac{\lambda}{2}\|\bx^* - \hat{\bx}\|_2^2 \le O\left(\sqrt{\frac{k \log\frac{Lr}{\delta}}{m}} + \varsigma\right) (\|\hat{\bx}-\bx^*\|_2 +\delta).
\end{equation}
Then, by considering both possible cases of which of the two terms in \eqref{eq:two_terms_larger} is larger, we find that the desired result holds when $\delta = O\big(\frac{\sqrt{k\log\frac{Lr}{\delta}}}{\lambda\sqrt{m}}+\frac{\varsigma}{\lambda}\big)$.
\end{proof}

\section{Numerical Experiments}
\label{sec:exp}

We empirically evaluate the correlation-based decoder on the MNIST~\cite{lecun1998gradient} and celebA~\cite{liu2015deep} datasets. The MNIST dataset consists of 60,000 handwritten images of size $ n = 28 \times 28 = 784$. The CelebA dataset is a large-scale face attributes dataset of more than 200,000 images of celebrities. The input images were cropped to a $64 \times 64$ RGB image, yielding a dimensionality of $n = 64 \times 64 \times 3 = 12288$.  
For simplicity, we assume that there is no adversarial noise. We make use of the practical iterative algorithm proposed in~\cite{liu2020sample}:
\begin{equation}
    \bx^{(t+1)}  = \calP_G \left(\bx^{(t)} + \mu \bA^T(\bb - \mathrm{sign}(\bA\bx^{(t)}))\right), \label{eq:pgd}
\end{equation}
where $\calP_G(\cdot)$ is the projection function onto $G(B_2^k(r))$, $\bb$ is the uncorrupted observation vector, $\bx^{(0)} = \mathbf{0}$, and $\mu >0$ is a parameter. As the performance difference among all three noisy 1-bit observation models (random bit flips, and Gaussian or logit noise added before quantization; {\em cf.} Section~\ref{sec:setup}) is mild, we only present the numerical results corresponding to the case of Gaussian noise. 

We compare with the 1-bit Lasso~\cite{plan2013one}, and BIHT~\cite{jacques2013robust}, which are sparsity-based algorithms. The sensing matrix $\mathbf{A}$ is either a randomly signed partial Gaussian circulant matrix or a standard Gaussian matrix. On the MNIST dataset, a generative model based on variational auto-encoder (VAE) is used and thus our method is denoted by \texttt{1b-VAE-C} or \texttt{1b-VAE-G}. where ``C'' refers to the partial Gaussian circulant matrix and ``G'' refers to the standard Gaussian matrix. 
The latent dimension of the VAE is $k = 20$. On the celebA dataset, a generative model based on Deep Convolutional Generative Adversarial Networks (DCGAN) is used and thus our method is denoted by \texttt{1b-DCGAN-C} or \texttt{1b-DCGAN-G}, respectively. The latent dimension of the DCGAN is $k = 100$.

We follow \cite{liu2020sample} to perform projected gradient descent (PGD) to maximize a constrained correlation-based objective function. The number of total iterations is $300$ with $\mu=0.2$. We use Adam optimizer to perform the projection with adaptive number of steps and learning rates based on the changes of $\bx^{(t)}$. When $\bx^{(t)}$ changes dramatically in the beginning phase of the optimization, we use a large number of Adam optimization steps with a large learning rate. When $\bx^{(t)}$ has minor change in the later phase of optimization, we use a small number of Adam optimization steps with a small learning rate. Specifically, in the first $20$ iterations, the number of Adam optimization steps is $30$ with learning rate $0.3$. From the $20$-th iteration to $100$-th iteration, the number of Adam optimization steps is $10$ with learning rate $0.2$. From the $100$-th iteration to $300$-th iteration, the number of Adam optimization steps is $5$ with learning rate $0.1$. The generative models we use in our experiments have the same architectures as those of \cite{bora2017compressed}. The experiments are done on test images that were not used when training the generative models. To reduce the impact of local minima, we choose the best estimation among $5$ random restarts. On the MNIST dataset, the standard deviation of the Gaussian noise $\sigma$ is chosen as $0.1$. On the celebA dataset, the standard deviation $\sigma$ is chosen as $0.01$. 

We observe from Figures~\ref{fig:recon_mnist},~\ref{fig:recon_celebA}, and~\ref{fig:recon_error} that on both datasets, the sparsity-based methods 1-bit Lasso and BIHT attain poor reconstructions, while the generative model based method~\eqref{eq:pgd} attains mostly accurate reconstructions when the number of measurements is small. From Figures~\ref{fig:recon_mnist} and~\ref{fig:recon_error}-a, we observe that for the MNIST dataset, the reconstruction performance of using a partial Gaussian circulant matrix and a standard Gaussian matrix are almost the same. For the celebA dataset, according to Figures~\ref{fig:recon_celebA} and~\ref{fig:recon_error}-b, the reconstruction performances are again very similar.

 \begin{figure}[!tbp]
 \centering
\includegraphics[width = 1.0\columnwidth]{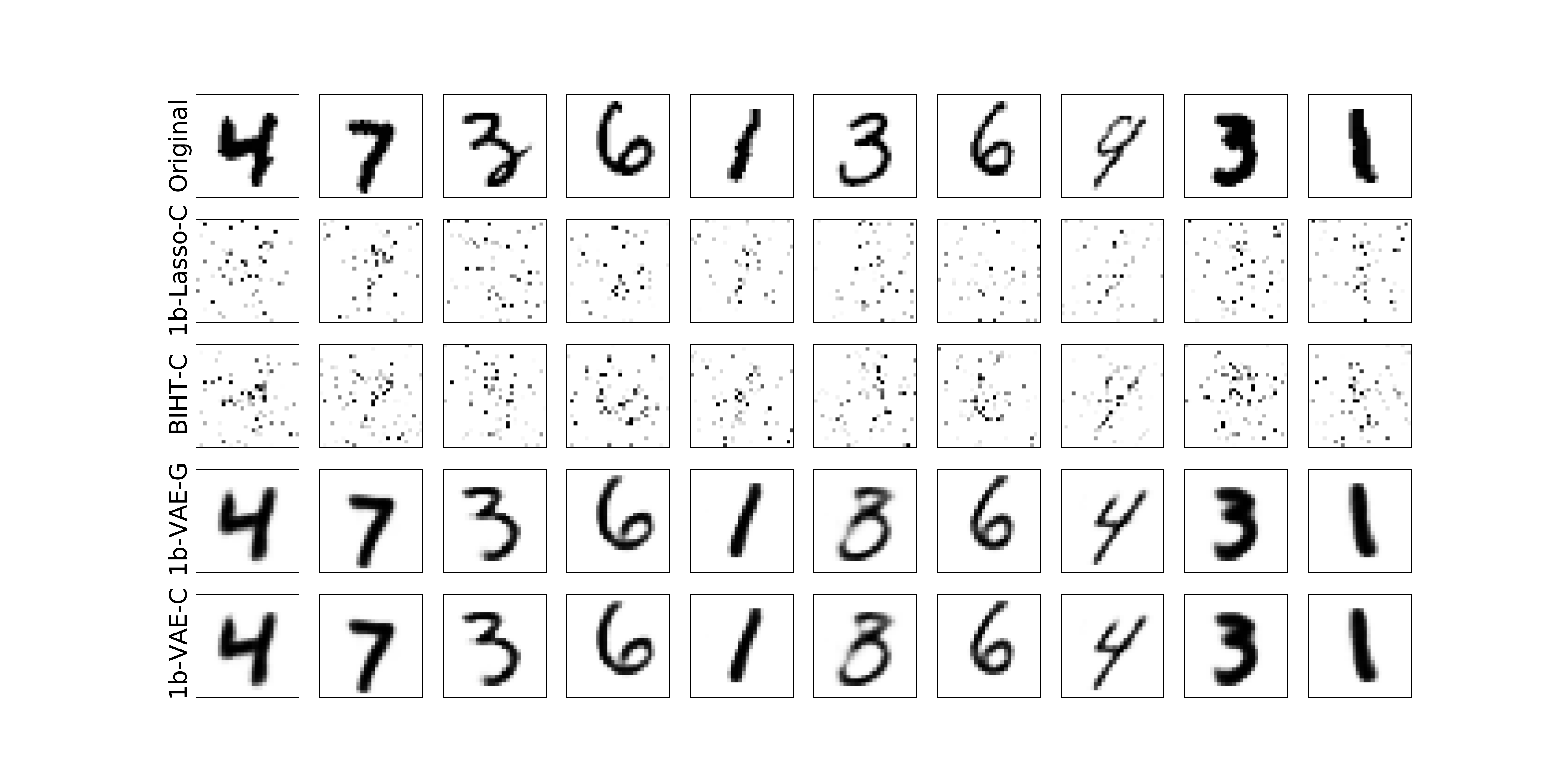}
\caption{Reconstruction results on MNIST with $200$ measurements.\label{fig:recon_mnist}}
\vspace*{-2ex}
\end{figure}

\begin{figure}[!tbp]
\centering
\includegraphics[width = 1.0\columnwidth]{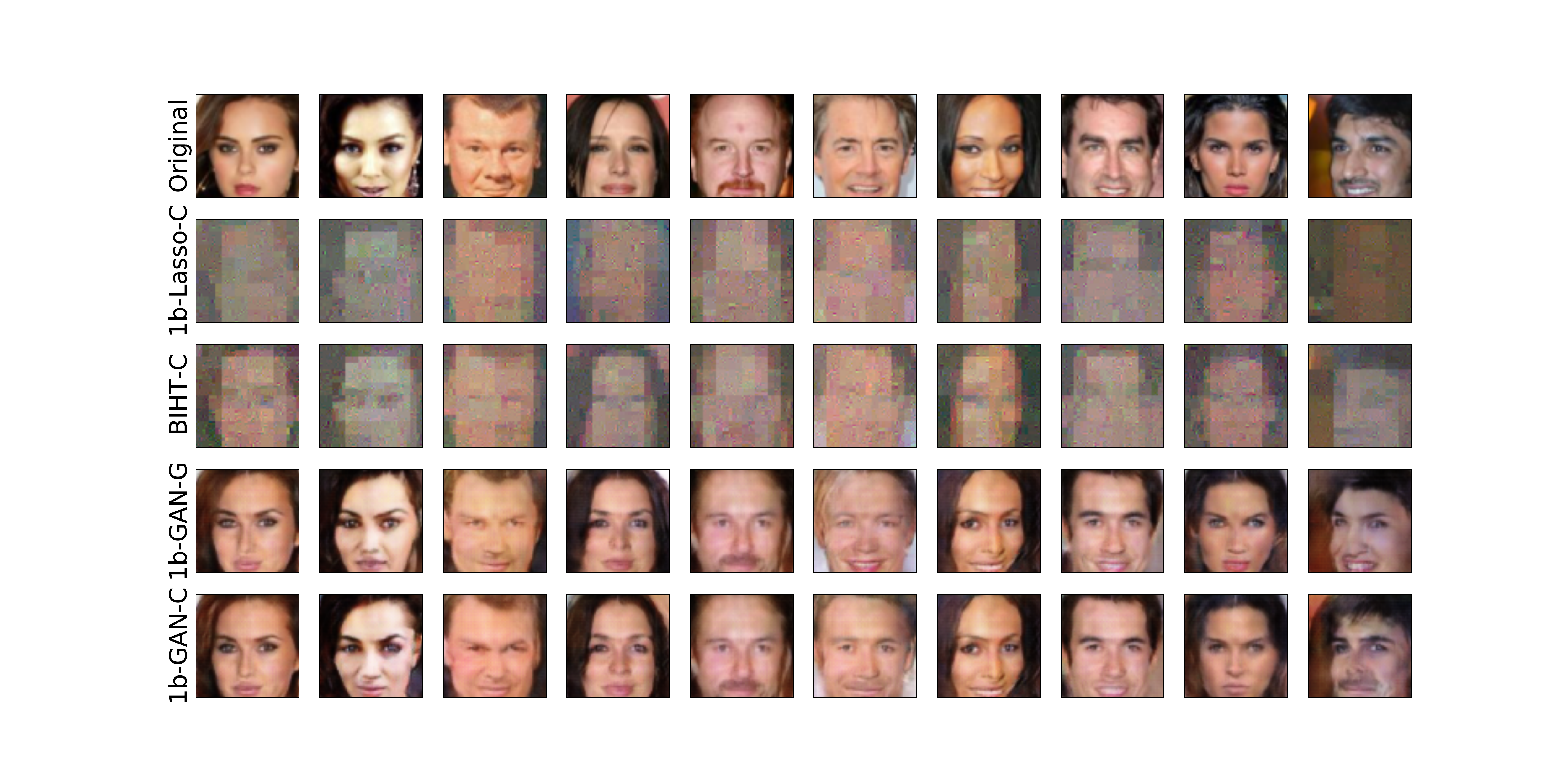}
\caption{Reconstruction results on celebA with $1000$ measurements.\label{fig:recon_celebA}}
\end{figure}

\begin{figure}[!tbp]
\centering
\begin{tabular}{cc}
\includegraphics[width=.5\columnwidth]{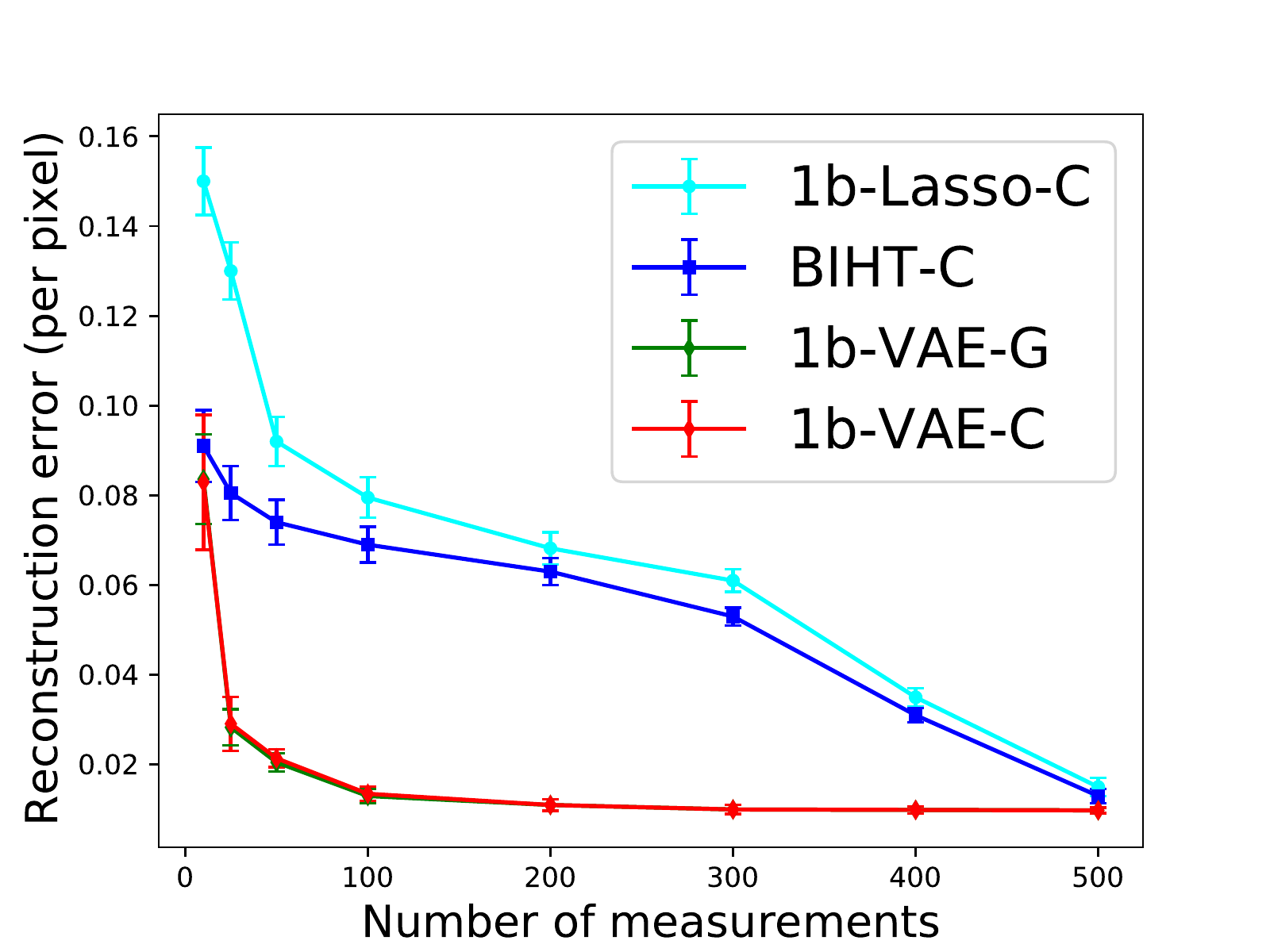} & \hspace{-1.cm}
\includegraphics[width=.5\columnwidth]{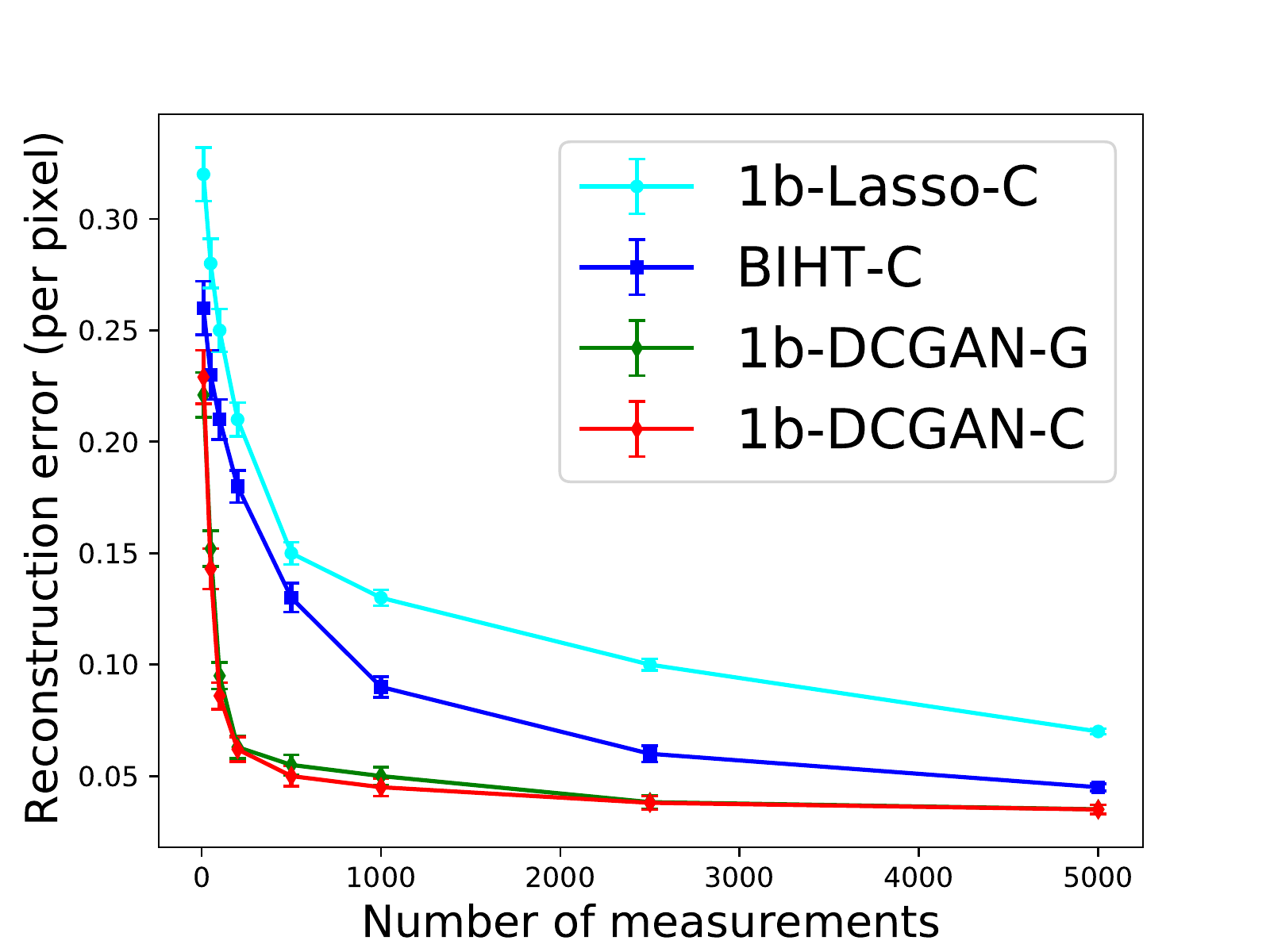} \\
{\small (a) Results on MNIST } & {\small (b) Results on celebA }
\end{tabular}
\caption{The vertical bars indicate 95\% confidence intervals. The reconstruction error is calculated over $1000$ images by averaging the per-pixel error in terms of $\ell_2$ norm.\label{fig:recon_error}}
\end{figure}

We also compare the running time of each iteration of~\eqref{eq:pgd} for different sensing matrices (standard Gaussian or partial Gaussian circulant). When $\bA$ is a standard Gaussian matrix, the time complexity of calculating $\bA \bx^{(t)}$ and $\bA^T \bb$ is $O(mn)$. When $\bA$ is a partial Gaussian circulant matrix, we can use FFT to accelerate the  matrix-vector multiplications, and the time complexity is reduced to $O(n\log n)$.  Since the dimensionality of the image vector is relatively small,  
and the projection operator $\calP_G(\cdot)$ is also time consuming, in our experiments, the speedup of using a partial Gaussian circulant matrix is not as significant as that in~\cite[Table 2]{yu2017binary}. However, from Table~\ref{table:running_time}, we can still observe that using a partial Gaussian circulant matrix is typically about $2$ times faster than using a standard Gaussian matrix, and for the celebA dataset with ambient dimension $n = 12288$, when $m = 5000$, using partial Gaussian circulant matrices leads to around $4$ times speedup compared with using standard Gaussian matrices. We expect a greater speedup for settings with higher dimensionality.
\begin{table}[t]
\begin{center}
\begin{tabular}{|c|c|c|c|c|} 
 \hline
  & \multicolumn{2}{|c|}{MNIST ($n=784$)} & \multicolumn{2}{|c|}{celebA ($n=12288$)}   \\ 
   \cline{2-5} & m=200 & m=700 & m=200 & m=5000 \\ \hline 
 Gaussian   & 51.4 & 113.6 & 704.8 & 5632.6 \\ \hline
Circulant  & {\bf 33.6} & {\bf 57.8} & {\bf 378.3} & {\bf 1367.5} \\  \hline
\end{tabular}
\caption{\label{time} Average time comparison (ms) of different sensing matrices per iteration in~\eqref{eq:pgd}. We compare the running time in a single 2.9GHz CPU core. GPU can also be applied to further speed up the computation. ~\label{table:running_time}}
\end{center}
\end{table}

\section{Conclusion and Future Work}

We have established a sample complexity upper bound for noisy 1-bit compressed sensing for the case of using a randomly signed partial Gaussian circulant sensing matrix and a generative prior. The sample complexity upper bound matches that obtained previously for i.i.d.~Gaussian sensing matrices, and our scheme is robust to adversarial noise. 

We focused only on non-uniform recovery guarantees, and uniform recovery is an immediate direction for further research.  Relaxing the $\ell_{\infty}$ assumption and allowing for general scalings of $k$ (not only $k \ll n^{1/4}$) would also be of significant interest.  Finally, handling Fourier measurements without column sign flips would be of significant interest for applications where the measurement matrix is constrained as such.

\appendices

\section{Proof of Lemma~\ref{lem:gamma_ortho}}\label{sec:proof_gamma_ortho}

For any $\varepsilon >0$, from Lemma~\ref{lem:almost_ortho} and the union bound, with probability at least $1-m^2 e^{-\frac{\varepsilon^2}{8\rho^2}}$, it holds that
 \begin{equation}
  |\langle \bs_i,\bs_j \rangle| \le  \varepsilon
 \end{equation}
for all $0\le i \ne j<m$. Setting $m^2 e^{-\frac{\varepsilon^2}{8\rho^2}} = \nu$, we obtain 
\begin{equation}\label{eq:r_infty_bd}
 \varepsilon = 2\sqrt{2}\rho \sqrt{\log \frac{m^2}{\nu}}.
\end{equation}
 Let $\bB = [\bs_0,\bs_1,\ldots,\bs_{m-1}] \in \bbR^{n \times m}$. Then, all the diagonal entries of $\bB^T\bB$ are $1$, and the magnitude of all the off-diagonal entries are upper bounded by $\varepsilon$. By the Gershgorin circle theorem~\cite{van1983matrix}, we have 
\begin{equation}
 \sigma_{m}(\bB^T\bB) \ge 1-(m-1)\varepsilon > 1-m\varepsilon,\label{eq:similarly_used}
\end{equation}
where $\sigma_{m}(\cdot)$ denotes the $m$-th largest singular value of a matrix. For any $i \in \{1,\ldots,m-2\}$, and any unit vector $\bt$ in the span of $\bs_0,\ldots,\bs_{i-1}$, we write $\bt$ as $\bt = \sum_{j<i} \alpha_j \bs_j$. Let $\balpha_i = [\alpha_0,\ldots,\alpha_{i-1}]^T \in \bbR^i$ and $\bB_i = [\bs_0,\bs_1,\ldots,\bs_{i-1}] \in \bbR^{n \times i}$. Similarly to~\eqref{eq:similarly_used}, we have 
\begin{equation}
 \sigma_{i}(\bB_i^T\bB_i) \ge 1-(i-1)\varepsilon > 1-m\varepsilon.
\end{equation}
Therefore, we obtain
\begin{align}
 1 &= \|\bB_i \balpha_i\|_2^2 \ge \sigma_{i}^2(\bB_i) \|\balpha_i\|_2^2 = \sigma_{i}(\bB_i^T\bB_i)\|\balpha_i\|_2^2 \ge (1-m\varepsilon)\|\balpha_i\|_2^2,
\end{align}
and hence
\begin{equation}
 \|\balpha_i\|_2^2 \le \frac{1}{1-m\varepsilon}.
\end{equation}
In addition, 
\begin{align}
 &\langle \bs_i,\bt \rangle^2 = \left(\sum_{j < i} \alpha_j \langle \bs_i, \bs_j \rangle\right)^2 \le \|\balpha_i\|_2^2 \left(\sum_{j < i}\langle \bs_i, \bs_j \rangle^2\right) \\
 &\le \frac{1}{1-m\varepsilon}\cdot i \varepsilon^2 < \frac{m\varepsilon^2}{1-m\varepsilon}. 
\end{align}
By definition, the squared length of the projection of $\bs_i$ onto the span of $\bs_0,\ldots,\bs_{i-1}$ is equal to $\max\{\langle \bt,\bs_i\rangle^2 \,:\, \bt \in \mathrm{span}\{\bs_0,\ldots,\bs_{i-1}\}, \|\bt\|_2=1\}$. Hence, the projection of $\bs_i$ onto the span of $\bs_0,\ldots,\bs_{i-1}$ has squared length at most $\frac{m\varepsilon^2}{1-m\varepsilon}$. Then, if 
\begin{equation}
 2\sqrt{2}m \rho \log \frac{m^2}{\nu} < \frac{1}{2}, 
\end{equation}
recalling that $\varepsilon = 2\sqrt{2}\rho \sqrt{\log \frac{m^2}{\nu}}$, we also have $m \varepsilon < \frac{1}{2}$, and 
\begin{align}
 &\frac{m\varepsilon^2}{1-m\varepsilon} < 2m\varepsilon^2 = 16 m \rho^2 \log \frac{m^2}{\nu} \\
 &=  (2\sqrt{2} \rho) \times (4\sqrt{2} m \rho \log \frac{m^2}{\nu}) \le 2\sqrt{2} \rho  < 4\rho. \label{eq:r_ell2_bd}
\end{align}
We can now establish Lemma~\ref{lem:gamma_ortho} by performing the following procedure on the vectors (essentially Gram-Schmidt orthogonalization):
\begin{enumerate}
 \item Initialize: $\bu_0 = \bs_0$, $\br_0 = \mathbf{0}$. 
 \item For $i = 2,3,\ldots,m-1$, we set $\br_i$ to be the projection of $\bs_i$ onto the span of $\bs_0,\ldots,\bs_{i-1}$, and $\bu_i = \bs_i - \br_i$.
\end{enumerate}
From~\eqref{eq:r_infty_bd} and~\eqref{eq:r_ell2_bd}, we obtain the desired result.

\section{Table of Notation for the Proof of Lemma~\ref{lem:essential_circ}}
\label{sec:table_proof_lemma1}

For ease of reference, the notation used in proving our most technical result, Lemma~\ref{lem:essential_circ}, is shown in Table \ref{table:notations_lemma1}.

\begin{table*}[t]
\begin{center}
\begin{tabular}{|c|c|} 
 \hline
  Notations & Explanations   \\ \hline 
 $\tilde{\bA} = \bR_{\Omega}\bC_{\bg}$   & A partial Gaussian circulant matrix  \\ \hline 
 $\bA = \bR_{\Omega}\bC_{\bg}\bD_{\bxi}$   & A randomly signed partial Gaussian circulant matrix  \\ \hline 
 $i$   & An integer in $[m]$  \\ \hline 
 $j$   & An integer in $[n]$  \\ \hline 
 $\ell$   & An integer in $[m]$  \\ \hline 
 $\theta$ & $\theta(x) = \tau \cdot\sign(x)$ with $\bbP(\tau = -1) = p$ for random flips and $\theta(x) = \sign(x+e)$ with $e \sim\calN(0,\sigma^2)$ for Gaussian noise\\ \hline 
 $\lambda$ & $\lambda = \bbE[g\theta(g)]$ for $g \sim \calN(0,1)$; $\lambda = (1-2p) \sqrt{\frac{2}{\pi}}$ for random flips and $\lambda = \sqrt{\frac{2}{\pi(1+\sigma^2)}}$ for Gaussian noise\\ \hline 
 $b_i = \theta(\langle\ba_i,\bx^*\rangle)$   & The $i$-th observation (no adversarial noise)  \\ \hline 
 $\tilde{\bx}^* = \bD_{\bxi} \bx^*$   & The product of a diagonal Rademacher matrix and the signal vector $\bx^*$ \\ \hline 
 $\bg$   & A standard Gaussian vector in $\bbR^n$ \\ \hline 
 $\tilde{a}_{ij}$ & The $(i,j)$-th entry of $\tilde{\bA}$; $\tilde{a}_{ij} = g_{j-i}$ \\ \hline
 $\rho$   & An upper bound for $\|\bx^*\|_\infty$, $\rho = O\left(\frac{1}{m(t+\log m)}\right)$ \\ \hline 
 $\beta= 2\sqrt{\rho}$   & Used in the $(m,\beta)$-orthogonality condition; $\beta = O\big(\frac{1}{\sqrt{m(t+\log m)}}\big)$  \\ \hline 
 $\bs_i = s_{i \leftarrow}(\tilde{\bx}^*)$   & Shifted version of $\tilde{\bx}^*$ to the left by $i$ positions \\ \hline 
 $\bu_i $   & Orthogonal component from the $(m,\beta)$-orthogonality of $\{\bs_\ell\}_{\ell \in [m]}$ \\ \hline
 $\br_i $   & Residual component from the $(m,\beta)$-orthogonality of $\{\bs_\ell\}_{\ell \in [m]}$; $\max_{i} \|\br_i\|_2 \le \beta$ \\ \hline
 $e_i$   & The $i$-th noise term for the case of Gaussian noise; $e_i \sim \calN(0,\sigma^2)$ \\ \hline 
 $\omega \in \{0,1\}$ & $\omega = 0$ for the case of random flips and $\omega = 1$ for the case of Gaussian noise\\ \hline 
 $I_{\eta}^c$ & The index set for $i \in [m]$ such that $|\langle \bg,\bu_i\rangle + \omega e_i | > |\langle \bg,\br_i \rangle|$ with high probability \\ \hline
 $I_{\eta}$ & $I_{\eta} = [m]\backslash I_\eta^c$\\ \hline
 $\calE$ & High-probability event regarding $\bg$ and $I_{\eta}$; see~\eqref{eq:event_e}\\ \hline
 $Y_{ij}$ & $Y_{ij} = \tilde{a}_{ij} \theta(\langle\bg,\bu_i\rangle) - \lambda\frac{\bu_i (j-i)}{\|\bu_i\|_2}$  \\ \hline
 $Z_1,Z_2,Z_3$ & Notations used to shorten the expressions in~\eqref{eq:begin_Z123}--\eqref{eq:complex_four_terms3} 
 \\ \hline
 $h_i$ & $h_i = \left\langle \bg,\frac{\bu_i}{\|\bu_i\|_2}\right\rangle \sim\calN(0,1)$ 
 \\ \hline
 $\bar{u}_{j,i,\ell}$ & $\bar{u}_{j,i,\ell} = \frac{\bu_\ell(j-i)}{\|\bu_\ell\|_2} = \mathrm{Cov}[g_{j-i},h_\ell]$  
 \\ \hline
  $\bar{u}_{j,i,\ell}$ & $\bar{u}_{j,i,\ell} = \frac{\bu_\ell(j-i)}{\|\bu_\ell\|_2}$  
 \\ \hline
 $w_{j,i}$ & $w_{j,i} = \sqrt{1- \sum_{\ell \in [m]} \bar{u}_{j,i,\ell}^2}$  
 \\ \hline
 $t_{j,i}$ & A standard Gaussian random variable that is independent of $\{h_\ell\}_{\ell \in [m]}$  
 \\ \hline
 $\theta_{i}$ & For random bit flips, $\theta_i = \theta$; for random noise added before quantization, $\theta_i = \sign(x+\frac{e}{\|\bu_i\|_2}) $ 
 \\ \hline
 $\lambda_i$ & $\lambda_i := \bbE[\theta_{i}(g)g]$ for $g \sim \calN(0,1)$ 
 \\ \hline

\end{tabular}
\caption{A table of notation used in the proof of Lemma~\ref{lem:essential_circ}.} 
\label{table:notations_lemma1}
\end{center}
\end{table*}

\bibliographystyle{IEEEtran}
\bibliography{techReports,JS_References}

\end{document}